\newtheorem{theorem}{Theorem}
\newtheorem{lemma}{Lemma}
\newtheorem{assumption}{Assumption}
\newtheorem{remark}{Remark}
\newenvironment{customthm}[1]{\innercustomthm}{\endinnercustomthm}
\newenvironment{customlem}[1]{\innercustomlem}{\endinnercustomlem}
\newenvironment{customasp}[1]{\innercustomasp}{\endinnercustomasp}
\title{Ensuring Safety in an Uncertain Environment: Constrained MDPs via Stochastic Thresholds}
\author{%
Qian Zuo\\
School of Informatics\\
University of Edinburgh\\
\texttt{Q.Zuo-1@sms.ed.ac.uk}
\And
Fengxiang He\\
School of Informatics\\
University of Edinburgh\\
\texttt{F.He@ed.ac.uk}
}
\begin{document}

\maketitle

\begin{abstract}
  This paper studies constrained Markov decision processes (CMDPs) with constraints against stochastic thresholds, aiming at safety of reinforcement learning in unknown and uncertain environments. We leverage a Growing-Window estimator sampling from interactions with the uncertain environment to estimate the thresholds, based on which we design \textit{{S}tochastic {P}essimistic-{O}ptimistic {T}hresholding} (\textit{SPOT}), a novel model-based primal-dual algorithm for multiple constraints against stochastic thresholds. SPOT enables reinforcement learning under both pessimistic and optimistic threshold settings. We prove that our algorithm achieves sublinear regret and constraint violation, i.e., a reward regret of $\tilde{\mathcal{O}}(\sqrt{T})$ while allowing an $\tilde{\mathcal{O}}(\sqrt{T})$ constraint violation over $T$ episodes. The theoretical guarantees show that our algorithm achieves performance comparable to that of an approach relying on fixed and clear thresholds. To the best of our knowledge, SPOT is the first reinforcement learning algorithm that realises theoretical guaranteed performance in an uncertain environment where even thresholds are unknown.
\end{abstract}

\section{Introduction}
Reinforcement learning (RL) has enabled major advances in AI, such as AlphaGo \cite{silver2016mastering}, autonomous vehicle control \cite{kendall2019learning}, and LLM alignment \cite{ouyang2022training}, where an agent is trained to maximise its expected cumulative reward from the environment \cite{Sutton1998}.
Markov decision processes (MDPs) are widely adopted in reinforcement learning, where transitions and rewards depend
only on the current state rather than on the entire history \cite{bellman1957markovian}. In many applications, ensuring safety is of profound importance -- for instance, autonomous vehicles are expected to avoid collisions and adhere to traffic regulations \cite{wen2020safe}. 
This motivates constrained Markov decision processes (CMDPs), introducing safety constraints against thresholds into optimisation of the cumulative reward function \cite{altman1999constrained}. 

A substantial literature studies unknown constraints in safe RL.  Classical CMDP methods, built on linear programming and primal dual updates, achieve (near-)optimal learning with regret guarantees under model uncertainty \cite{zheng2020constrained,efroni2020exploration,wei2018online,qiu2020upper}. To reduce reliance on fully specified constraints, inverse constrained RL (ICRL) infers latent constraints or safety surrogates from demonstrations and can generalize across thresholds \cite{malik2021inverse,liu2024comprehensive}; in parallel, constraint-conditioned approaches explicitly condition policies on a threshold input to enable cross-threshold generalization \cite{yao2023constraint}. Safety-first exploration takes a different route: it does not estimate the threshold value themselves but enforces safety by learning certified safe sets \cite{wachi2020safe} or by using uncertainty to remain below a  specified safety budget \cite{wachi2023safe}. Complementary lines manage budgets and risks through state augmentation \cite{sootla2022saute}, chance constraints with randomized policies \cite{shen2024flipping} and offline adaptation schemes that switch among pretrained policies to satisfy deployment-time cost thresholds \cite{chemingui2025constraint}.

Despite this progress, most safe-RL methods still hinge on exogenous and fixed safety thresholds, rely on high-quality demonstrations, or substitute true limits with conservative surrogates \cite{wachi2024survey}. These choices fail when admissible limits are numerically unknown and fluctuate with noise or operating conditions \cite{yao2023constraint}. Our vision is to move beyond `adaptation around a given threshold' and instead learn the threshold itself as a latent, data-driven quantity from interaction and update it online with calibrated uncertainty. This motivates the following question: 
\begin{quote}
\centering
    \textit{Can we design an online CMDP 
    algorithm that achieves 
    decent theoretical guarantees
    in an environment with unknown safety thresholds?}
\end{quote}
To address this problem, this paper proposes \textit{stochastic pessimistic-optimistic thresholding} (\textit{SPOT}), an online algorithm with theoretical guarantees on both safety and performance when thresholds evolve over time. We consider CMDPs with finite horizon, similar to an array of existing works \cite{bai2020provably,bura2022dope,efroni2020exploration,ghosh2024towards,muller2024truly,stradi2024optimal,zheng2020constrained}. We make \textbf{contributions} as follows:

\begin{itemize}[leftmargin=12pt]
    \item \textbf{Mathematically grounding stochastic thresholds in CMDPs.} This paper first realises and defines a safe reinforcement learning problem in which knowledge of the environment is highly limited that even the thresholds are unknown, which is termed as CMDP-ST (CMDP with stochastic thresholds). We formalise this uncertainty by modelling the threshold as a stochastic process: instead of observing the true episodic safety thresholds ($\alpha_i$), the agent only receives noisy and per-step signals ($\tilde{\alpha}_{i,h}$), where the latent threshold is defined as their expected sum. This grounds the problem of learning to satisfy constraints online without any prior knowledge of the threshold's underlying distribution. 
    \item \textbf{The first approach to safe reinforcement learning agents in such uncertain environment.} SPOT leverages a Growing-Window estimator that, at each step and episode, uses the value of the most visited state-action pair to characterize the global threshold estimate. Based on this estimator,
    SPOT employs a 
    primal-dual scheme to integrate the thresholds into the learnt agent.  SPOT is compatible to two variants that (1) employs pessimistic thresholds to strictly enforce safety, and (2) applies optimistic thresholds to encourage exploration. By systematically navigating the latent `margin' between safety and performance without relying on any prior threshold information, our approach offers a dynamic balance between exploitation and exploration.
    \item \textbf{Theoretical guarantees on the efficiency and safety of SPOT.} We first prove that our threshold estimator is asymptotically consistent with the hidden thresholds with a guaranteed convergence speed (Section \ref{sec:estimations}). We then establish the feasibility and strong duality of this constrained optimisation problem (Section \ref{sec:Strong duality}). We prove that SPOT achieves $\tilde{\mathcal{O}}(\sqrt{T})$ regrets, for both cumulative reward and constraint violations over $T$ episodes, 
    which are comparable to its counterpart with pre-defined thresholds (Section \ref{sec:thms}). Our analysis also suggests a trade-off between efficiency and safety: a constraint that enforces stringent safety can effectively mitigate violations but incur higher reward regret, and vice versa. 
\end{itemize}

\textbf{Related Works.} Research on MDPs for online learning has seen a long history, which primarily focused on achieving a sublinear regret in stochastic or adversarial environments \cite{orabona2019modern, auer2008near, even2009online, neu2010online, rosenberg2019online, rosenberg2019online2, jin2020learning}. Building on these, researchers have integrated safety and resource constraints into MDPs, yielding CMDP methods that achieve strong sublinear regret bounds under assumptions of `full-information feedback', where the complete loss function is observable \cite{wei2018online, qiu2020upper}. Subsequent works on bandit feedback, where the agent only observes the outcomes of its chosen actions, extended these guarantees to both known- and unknown-transition models, achieving sublinear bounds on both weak reward regret and constraint violation \cite{zheng2020constrained, efroni2020exploration, bai2020provably,bura2022dope}. Recent works have pushed the frontier further, obtaining sublinear strong regret guarantees \cite{stradi2024optimal, muller2024truly,ghosh2024towards}. Efforts have also been seen in extending the results to more complex non-stationary environments (i.e., under assumptions that rewards and constraints have bounded variations), maintaining sublinear regrets even as the environment evolves \cite{ding2023provably, wei2023provably}. Although partial feedback models have been explored beyond simple bandit scenarios \cite{bacchiocchi2024markov,lin2024information}, these methods generally yield weaker theoretical guarantees. In addition, adversarial scenarios remain particularly challenging, with most existing results achieving only weak regret bounds \cite{stradi2024online,levy2023efficient}.
As discussed, most studies assume that safety thresholds can be clearly pre-defined. To our best knowledge, this paper presents the first CMDP algorithm to move beyond this limitation, learning safety thresholds online from interaction while achieving sublinear regret bounds.

\section{Preliminaries}

\paragraph{Constrained Markov decision process (CMDP)} An MDP characterises the sequential decision-making process of an agent, positioned in a state $s$ from a finite state $\mathcal{S}$ (with $|\mathcal{S}|=S$), which could select an action $a$ from a finite action space $\mathcal{A}$ (with $|\mathcal{A}|=A$) and then transition to another state $s' \in \mathcal{S}$ according to a transition function $p_h: \mathcal{S}\times\mathcal{A}\times\mathcal{S}\to [0,1]$. The course of each episode starts at an initial state $s_1\in \mathcal{S}$ and continues for $H \in \mathbb{N}$ steps, which is named as horizon. At each step $h\in [H]$, the agent receives a reward $\tilde{r}_h: \mathcal{S}\times \mathcal{A}\to [0,1]$ drawn from a distribution $\mathcal{R}_h$ with expectation reward $r_h(s,a)$, which means its underlying distribution may vary over time. The agent's trajectory is determined by a policy $\pi$ consisting of decision rules $(\pi_1,\ldots,\pi_H)\in\Pi$, where each $\pi_{h}(\cdot|s)\in \Delta(\mathcal{A})$ is a probability distribution over the action space $\mathcal{A}$. Here, $\Delta(\mathcal{A})$ denotes the probability simplex over the action space $\mathcal{A}$. A Constrained MDP (CMDP) extends an MDP by incorporating $m$ constraints, each defined by a function $g_{i,h}: \mathcal{S}\times \mathcal{A}\to [0,1]^m$ associated with a threshold. These thresholds collectively form a vector $\alpha \in [0,H]^m$. In the stationary scenario, the immediate constraint $\tilde{g}_{i,h}(s,a)\in [0,1]$ is drawn from a distribution $\mathcal{G}_{i,h}$ with expectation $\mathbb{E}_{\mathcal{G}_i}[\tilde{g}_{i,h}(s,a)]=g_{i,h}(s,a)$ for every constraint $i \in [m]$. Overall, a CMDP can be represented by the tuple $\mathcal{M}=(\mathcal{S},\mathcal{A},H,p,r,g,\alpha)$.

\paragraph{Value functions and objective functions} 
Given a policy $\pi \in \Pi$ and a vector $v \in [0,1]^{\mathcal{S}\times \mathcal{A}}$ indexed by state-action pairs, the value functions
\begin{gather*}
    V_{v,h}^{\pi} (s_h) = \mathbb{E}_{\pi, p} \left[ \sum_{h' = h}^{H} v_{h'}(s_{h'}, a_{h'}) \mid s_h \right],\text{      }
Q_{v,h}^{\pi} (s_h,a_h) = \mathbb{E}_{\pi, p} \left[ \sum_{h' = h}^{H} v_{h'}(s_{h'}, a_{h'}) \mid s_h , a_h \right],
\end{gather*}
capture the expected sum of $v$ from step $h$ onward. For brevity, let $V_v^{\pi}$ denote $V_{v,1}^{\pi} (s_1)$, the expected return starting at the initial state. The primary objective is to find a policy that maximises the expected cumulative reward while satisfying all the constraints; formally,
\begin{equation}
\label{objective}
\max_{\pi \in \Pi} \; V_r^\pi 
\quad
\text{s.t.} 
\quad
V_{g_i}^\pi \geq \alpha_i \quad (\forall i \in [m]),
\end{equation}
where $V_r^\pi$ is the expected cumulative reward value, and $V_{g_i}^\pi$ is the expected cumulative constraint value for constraint $i\in [m]$, constrained by the safety threshold $\alpha_i$. The Lagrangian of this optimisation problem is $\mathcal{L}(\pi,\lambda)=V_r^\pi+\sum_{i=1}^m \lambda_i(V_{g_i}^\pi-\alpha_i)$, where $\lambda \in \mathbb{R}_{\geq 0}^m$ is the vector of Lagrange multipliers. Consequently, problem \eqref{objective} can be cast as a saddle-point problem: $\max_{\pi \in \Pi} \;\min_{\lambda \in \mathbb{R}^m_{\ge 0}} \mathcal{L}(\pi, \lambda)$.

\paragraph{Training process} The algorithm interacts over a fixed number of $T\in \mathbb{N}$ episodes. Before each episode \(t \in [T]\), the algorithm chooses a policy \(\pi_t \in \Pi\) and executes it for one run of the horizon.
The goal is to simultaneously minimise its reward regret and constraint violation,
\begin{align*}
    \mathcal{R}_T(r) &:= \sum_{t=1}^{T} \Big( V_r^{\pi^\star} - V_r^{\pi_t}\Big), \text{      }
    \mathcal{R}_T(g) := \max_{i \in [m]} \sum_{t=1}^{T} \Big( \alpha_i-V_{g_i}^{\pi_t}\Big).
\end{align*}
The reward regret measures the cumulative gap between 
the rewards obtained by 
the selected policies and the optimal policy $\pi^\star \in \Pi$, and the constraint violation measures how much the constraints are violated during the learning process. 

\paragraph{Other notations} Throughout this paper, $\tilde{O}$ indicates asymptotic bounds to polylogarithmic factors. $\lesssim$ is used to represent $\le$ up to numerical constants or polylogarithmic factors.

\section{Uncertainty of Environment: A Stochastic Threshold Perspective}

Environments can be unknown and uncertain, including the safety thresholds. This means that an agent has to rely on incremental observations and feedback from the environment to `infer' the thresholds over time, which is later served for training itself. This section first discusses what this means to a reinforcement learning agent, then mathematically grounds it into an optimisation problem. We also discuss the properties of this optimisation problem.

\subsection{Unknown and Stochastic Environment in Terms of Thresholds}
\label{sec: New Setting}
In many real-world applications, safety or performance thresholds are not pre-defined but must be inferred from a stream of noisy observations. A representative example is personalized medicine: a treatment plan for a patient might span several weeks and total cumulative dose of a drug must stay below a toxicity level for the entire period. At each step, for instance on a daily basis, a doctor receives noisy signals from the patient's biomarkers. This feedback provides a per-step hint about the safe dosage for that day \cite{liu2024learning}. Therefore, the agent's core challenge is to aggregate these per-step, unreliable signals to form a coherent strategy for the overall episodic constraint.

\paragraph{Stochastic thresholds} Motivated by this example, we propose training a CMDP in which the safety thresholds are realized through a stochastic process. We assume a stationary environment in which, for each constraint $i\in [m]$ and step $h\in[H]$, there exists an unknown threshold $\alpha_{i,h}\in[0,1]$. When the agent visits any state-action pair $(s,a)$, it receives a bounded noisy observation $\tilde{\alpha}_{i,h}(s,a)\in[0,1]$ which is drawn from a distribution $\mathcal{L}_{i,h}$ with expectation $\mathbb{E}_{\mathcal{L}_{i,h}}[\tilde{\alpha}_{i,h}(s,a)]=\alpha_{i,h}$. The episodic threshold for constraint $i$ is $\alpha_i:=\sum_{h=1}^H\alpha_{i,h}$. Similarly, for each step $h$ and constraint $i$, the reward $\tilde{r}_{h}(s,a)$ and cost $\tilde{g}_{i,h}(s,a)$ are drawn from unknown families of distributions $\mathcal{R}_h$ and $\mathcal{G}_{i,h}$, respectively.

The fundamental difficulty is that the agent has no prior knowledge of the distributions $\{\mathcal{L}_{i,h}\}_{i,h}$, $\{\mathcal{R}_h\}_h$, or $\{\mathcal{G}_{i,h}\}_{i,h}$. Consequently, the true parameters which involve the expected reward vector $r$, constraints $g_i$ and thresholds $\alpha_i$ are all unknown. This information must be acquired online by sampling from these distributions. Algorithm \ref{alg:interaction} depicts this interaction, where at each episode $t$, the agent executes a policy $\pi_t$ and observes stochastic realisations (i.e., the samples $\tilde{r}_h$, $\tilde{g}_{i,h}$ and $\tilde{\alpha_{i,h}}$, which are then used to learn the unknown parameters.

\begin{algorithm}[t]
\caption{Agent-Environment Interaction for $t \in [T]$}
\label{alg:interaction}
\begin{algorithmic}[1]
\REQUIRE Policy $\pi_t\in \Pi$
\STATE Environment initialises state $s_1 \in \mathcal{S}$
\FOR{$h = 1, \dots, H$}
    \STATE Agent takes action $a_h \sim \pi_t(\cdot \mid s_h)$
    \STATE Agent observes reward $\tilde{r}_h^t(s_h, a_h)$, constraint cost $\tilde{g}_{i,h}^t(s_h, a_h)$, and threshold $\tilde{\alpha}_{i,h}^t(s_h,a_h)$ for all $i \in [m]$
    \STATE Environment evolves to $s_{h+1} \sim p(\cdot \mid s_h, a_h)$
\ENDFOR
\end{algorithmic}
\end{algorithm}

\textbf{Objective Function under Stochastic Thresholds} The learning objective is to find a policy $\pi^{\star}$ that maximizes the expected cumulative reward while respecting the (unknown) safety constraints:
\begin{equation*}
\max_{\pi \in \Pi} V_r^\pi \quad \text{s.t.} \quad V_{g_i}^\pi \ge \alpha_i=\sum_{h=1}^H \alpha_{i,h}, \quad \forall i \in [m],
\end{equation*}
where $V_r^{\pi}:=\mathbb{E}_{\pi,p}\left[\sum_{h=1}^H r_h\right]$ and $V_{g_i}^{\pi}:=\mathbb{E}_{\pi,p}\left[\sum_{h=1}^H g_{i,h}\right]$ are the reward and constraint expectations under the trajectory induced by $(\pi,p)$, and $\alpha_i=\sum_{h=1}^H \alpha_{i,h}$ is the threshold expectation with $\alpha_{i,h}=\mathbb{E}_{\mathcal{L}_{i,h}}[\tilde{\alpha}_{i,h}]$. 

Given a policy $\pi$ and transition function $p$, the occupancy measure $q_h^\pi$ is defined as the probability of the agent visiting state $s$ and taking action $a$ at step $h$, that is, $q_h^\pi(s,a;p)=\mathbb{P}\left\{s_h=s,a_h=a|p,\pi\right\}$ \cite{Puterman1994,altman1999constrained}.
The entire collection of these probabilities is denoted by $q:=[q_h^\pi(s,a;p)]_{H\times \mathcal{S}\times \mathcal{A}}$. Define the reward vector $r$ with entries $r_h(s,a)$ and the constraint matrix $G$  with rows $(g_{i,h}(s,a))_{h,s,a}$. Then, $V_r^{\pi}=r^{\top}q$ and $(V_{g_i}^{\pi})_{i\in[m]}= G^\top q$. Under the occupancy measure formulation, the optimisation problem is a saddle-point problem defined over the (unknown) expected reward vector $r$, constraint matrix $G$ and threshold vector $\alpha$:
\begin{equation*}\label{eq:stochastic}
    \max_{q \in \Delta{(\mathcal{M})}} \;\min_{\lambda \in \mathbb{R}^m_{\ge 0}} \mathcal{L}(q, \lambda):=r^{\top}q+\lambda^{\top}(G^{\top}q-\alpha).
\end{equation*}
The corresponding target feasible set is $\mathcal{F}_q:=\{q\in\Delta{(\mathcal{M})}|\sum_{h,s,a}q_h^\pi(s,a;p)g_{i,h}(s,a)\geq \alpha_i, \forall i\in [m]\}$,
which forms a convex polytope within the space of occupancy measures $\Delta(\mathcal{M})$.

\subsection{Estimators of Thresholds}
\label{sec:estimations}
We ask how to use the minimum amount of data while ensuring good performance in the stationary environment. To control computational cost without sacrificing statistical guarantees, we estimate thresholds using only the most recent episodes through a Growing-Window (GW) approach and then estimates at a data-efficient representative pair for each step.

Let $\{W_t\}_{t=1}^T$ be a window-length sequence that increases with episode $t$. For any constraint $i\in[m]$, step $h\in[H]$ and state-action pair $(s,a)\in\mathcal{S}\times\mathcal{A}$, the GW estimator at episode $t$ is
\begin{equation*}\label{empirical-gw}
\hat{\alpha}_{i,h}^{(W_t),t}(s,a) := \frac{\sum_{l=\max\{1,t-W_t+1\}}^t \tilde{\alpha}_{i,h}^l(s,a)\,\mathbf{1}_{\{s_h^l=s,a_h^l=a\}}}{\max\{1, N_h^{(W_t),t}(s,a)\}},
\end{equation*}
where $N_h^{(W_t),t-1}(s,a):=\sum_{l=\max\{1,t-W_t+1\}}^{t-1}\mathbf{1}_{\{s_h^l=s,a_h^l=a\}}$ counts visits to state-action pair $(s,a)$ in the last $W_t$ episodes before episode $t$. As $t$ increases, the window size $W_t$ grows. We prove its asymptotic consistency as follows.
\begin{theorem}[Asymptotic consistency of Growing-Window estimator to thresholds]\label{thm:estimator2}
    Given a confidence parameter $\delta \in (0,1)$, with probability at least $1 - \delta$, the following holds for every constraint $i \in [m]$, step $h\in [H]$, episode $t \in [T]$, and state-action pair $(s,a) \in \mathcal{S} \times \mathcal{A}$:
\begin{equation*}
    \left|\hat{\alpha}_{i,h}^{(W_t),t}(s,a)-\alpha_{i,h}\right|\leq\zeta_h^{(W_t),t}(s,a),
\end{equation*}
where $\zeta_h^{(W_t),t}(s,a)=\min\left\{1,\sqrt{\frac{4\ln(mSAHT/\delta)}{\max\{1,N_h^{(W_t),t-1}(s,a)\}}}\right\}$.
\end{theorem}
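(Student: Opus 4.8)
The plan is to split along the two arguments of the $\min$ defining $\zeta_h^{(W_t),t}(s,a)$. The branch $\zeta\le 1$ is immediate from boundedness: $\hat{\alpha}_{i,h}^{(W_t),t}(s,a)$ is either $0$ (when no visit occurs in the window, by the $\max\{1,\cdot\}$ convention) or an average of observations lying in $[0,1]$, and $\alpha_{i,h}\in[0,1]$, so $|\hat{\alpha}_{i,h}^{(W_t),t}(s,a)-\alpha_{i,h}|\le 1$ unconditionally. It therefore remains to prove the bound $\sqrt{4\ln(mSAHT/\delta)/\max\{1,N_h^{(W_t),t-1}(s,a)\}}$ on the event that $(s,a)$ is visited at step $h$ at least once in the relevant window; moreover, since $N_h^{(W_t),t}(s,a)\ge N_h^{(W_t),t-1}(s,a)$, it suffices to establish the estimate with denominator $N_h^{(W_t),t}(s,a)$, the count that actually appears in $\hat{\alpha}_{i,h}^{(W_t),t}$, and then weaken to $N_h^{(W_t),t-1}(s,a)$.

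The core observation is that, although the number of samples entering the estimator is random, the \emph{window interval} $[\max\{1,t-W_t+1\},t]$ is deterministic (the sequence $\{W_t\}$ is fixed in advance), and the observation noise $\tilde{\alpha}_{i,h}^{l}(s,a)$ is drawn independently of the MDP dynamics and of the history-dependent policies. Hence I would condition on the $\sigma$-algebra generated by all trajectories $\{(s_h^l,a_h^l)\}_{l\le T,\,h\le H}$ over the entire run: given this conditioning, $N_h^{(W_t),t}(s,a)$ becomes a fixed number $n\ge 1$, and the $n$ observations $\tilde{\alpha}_{i,h}^l(s,a)$ collected on the visiting episodes inside the window are i.i.d.\ draws from $\mathcal{L}_{i,h}$ with common mean $\alpha_{i,h}$ and range $[0,1]$. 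Hoeffding's inequality then yields, conditionally, $\mathbb{P}\big(|\hat{\alpha}_{i,h}^{(W_t),t}(s,a)-\alpha_{i,h}|\ge \sqrt{\ln(2/\delta')/(2n)}\,\big)\le\delta'$; since the right-hand threshold is a function of the already-fixed count $n$, the bound is preserved upon taking expectation over the conditioning, so it holds unconditionally with probability at least $1-\delta'$ for the given tuple $(i,h,t,s,a)$. (Equivalently, one can phrase this as an Azuma--Hoeffding bound for the martingale-difference sequence $X_l=(\tilde{\alpha}_{i,h}^l(s,a)-\alpha_{i,h})\mathbf{1}\{s_h^l=s,a_h^l=a\}$ together with a stopping-time or union-over-counts argument, but the conditioning route is cleaner.)

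Finally, I would take a union bound over all $i\in[m]$, $h\in[H]$, $t\in[T]$ and $(s,a)\in\mathcal{S}\times\mathcal{A}$ — at most $mSAHT$ events — with $\delta'=\delta/(mSAHT)$, so the total failure probability is at most $\delta$. On the complementary event, every estimator simultaneously satisfies $|\hat{\alpha}_{i,h}^{(W_t),t}(s,a)-\alpha_{i,h}|\le\sqrt{\ln(2mSAHT/\delta)/(2N_h^{(W_t),t}(s,a))}\le\sqrt{4\ln(mSAHT/\delta)/N_h^{(W_t),t}(s,a)}$, using $\ln(2mSAHT/\delta)\le 2\ln(mSAHT/\delta)$ (valid whenever $mSAHT/\delta\ge 2$); combined with the reductions of the first paragraph this yields exactly the stated $\zeta_h^{(W_t),t}(s,a)$. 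I expect the only delicate point to be the justification of the conditioning step — that conditioning on the trajectories does not perturb the i.i.d./unbiasedness structure of the threshold observations, which rests squarely on the modelling assumption that $\tilde{\alpha}_{i,h}$ is independent of the dynamics and the policy — together with the bookkeeping that the windows overlap across episodes, which is harmless precisely because each episode $t$ is charged separately in the union bound rather than through any cross-episode independence.
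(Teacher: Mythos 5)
Your proposal is correct and follows essentially the same route as the paper's own proof: a per-tuple Hoeffding concentration bound for the windowed average (Lemma~\ref{lem:gw_confidence}) followed by a union bound over all $mSAHT$ tuples with $\delta'=\delta/(mSAHT)$, the slack in the constant $4$ absorbing the $\ln 2$ and the extra union bound over possible values of the visit count. If anything you are more careful than the paper on the points it glosses over — the randomness of $N_h^{(W_t),t}$ (via conditioning on trajectories, or the Azuma/stopping-time alternative you mention, which is indeed the safer route since future trajectories can depend on past threshold observations through the policy updates), the trivial $\le 1$ branch of the $\min$, and the $t$ versus $t-1$ count discrepancy.
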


For brevity we simplify the notation $\hat{\alpha}_{i,h}^{(W_t),t}$ to $\hat{\alpha}_{i,h}^{t}$, $\zeta_h^{(W_t),t}$ to $\zeta_h^{t}$. The proof is given in Appendix \ref{app:estimation}.

\begin{remark}
    Fixed-window approaches may repeatedly discard valuable historical data, which hurts long-run convergence. Full-history methods may respond too slowly to changes. A Growing-Window approach preserves enough past information to ensure eventual convergence but can adapt to evolving conditions more rapidly.
\end{remark}

\begin{remark}
    In this work, we adopt a linear Growing-Window scheme defined by $W_t=\lfloor\gamma t\rfloor$, where $\gamma\in (0,1]$ is a fixed constant. In particular, setting $\gamma=1$ recovers the traditional Monte Carlo (MC) method as a special case.
\end{remark}

Having defined the windowed counts and per-pair estimates, we now summarize each step $h$ with a statistically most informative representative pair drawn from the same window:
\begin{equation*}
    (s_h^{\star},a_h^{\star})= \underset{(s',a') \in \mathcal{S} \times \mathcal{A}}{\arg\max} \; N_{h}^{(W_t),t}(s',a').
\end{equation*}
Using the representative rather than pooling all pairs avoids diluting the estimate with many low-frequency pairs, reduces computation and storage, and, within a stationary environment, retains asymptotic correctness as the window grows. We further consider two `extreme' estimation strategies, namely pessimistic and optimistic versions for any $(s,a)$,
\begin{equation*}
\label{opt-b}
\overline{\alpha}_{i,h}^t(s,a):=\hat{\alpha}_{i,h}^{t-1}(s,a) + \zeta_h^{t-1}(s,a), \quad
\underline{\alpha}_{i,h}^t(s,a):=\hat{\alpha}_{i,h}^{t-1}(s,a) - \zeta_h^{t-1}(s,a),
\end{equation*}
and evaluate them at the representative pair,
\begin{equation}
\label{opt-a}
\overline{\alpha}_{i,h}^t:=\overline{\alpha}_{i,h}^t(s_h^{\star},a_h^{\star}), \quad
\underline{\alpha}_{i,h}^t:=\underline{\alpha}_{i,h}^t(s_h^{\star},a_h^{\star}).
\end{equation}

By construction, with probability $1-\delta$, any possible estimator $\hat{\alpha}_{i}$ satisfies inequality:  $\underline{\alpha}_{i}\leq \hat{\alpha}_{i} \leq \overline{\alpha}_{i}$. These optimistic and pessimistic variants are the upper and lower confidence bounds for the GW estimator while maintaining the minimal-data design.

\subsection{Feasibility and Strong Duality}\label{sec:Strong duality}

A policy $\pi$ is defined to be pessimistic if the expected cumulative constraint is no less than the pessimistic threshold $\overline{\alpha}$; i.e., $V_{g}^\pi\geq \overline{\alpha}$. Let $\mathcal{F}_{\textup{pes}}$ denote the feasibility set,
\begin{equation*}
    \mathcal{F}_{\textup{pes}}:=\{q\in\Delta{(\mathcal{M})}|\sum_{h,s,a}q_h^\pi(s,a;p)g_{i,h}(s,a)\geq \overline{\alpha}_i, \forall i\in [m]\}.
\end{equation*}
Similarly, we define a policy $\pi$ to be optimistic if the expected cumulative constraint is greater than or equal to the optimistic threshold $\underline{\alpha}$, i.e., $V_{g}^\pi\geq \underline{\alpha}$, and its feasibility set as below,
\begin{equation*}
    \mathcal{F}_{\textup{opt}}:=\{q\in\Delta{(\mathcal{M})}|\sum_{h,s,a}q_h^\pi(s,a;p)g_{i,h}(s,a)\geq \underline{\alpha}_i, \forall i\in [m]\}. 
\end{equation*}
\begin{remark}
From the definitions, pessimistic and optimistic policies are boundaries of all possible policies: $\mathcal{F}_{\textup{pes}}\subseteq \mathcal{F}_q\subseteq \mathcal{F}_\textup{opt}$ with probability $1-\delta$.
\end{remark}
Throughout, we make the following assumptions, which is standard in
the context of CMDPs \cite{altman1999constrained,efroni2020exploration,liu2021learning,liu2024learning}.
\begin{assumption}[Slater condition]\label{asp-1}
    There exists $\pi^0\in \Pi$ and $\xi \in \mathbb{R}_{>0}^m$ such that $V_{g_i}^{\pi^0} \ge \alpha_i +\xi_i$ for all $i\in[m]$. Set the Slater gap $\rho=(r^{\top}q^{\pi^{\star}}(p)-r^{\top}q^{\pi^0})/\min_{i\in[m]} \xi_i$.
\end{assumption}
\begin{wrapfigure}{r}{0.4\textwidth}
    \centering
    \includegraphics[width=0.37\textwidth]{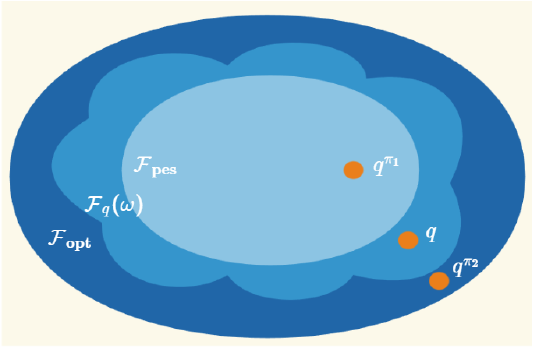}
    \caption{Feasible Sets and their Corresponding Feasible Solutions}
    \label{fig:feasible}
\end{wrapfigure}

Building on this assumption, we further require that the pessimistic tightening of the thresholds does not eliminate the existence of such a strictly feasible point. Similar `conservative tightening' assumptions have appeared in recent work \cite{bura2022dope}.

\begin{assumption}\label{asp-2}
    For $i\in[m]$, we have $\overline{\alpha}_i-\alpha_i<\xi_i$.
\end{assumption}

This condition ensures that the strictly feasible point from Assumption 1 remains feasible under the pessimistic setting. Their strong dualities are characterised by the following lemmas.

\begin{lemma}[Strong duality under pessimistic thresholds]
    Under Assumptions \ref{asp-1} and \ref{asp-2}, we have $\min_{\lambda}\max_{q \in \mathcal{F}_{\textup{pes}}} \mathcal{L}(q, \lambda) = \max_{q \in \mathcal{F}_{\textup{pes}}} \min_{\lambda}\mathcal{L}(q, \lambda).$
\end{lemma}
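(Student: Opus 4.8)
The plan is to recognise the identity as an instance of Sion's minimax theorem, once the pessimistic feasible set has been shown to be non-empty and compact -- and this non-emptiness step is precisely where Assumptions~\ref{asp-1} and \ref{asp-2} do their work.

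First I would unpack both sides so the statement is meaningful. For any $q \in \mathcal{F}_{\textup{pes}}$ we have $G^\top q - \overline{\alpha} \ge 0$ componentwise, so for $\lambda \in \mathbb{R}^m_{\ge 0}$ the term $\lambda^\top(G^\top q - \overline{\alpha})$ is nonnegative and vanishes at $\lambda = 0$; hence $\min_{\lambda \ge 0}\mathcal{L}(q,\lambda) = r^\top q$, and the right-hand side equals $\max_{q\in\mathcal{F}_{\textup{pes}}} r^\top q$, the value of the pessimistic CMDP. Weak duality, $\min_{\lambda}\max_{q\in\mathcal{F}_{\textup{pes}}}\mathcal{L} \ge \max_{q\in\mathcal{F}_{\textup{pes}}}\min_{\lambda}\mathcal{L}$, is the usual two-line argument: for any fixed $(\bar q,\bar\lambda)$, $\max_{q}\mathcal{L}(q,\bar\lambda)\ge\mathcal{L}(\bar q,\bar\lambda)\ge\min_{\lambda}\mathcal{L}(\bar q,\lambda)$, then take $\inf$ over $\bar\lambda$ and $\sup$ over $\bar q$. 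So the content of the lemma is the reverse inequality.

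Next I would verify the hypotheses of the minimax theorem. The set $\Delta(\mathcal{M})$ of valid occupancy measures is a bounded polytope (the flow/consistency equalities together with $q_h(s,a)\in[0,1]$), hence compact and convex; intersecting it with the $m$ halfspaces $\{G^\top q \ge \overline{\alpha}\}$ keeps it compact and convex, so $\mathcal{F}_{\textup{pes}}$ is compact and convex. It is also non-empty: the Slater policy $\pi^0$ of Assumption~\ref{asp-1} satisfies $V_{g_i}^{\pi^0} = (G^\top q^{\pi^0})_i \ge \alpha_i + \xi_i$, and Assumption~\ref{asp-2} gives $\overline{\alpha}_i < \alpha_i + \xi_i$, so $q^{\pi^0}$ lies in $\mathcal{F}_{\textup{pes}}$ and in fact strictly, i.e. $G^\top q^{\pi^0} - \overline{\alpha} > 0$ componentwise. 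Finally $\mathcal{L}(q,\lambda) = r^\top q + \lambda^\top(G^\top q - \overline{\alpha})$ is bi-affine, hence continuous and both concave (affine) in $q$ for each $\lambda$ and convex (affine) in $\lambda$ for each $q$, and $\mathbb{R}^m_{\ge 0}$ is convex. Sion's minimax theorem, using compactness of $\mathcal{F}_{\textup{pes}}$ on one side, then delivers $\max_{q\in\mathcal{F}_{\textup{pes}}}\min_{\lambda\ge 0}\mathcal{L}(q,\lambda) = \min_{\lambda\ge 0}\max_{q\in\mathcal{F}_{\textup{pes}}}\mathcal{L}(q,\lambda)$, which is the claim. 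To write $\max$/$\min$ rather than $\sup$/$\inf$: the outer maximum over the compact set $\mathcal{F}_{\textup{pes}}$ of the continuous map $q\mapsto r^\top q$ is attained, and the outer minimum over $\lambda$ is attained because $\phi(\lambda):=\max_{q\in\mathcal{F}_{\textup{pes}}}\mathcal{L}(q,\lambda)$ is coercive on $\mathbb{R}^m_{\ge 0}$ -- evaluating the inner max at the strictly feasible $q^{\pi^0}$ gives $\phi(\lambda) \ge r^\top q^{\pi^0} + \|\lambda\|_1\,\min_i (G^\top q^{\pi^0} - \overline{\alpha})_i \to \infty$ as $\|\lambda\|\to\infty$, using the strict margin from Assumption~\ref{asp-2}.

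The main obstacle is not the minimax step itself but the bookkeeping around it: (i) ensuring the \emph{pessimistic} feasible set is non-empty, which is exactly why Assumption~\ref{asp-2} is imposed on top of the standard Slater condition; and (ii) noting that the equality is stated in the ``partial'' form with $q$ confined to $\mathcal{F}_{\textup{pes}}$ on both sides, so one should invoke Sion directly on the pair $(\mathcal{F}_{\textup{pes}},\mathbb{R}^m_{\ge 0})$ rather than routing through the textbook CMDP Lagrangian dual, which keeps $q$ ranging over all of $\Delta(\mathcal{M})$. A useful addendum for the later regret analysis -- worth recording now -- is that the same strict margin bounds any optimal multiplier: $\|\lambda^\star\|_1 \le \big(V^\star_{\textup{pes}} - r^\top q^{\pi^0}\big)\big/\min_i(G^\top q^{\pi^0}-\overline{\alpha})_i$, where $V^\star_{\textup{pes}}$ is the pessimistic optimum, i.e. the pessimistic analogue of the Slater gap $\rho$.
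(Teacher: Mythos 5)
Your proof is correct, but it reaches the conclusion by a different black-box theorem than the paper does. The paper's proof (Appendix, Lemma 1) casts the pessimistic problem as a standard convex program $\max_{q} r^\top q$ s.t.\ $G^\top q \ge \overline{\alpha}$ over the polytope $\Delta(\mathcal{M})$, checks the six conditions of Assumption \ref{assumption:convex} (convex domain, affine objective and constraints, attained optimum, Slater point, solvable dual subproblems), and then invokes the Lagrangian strong-duality result of Lemma \ref{lem:convex} (Beck, 2017). You instead invoke Sion's minimax theorem on the pair $(\mathcal{F}_{\textup{pes}},\mathbb{R}^m_{\ge 0})$, using compactness of the occupancy polytope and bi-affinity of $\mathcal{L}$, and then supply attainment of the outer minimum separately via a coercivity argument at the strictly feasible $q^{\pi^0}$. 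The substantive content is identical in both routes -- in particular, the one step that actually uses Assumptions \ref{asp-1} and \ref{asp-2} together, namely that $G^\top q^{\pi^0}\ge \alpha+\xi > \overline{\alpha}$ so the Slater point survives the pessimistic tightening, is exactly the paper's step 5. What your route buys is a cleaner separation of concerns: the minimax equality itself needs only compactness and bi-affinity, while strict feasibility is isolated to non-emptiness and to attainment/boundedness of the dual optimum; your closing multiplier bound is the pessimistic analogue of what the paper later extracts from Lemma \ref{Lem:F.3}. What the paper's route buys is that existence of a dual optimal solution comes packaged inside the cited lemma rather than requiring the extra coercivity paragraph. One presentational caveat: like the paper's own appendix proof, you read the Lagrangian in this lemma as penalising $G^\top q-\overline{\alpha}$ rather than $G^\top q-\alpha$ as in the main-text definition of $\mathcal{L}$; that is the interpretation under which the stated identity is the duality of the pessimistic program, and it is worth stating explicitly.
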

\begin{lemma}[Strong duality under optimistic thresholds]
    Under Assumption \ref{asp-1}, we obtain $\min_{\lambda}\max_{q \in \mathcal{F}_{\textup{opt}}} \mathcal{L}(q, \lambda) = \max_{q \in \mathcal{F}_{\textup{opt}}} \min_{\lambda}\mathcal{L}(q, \lambda).$  
\end{lemma}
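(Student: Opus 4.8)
The plan is to reduce the claimed identity to a standard strong-duality statement for a linear program and then verify the one hypothesis that genuinely uses the structure of the problem, namely a Slater point for the optimistic feasible set. Write $\mathcal{L}(q,\lambda) = r^\top q + \lambda^\top(G^\top q - \underline{\alpha})$ for the Lagrangian built from the optimistic thresholds. For fixed $q \in \mathcal{F}_{\textup{opt}}$ we have $G^\top q \geq \underline{\alpha}$, so $\min_{\lambda \in \mathbb{R}^m_{\geq 0}} \mathcal{L}(q,\lambda) = r^\top q$, attained at $\lambda = 0$; hence $\max_{q \in \mathcal{F}_{\textup{opt}}} \min_\lambda \mathcal{L}(q,\lambda) = \max\{\, r^\top q : q \in \Delta(\mathcal{M}),\ G^\top q \geq \underline{\alpha} \,\}$, a linear program over the intersection of the convex, compact occupancy polytope $\Delta(\mathcal{M})$ with finitely many linear inequalities. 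By weak duality this value is at most $\min_\lambda \max_{q \in \mathcal{F}_{\textup{opt}}} \mathcal{L}(q,\lambda)$; the content of the lemma is the reverse inequality.

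To obtain it I would exhibit a Slater point for $\mathcal{F}_{\textup{opt}}$. Take the strictly feasible policy $\pi^0$ from Assumption \ref{asp-1}, so that $\sum_{h,s,a} q_h^{\pi^0}(s,a;p)\, g_{i,h}(s,a) = V_{g_i}^{\pi^0} \geq \alpha_i + \xi_i$ for all $i \in [m]$. On the probability-$(1-\delta)$ event of Theorem \ref{thm:estimator2}, the lower confidence bounds satisfy $\underline{\alpha}_{i,h}^t(s_h^{\star},a_h^{\star}) = \hat{\alpha}_{i,h}^{t-1}(s_h^{\star},a_h^{\star}) - \zeta_h^{t-1}(s_h^{\star},a_h^{\star}) \leq \alpha_{i,h}$, and summing over $h$ gives $\underline{\alpha}_i \leq \alpha_i$. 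Therefore $V_{g_i}^{\pi^0} \geq \alpha_i + \xi_i \geq \underline{\alpha}_i + \xi_i > \underline{\alpha}_i$, so $q^{\pi^0}$ lies strictly inside $\mathcal{F}_{\textup{opt}}$ and in particular $\mathcal{F}_{\textup{opt}} \neq \emptyset$. In contrast to the pessimistic case — where Assumption \ref{asp-2} is precisely what keeps a Slater point after tightening — relaxing the thresholds from $\alpha$ down to $\underline{\alpha}$ preserves strict feasibility automatically, so Assumption \ref{asp-1} alone suffices here.

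It then remains to handle the non-compact multiplier domain $\mathbb{R}^m_{\geq 0}$. Using the Slater point, for any $\lambda \in \mathbb{R}^m_{\geq 0}$ we get $\max_{q \in \mathcal{F}_{\textup{opt}}} \mathcal{L}(q,\lambda) \geq \mathcal{L}(q^{\pi^0},\lambda) \geq r^\top q^{\pi^0} + \big(\min_{i} \xi_i\big)\|\lambda\|_1$, so $\lambda \mapsto \max_q \mathcal{L}(q,\lambda)$ is coercive and its infimum is attained on a compact box $[0,\Lambda]^m$ with $\Lambda$ of the order of the Slater gap $\rho$ and the value range $[0,H]$. Restricting $\lambda$ to $[0,\Lambda]^m$ changes neither side of the claimed identity. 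On the product of the compact convex sets $\mathcal{F}_{\textup{opt}}$ and $[0,\Lambda]^m$ the payoff $\mathcal{L}$ is affine — hence concave — in $q$ and affine — hence convex — in $\lambda$, so Sion's minimax theorem yields $\min_{\lambda}\max_{q}\mathcal{L}(q,\lambda) = \max_{q}\min_{\lambda}\mathcal{L}(q,\lambda)$, which is the assertion. (Equivalently, one may invoke classical linear-programming strong duality for the feasible and bounded primal above.)

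I expect the only real subtlety to be the bookkeeping around truncating $\lambda$ to a compact set without altering either optimal value — immediate for $\max_q\min_\lambda$ from the linear-program reformulation, and following for $\min_\lambda\max_q$ from the coercivity argument above. Everything else (convexity and compactness of $\Delta(\mathcal{M})$, linearity of $\mathcal{L}$, and the inequality $\underline{\alpha} \leq \alpha$ from Theorem \ref{thm:estimator2}) is routine.
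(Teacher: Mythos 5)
Your proposal is correct and follows essentially the same route as the paper: the paper reduces the statement to a standard strong-duality result for convex programs (its Lemma \ref{lem:convex}) by checking convexity, compactness, and the existence of a Slater point, and for the optimistic case notes the proof is the same as the pessimistic one — the key observation in both your argument and the paper's being that since $\underline{\alpha}_i \leq \alpha_i$ on the high-probability event, the strictly feasible policy $\pi^0$ from Assumption \ref{asp-1} automatically remains a Slater point for $\mathcal{F}_{\textup{opt}}$, so no analogue of Assumption \ref{asp-2} is needed. Your extra care with coercivity and truncation of the multiplier domain is a more explicit version of the paper's condition that all dual subproblems admit optimal solutions.
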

Each lemma requires the existence of a policy that strictly satisfies the corresponding thresholds. We illustrate these feasible sets under pessimistic and optimistic thresholds and their corresponding feasible solutions in Figure \ref{fig:feasible}. If there exists a policy that strictly satisfies pessimistic thresholds, for all possible thresholds, the strong duality holds.

\subsection{A Primal-Dual Approach for CMDPs with Stochastic Thresholds}
\label{sec: SPOT}

In this section, we propose \textbf{SPOT} (\textit{{S}tochastic {P}essimistic-{O}ptimistic {T}hresholding}), a novel algorithm specifically tailored for CMDPs under stochastic thresholds. Unlike the closest algorithm OptPrimalDual-CMDP \cite{efroni2020exploration}, which assumes a fixed and known threshold, SPOT tackles the more challenging setting in which the threshold is unknown, and potentially changing over time.

\begin{algorithm}[htp]
\caption{\textbf{SPOT}}
\label{alg:spot}
\begin{algorithmic}[1]
\REQUIRE episodes $T$, $\bf\lambda_1 = 0$, stepsizes $\eta_t$ and $\eta_\lambda$, and initial policy $\pi_h^{1}(a|s)=1/A$ for all $s$, $a$, $h$.
\FOR{$t = 1, \dots, T$} 
    \STATE Update $\overline{r}^t$, $\overline{g}^t$, $\overline{p}^t$ via Equation \eqref{opt-p} and compute $\alpha_{\diamond}^t$ via Equation \eqref{opt-a} depending on the setting
    \STATE Truncated policy evaluation (Algorithm \ref{alg:truncated_policy}) for $\overline{r}^t$ and $\overline{g}^t$:
    \STATE \qquad$\hat{Q}_h^{t},\hat{V}_{\overline{g}}^{\pi_t}:= \text{TPE} (\overline{r}^t, \overline{g}^{t}, \overline{p}^{t}, \pi^t,\lambda_t)$
    \STATE Update primal variables for all $h$, $s$, $a\in [H] \times \mathcal{S} \times \mathcal{A}$:
        \STATE \qquad$\pi_h^{t+1}(a \mid s) = \frac{\pi_h^t(a \mid s) \exp(\eta_t \hat{Q}_h^{t}(s,a))}{\sum_{a'} \pi_h^t(a' \mid s) \exp(\eta_t \hat{Q}_h^{t}(s,a'))}$

    \STATE Update dual variables:
    \STATE \qquad $\lambda_{t+1} = \Pi_{\mathcal{C}}\left(\lambda_t + \frac{1}{\eta_\lambda} (\alpha_{\diamond}^t-\hat{V}_{\overline{g}}^{\pi_t})\right)$

    \STATE Execute $\pi_t$ via Algorithm \ref{alg:interaction} and update counters and empirical model (i.e., $N^t, \hat{r}^t, \hat{g}^t, \hat{p}^t, \hat{\alpha}^t$) via Equation \eqref{empirical-p}
\ENDFOR
\end{algorithmic}
\end{algorithm}

\paragraph{SPOT algorithm} For a given dual variable, the algorithm first updates the primal policy and subsequently refines the dual value using the estimated value. In the first stage (Line 2), it updates the optimistic estimators $\overline{r}^t$ and $\overline{g}^t$. Then, it computes the estimator $\alpha_{\diamond}^t$ as prescribed in Equation \eqref{opt-a}. Importantly, depending on the problem context, estimator $\alpha_{\diamond}^t$ can be either pessimistic estimator $\overline{\alpha}^t$ or optimistic estimator $\underline{\alpha}^t$, thereby accommodating different degrees of uncertainty in the thresholds. Thus, at each episode $t$, it yields a new CMDP $\mathcal{M}_t := (\mathcal{S}, \mathcal{A}, \overline{r}^t, \overline{g}^t, \alpha_{\diamond}^t, \overline{p}^{t})$, which the algorithm then optimises via 
$$\max_{q} \;\min_{\lambda \in \mathbb{R}^m_{\ge 0}} \mathcal{L}'(q, \lambda):=\overline{r}^{\top}q+\lambda^{\top}(\overline{G}^{\top}q-\alpha_{\diamond}).$$
In the policy evaluation stage (Line 3-4), a truncated policy evaluation approach (Algorithm \ref{alg:truncated_policy}) computes $Q$-values for both reward and constraints under policy $\pi_t$. Then, SPOT applies a Mirror Ascent (MA) rule to update the policy based on the computed Q-values. In the later stage of updating dual variables (Line 8), it employs a projected subgradient method:
\begin{equation}\label{eq:projection}
    \lambda_{t+1} = \Pi_{\mathcal{C}}\left(\lambda_t - \frac{1}{\eta_\lambda} (\overline{G}_{t-1}^{\top} q^{\pi_t} - \alpha_{\diamond}^t)\right),
\end{equation}
where $\overline{G}_{t-1}^{\top} q^{\pi_t}$ is equal to $\hat{V}_{\overline{g}}^{\pi_t}$ and the projection set is defined as $\mathcal{C}:=\{\lambda\in \mathbb{R}^m:0\leq \lambda\leq \rho\bf1\}$ with $\rho$ being the Slater gap defined in Section \ref{sec:Strong duality} and the projection operator $\Pi_{\mathcal{C}}:\mathbb{R}^m\to \mathcal{C}$ is defined by $\Pi_{\mathcal{C}}(x) := \arg\min_{y \in \mathcal{C}} \|y - x\|_2$. The feasibility proof for Equation \eqref{eq:projection} is presented in Appendix \ref{app:feasibility}. Finally, the policy $\pi_t$ is executed to collect new data, which is then used to update the empirical models and threshold estimates for subsequent iterations (Line 9).

\paragraph{Key differences and their significance}
\begin{itemize}[leftmargin=12pt] \item \textbf{Stochastic and unknown thresholds vs.\ fixed thresholds.} SPOT can handle thresholds that are unknown and random, going beyond the scope of fixed-threshold methods, which fit in more realistic, critical scenarios. 
\item \textbf{Pessimistic or optimistic thresholds balancing exploration and safety.} 
SPOT allows using either pessimistic or optimistic thresholds, selected according to the application's risk tolerance: pessimistic thresholds ensure robust feasibility under worst-case assumptions, while optimistic thresholds enable more ambitious exploration when additional risk is acceptable. This twofold approach empowers SPOT to adapt to varying levels of constraint uncertainty.
\end{itemize}

\section{Theoretical Analysis}\label{sec:thms}
This section theoretically analyses SPOT from perspectives of reward regret and constraint violation. Intuitively, reward regret captures how much total reward an algorithm `loses' compared to the optimal policy in hindsight; constraint violation tracks how often (or how much) an algorithm violates constraints. Both metrics are essential for quantifying how effectively our algorithm maximises rewards while adhering to constraints.

\subsection{Reward Regret and Constraint Violation of a Pessimistic Policy}
When safety and strict adherence to constraints are paramount, employing \emph{pessimistic thresholds} helps mitigate worst-case risks. In Algorithm \ref{alg:spot}, this corresponds to selecting $\alpha_{\diamond}^t=\overline{\alpha}^t$ at each episode $t$. Under this choice, the reward regret and constraint violation satisfy the following bounds.


\begin{theorem}[Bounds for reward regret and constraint violation of pessimistic policy]\label{thm:pes}
    Let $S$ be the number of states, $A$ be the number of actions, $H$ be the horizon length, $m$ be the number of constraints, $\rho$ be the Slater gap under Assumptions \ref{asp-1} and \ref{asp-2}, $\gamma$ be the Growing-Window parameter, $T$ be the number of episodes and $\mathcal{N}$ denote the maximum number of non-zero
transition probabilities across all state-action pairs. Then with probability $1-\delta$, Algorithm \ref{alg:spot} under pessimistic thresholds obtains the following regret bounds:
    \begin{align*}
    \mathcal{R}_T(r) &\leq \tilde{\mathcal{O}} \left( \sqrt{S \mathcal{N} H^4 T} + \sqrt{H^4(1 + m\rho)^2 T}+\sqrt{m^2\rho^2SAH^2T/\gamma} + (\sqrt{\mathcal{N}} + H) H^2 S A \right),\\
    \mathcal{R}_T(g) &\leq \tilde{\mathcal{O}} \left( \frac{1}{\rho} \left( \sqrt{ S \mathcal{N} H^4 T} + (\sqrt{\mathcal{N}} + H)  H^2 S A \right) + m \sqrt{SAH^4 T/\gamma}\right),
\end{align*}
where $\tilde{\mathcal{O}}$ hides polylogarithmic factors in $(S,A,H,T,1/\delta)$.
\end{theorem}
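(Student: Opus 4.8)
The plan is to follow the optimistic primal--dual template for CMDP learning, treating the genuinely new ingredient---the accumulated error of the Growing-Window threshold estimates---as a separate additive term. First I would fix a high-probability event $\mathcal{E}$, holding with probability $1-\delta$ after a union bound, on which: (i) the true transition $p$, reward $r$, and costs $g_i$ lie in the empirical confidence sets defining the optimistic model $(\overline{p}^t,\overline{r}^t,\overline{g}^t)$ used in Algorithm \ref{alg:spot}; and (ii) the conclusion of Theorem \ref{thm:estimator2} holds, so that, since $\alpha_{i,h}$ does not depend on $(s,a)$, evaluating at the representative pair gives $0 \le \overline{\alpha}_{i,h}^t - \alpha_{i,h} \le 2\,\zeta_h^{t-1}(s_h^\star,a_h^\star)$ for all $i,h,t$. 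On $\mathcal{E}$, optimism yields $\hat{V}^{\pi}_{\cdot}\ge V^{\pi}_{\cdot}$ for the reward and Lagrangian value functions, and Assumptions \ref{asp-1}--\ref{asp-2} guarantee that the pessimistically tightened problem is still strictly feasible; hence the pessimistic strong-duality lemma applies, a saddle point $(\bar q,\bar\lambda)$ exists, and $\|\bar\lambda\|_\infty$ is controlled by the Slater gap $\rho$ (which is why $\mathcal{C}=\{0\le\lambda\le\rho\mathbf{1}\}$ is the right projection set). I would use the pessimistic optimum as the comparator throughout.

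Second, I would decompose $\sum_t (V_r^{\pi^\star}-V_r^{\pi_t})$ in the standard way into: (a) a \emph{primal / Lagrangian-drift} term $\sum_t\big(\mathcal{L}'(q^{\pi^\star},\lambda_t)-\mathcal{L}'(q^{\pi_t},\lambda_t)\big)$; (b) a \emph{dual-regret} term $\sum_t\big(\mathcal{L}'(q^{\pi_t},\lambda_t)-\mathcal{L}'(q^{\pi_t},\lambda)\big)$ for a suitably chosen fixed $\lambda$; and (c) \emph{model-estimation} corrections converting optimistic values to true values, together with the truncated-policy-evaluation bias (which is lower order). Term (a) is handled by the mirror-ascent (exponential-weights) regret bound on the simplex, applied stepwise in $(h,s)$: the Q-functions used are those of $r+\lambda^\top g$ with $\|\lambda_t\|_1\le m\rho$ and range $[0,(1+m\rho)H]$, so tuning $\eta_t$ gives the $\sqrt{H^4(1+m\rho)^2T}$ term. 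Term (b) is the projected-subgradient regret over the box $\mathcal{C}$ of radius $\rho$ from the update \eqref{eq:projection}, which, combined with (a), produces the now-standard bound on $\max_i\sum_t(\overline{\alpha}_i^t-V_{g_i}^{\pi_t})$. Term (c) is the classical optimistic-model analysis---value-difference lemma, transition/reward confidence bonuses, and pigeonhole over the all-time counts $N_h^t(s,a)$---which yields $\sqrt{S\mathcal{N}H^4T}$ plus the burn-in term $(\sqrt{\mathcal{N}}+H)H^2SA$, and the same analysis applied to each constraint value supplies the $\rho^{-1}\sqrt{S\mathcal{N}H^4T}$ piece once the Slater conversion is carried out.

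Third comes the one new estimate: the threshold-error sum. Replacing $\overline{\alpha}_i^t$ by the true $\alpha_i$ (both inside the Lagrangian in (a) and in the violation statement) costs exactly $\sum_t\sum_h 2\,\zeta_h^{t-1}(s_h^\star,a_h^\star)$, weighted by $\|\lambda\|_1\le m\rho$ in the reward bound. To control this sum, observe that at each step $h$ every past episode in the window contributes exactly one visited pair, so $\sum_{s,a}N_h^{(W_t),t-1}(s,a)=\lfloor\gamma t\rfloor-1$; hence the most-visited pair obeys $N_h^{(W_t),t-1}(s_h^\star,a_h^\star)\gtrsim \gamma t/(SA)$ and therefore $\zeta_h^{t-1}(s_h^\star,a_h^\star)\lesssim\sqrt{SA\ln(mSAHT/\delta)/(\gamma t)}$ (the early episodes where the count is $O(1)$ are capped by the $\min\{1,\cdot\}$ and contribute only an $O(SA/\gamma)$ burn-in). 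Summing via $\sum_{t\le T}t^{-1/2}\le 2\sqrt{T}$ gives $\sum_{t=1}^T\sum_{h=1}^H\zeta_h^{t-1}(s_h^\star,a_h^\star)\lesssim H\sqrt{SAT/\gamma}$; multiplying by $m\rho$ yields the $\sqrt{m^2\rho^2SAH^2T/\gamma}$ term of $\mathcal{R}_T(r)$, and, after the value-scaling bookkeeping and the maximum over $m$ constraints, the $m\sqrt{SAH^4T/\gamma}$ term of $\mathcal{R}_T(g)$. Finally, I would run the standard Slater-based conversion: if the (already almost-bounded) violation were too large, the dual iterates would be driven to the boundary $\rho\mathbf{1}$, and the reward-regret inequality together with strong duality would be contradicted, which pins $\mathcal{R}_T(g)\lesssim \rho^{-1}\big(\sqrt{S\mathcal{N}H^4T}+(\sqrt{\mathcal{N}}+H)H^2SA\big)+m\sqrt{SAH^4T/\gamma}$.

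I expect the main obstacle to be precisely this threshold-error bookkeeping: one must simultaneously cope with a \emph{growing} window $W_t=\lfloor\gamma t\rfloor$ and a representative pair $(s_h^\star,a_h^\star)$ that changes with $t$, keep the confidence width $\zeta_h^{t-1}$ tied to windowed rather than all-time counts, and thread the resulting error both through the dual-variable range in the reward bound and through the Slater conversion in the violation bound \emph{without} inflating the $1/\rho$ dependence. The pigeonhole identity $\sum_{s,a}N_h^{(W_t),t-1}(s,a)=\lfloor\gamma t\rfloor-1$ is exactly what decouples $\gamma$ from the rest of the argument and makes the accumulated error behave like a harmonic series; once that is in hand, everything else is a careful but largely routine adaptation of the fixed-threshold primal--dual analysis, with the Slater gap $\rho$ mediating the efficiency--safety trade-off (a tighter effective $\xi_i$ raises $\rho$, inflating the reward-regret terms but shrinking the $\rho^{-1}$-weighted violation terms).
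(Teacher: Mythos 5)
Your proposal is correct and follows the same primal--dual skeleton as the paper: a high-probability success event, the decomposition into a policy (mirror-ascent) regret term, a dual (projected-subgradient) term, and optimistic-model estimation corrections, followed by the choice $\lambda=0$ versus $\lambda=\rho e_i$ and the Slater-based conversion of the combined bound into a violation bound. The one place where you genuinely diverge is the term the paper treats as its main technical contribution: the accumulated threshold-estimation error $\sum_{t,h}\zeta_h^{t-1}$. The paper controls this by writing the error along the realized trajectory pairs $(s_h,a_h)$, passing to conditional expectations via an Azuma--Hoeffding martingale bound (its Lemma~\ref{lem:martingale}), and then bounding the expected sum with an occupancy-measure plus Cauchy--Schwarz argument conditioned on a count-concentration event (its Lemma~\ref{lem:gw-sum}). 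You instead exploit the fact that the algorithm evaluates the threshold only at the most-visited representative pair, and use the deterministic pigeonhole identity $\sum_{s,a}N_h^{(W_t),t-1}(s,a)=\min\{W_t,t\}-1$ to get $N_h^{(W_t),t-1}(s_h^\star,a_h^\star)\gtrsim \gamma t/(SA)$, hence $\zeta_h^{t-1}(s_h^\star,a_h^\star)\lesssim\sqrt{SA\ln(mSAHT/\delta)/(\gamma t)}$ and a harmonic-series sum of order $H\sqrt{SAT/\gamma}$. Both routes give the same $\sqrt{m^2\rho^2SAH^2T/\gamma}$ and $m\sqrt{SAH^2T/\gamma}$ contributions (up to the paper's own $H^2$ versus $H^4$ bookkeeping in $\mathcal{R}_T(g)$); your version is more elementary, needs no extra concentration for this piece, and is arguably more faithful to the algorithm as stated, whereas the paper's occupancy-measure argument would also cover the variant in which the error is charged at the trajectory pairs rather than the representative pair. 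No gaps; the remaining steps match the paper's Lemmas~\ref{lem:OMD}, \ref{lem:policy errors-truncated}, \ref{lem:update rule}, \ref{lem:F.2} and \ref{Lem:F.3}.
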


\begin{remark}
    The closest works would be Efroni et al. \cite{efroni2020exploration}, who first introduce online methods that achieve $\tilde{\mathcal{O}}(\sqrt{T})$ regret and constraint violation in CMDPs; later, Ghosh et al. \cite{ghosh2024towards} prove that $\tilde{\mathcal{O}}(\sqrt{T})$ regret and hard violation can be achieved by a primal-dual approach in tabular CMDPs. 
This paper removes the fixed-threshold assumption, and achieves comparable guarantees. 
\end{remark}
\begin{remark}
    Existing studies have also employed pessimistic thresholds, but the thresholds are predetermined 
\cite{bura2022dope}. In contrast, our approach adaptively updates the thresholds during learning.
\end{remark}

\paragraph{Proof sketch} We present a proof sketch here, with a detailed proof deferred to Appendix \ref{app:main theorem}.
%

\paragraph{Step 1} Decompose the sum of regret and violation scaled by dual variable, applying Lemmas \ref{lem:dual optimism} and \ref{lem:primal-dual update}, as follows, 
\begin{equation}\label{eq:decomposition}
    \mathcal{L}(\lambda)\leq  \sum_{t=1}^{T'} \left(f_t - \tilde{f}_t\right)+ \sum_{t=1}^{T'}\left(\tilde{f}_t + \lambda_t^{\top} \tilde{y}_t - f_{\pi^{\star}} - \lambda_t^{\top} y_{\pi^{\star}}\right)+\frac{1}{2\eta_\lambda} \sum_{t=1}^{T'} \|\tilde{y}_t\|^2 +\frac{\eta_\lambda}{2} \|\lambda\|_2^2,
\end{equation}
where $\mathcal{L}(\lambda)$ denotes the Lagrangian for any $0\leq \lambda \leq \rho \bf1$ and $\eta_\lambda$ denotes the learning rate for dual update. The first term on the right side measures the gap between actual reward and the optimistic estimates; the second term quantifies the cumulative error between the estimated policy and the optimal policy with $\lambda_t$; the last two terms capture the error from updating the dual variables. 

\paragraph{Step 2} Upper bound each term in the decomposition. For the first term and third term, we have
$\frac{1}{2\eta_\lambda} \sum_{t=1}^{T'} \|\tilde{y}_t\|^2 \leq \frac{m H^2 T}{2\eta_\lambda}$. By applying Lemma \ref{lem:policy errors-truncated}, we have $\left| \sum_{t=1}^{T'} \left(f_t - \tilde{f}_t\right) \right| \leq \tilde{\mathcal{O}} \left( \sqrt{S \mathcal{N} H^4 T}\right)$. 
Bounding the second term is one of major technical contributions in this paper. We obtain the following lemma which is proved in Appendix \ref{app:optimality}.  
\begin{lemma}[Policy optimality of pessimistic policy]\label{lem:pes-opt}
    Conditioning on the success event (formally defined in Appendix \ref{app:success event}), for any episode $t \in [T']$, parameter $\gamma\in(0,1]$. Under pessimistic thresholds, we have
\[\sum_{t=1}^{T'} \tilde{f}_t + \lambda_t^{\top} \tilde{y}_t - f_{\pi^{\star}} - \lambda_t^{\top} y_{\pi^{\star}} 
\leq \tilde{\mathcal{O}}\left(\sqrt{H^4(1 + m\rho)^2 T}+\sqrt{m^2\rho^2SAH^2T/\gamma}\right).\]
\end{lemma}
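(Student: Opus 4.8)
The plan is to control the second term of the decomposition \eqref{eq:decomposition}, namely $\sum_{t=1}^{T'}\bigl(\tilde f_t + \lambda_t^\top \tilde y_t - f_{\pi^\star} - \lambda_t^\top y_{\pi^\star}\bigr)$, by recognizing it as the regret of an online mirror-ascent (MA) procedure played against the sequence of linearized objectives $q\mapsto \tilde f_t + \lambda_t^\top \tilde y_t$ over the occupancy polytope, and then paying separately for (i) the mismatch between the empirical/optimistic model $(\overline r^t,\overline g^t,\overline p^t)$ and the true model, and (ii) the mismatch between the pessimistic threshold $\overline\alpha^t$ and the true threshold $\alpha$. First I would rewrite $\tilde f_t+\lambda_t^\top\tilde y_t$ as an inner product $\langle \hat Q_h^t, \pi_h^t\rangle$-type quantity summed over $h$ and over the occupancy measure induced by $\pi_t$ under the true transitions, using the value-difference (performance-difference) lemma to pass between $\hat V^{\pi_t}$ computed on the optimistic model and $V^{\pi_t}$ on the true model; this is where the truncated policy evaluation error from Lemma \ref{lem:policy errors-truncated} and the transition-estimation error (contributing the $\sqrt{S\mathcal N H^4 T}$-type terms, already accounted elsewhere) enter, leaving a clean MA-regret term plus a threshold-error term.

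\medskip
Next I would invoke the standard mirror-ascent / multiplicative-weights regret bound for the primal update in Algorithm \ref{alg:spot}: since $\pi_h^{t+1}$ is the exponentiated-gradient update with step size $\eta_t$ and the pseudo-gradients are the $\hat Q_h^t$, each bounded in magnitude by roughly $H(1+m\rho)$ (because $0\le\lambda_t\le\rho\mathbf 1$ componentwise and each $Q$-value for reward or a single constraint is at most $H$), the per-state regret against the comparator $\pi^\star$ is $O\!\bigl(\tfrac{\log A}{\eta_t} + \eta_t H^2(1+m\rho)^2\bigr)$ per step. Summing over $H$ steps and over $t\in[T']$, and choosing $\eta_t \asymp \sqrt{\log A / (H^2(1+m\rho)^2 T)}$, gives the $\tilde{\mathcal O}\bigl(\sqrt{H^4(1+m\rho)^2 T}\bigr)$ contribution. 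I would be careful that the comparator $\pi^\star$ must be feasible for the \emph{pessimistic} CMDP $\mathcal M_t$ (so that $\lambda_t^\top y_{\pi^\star}\ge 0$ does not hurt us and the saddle-point inequality goes the right way); Assumption \ref{asp-2} together with Lemma 1 (strong duality under pessimistic thresholds) and the confidence event guarantee that the true optimal $\pi^\star$, or a slightly perturbed version of it, lies in $\mathcal F_{\textup{pes}}$, and the associated perturbation cost is absorbed into lower-order terms.

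\medskip
The remaining piece is the threshold-estimation cost: replacing $\alpha$ by $\overline\alpha^t = \hat\alpha^{t-1}+\zeta^{t-1}$ introduces an error $\sum_{t=1}^{T'}\lambda_t^\top(\overline\alpha^t-\alpha)$, and since $0\le\lambda_t\le\rho\mathbf 1$ this is at most $\rho\sum_{t=1}^{T'}\sum_{i=1}^m(\overline\alpha_i^t-\alpha_i) \le 2\rho\sum_{t=1}^{T'}\sum_{i=1}^m\sum_{h=1}^H \zeta_h^{t-1}(s_h^\star,a_h^\star)$ on the confidence event of Theorem \ref{thm:estimator2}. Here is where the Growing-Window parameter $\gamma$ shows up: $\zeta_h^{t-1}\lesssim \sqrt{\log(\cdot)/\max\{1,N_h^{(W_{t-1}),t-1}(s_h^\star,a_h^\star)\}}$, and because the representative pair is the most-visited one in a window of length $W_{t-1}=\lfloor\gamma(t-1)\rfloor$, a pigeonhole/counting argument gives $N_h^{(W_{t-1}),t-1}(s_h^\star,a_h^\star)\gtrsim \gamma t/(SA)$ up to logarithmic slack, so $\sum_{t=1}^{T'}\zeta_h^{t-1}\lesssim \sum_{t=1}^{T'}\sqrt{SA/(\gamma t)}\lesssim \sqrt{SAT/\gamma}$; multiplying by $m$ constraints, $H$ steps, and $\rho$ yields the $\tilde{\mathcal O}\bigl(\sqrt{m^2\rho^2 SA H^2 T/\gamma}\bigr)$ term. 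Combining the MA-regret term, the threshold-error term, and folding model-estimation errors into the already-quantified terms of Step 2 gives the claimed bound. I expect the \textbf{main obstacle} to be the counting argument that lower-bounds the visitation count of the representative pair inside the growing window uniformly over $t$ and $h$ on a high-probability event — this requires relating the empirical visit counts to the occupancy measures $q_h^{\pi_l}$ of the \emph{played} (time-varying) policies and arguing that the max-count pair cannot be too rarely visited, which is delicate when early policies explore poorly; handling the $\max\{1,\cdot\}$ truncation and the window boundary $\max\{1,t-W_t+1\}$ cleanly is the technical crux.
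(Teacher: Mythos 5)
Your overall architecture matches the paper's: a value-difference decomposition into a mirror-ascent regret term, a threshold-estimation term, and a Bellman-residual term, with the MA/exponentiated-gradient bound giving $\tilde{\mathcal{O}}(\sqrt{H^4(1+m\rho)^2T})$ exactly as in Lemma~\ref{lem:OMD}. Where you genuinely diverge is the threshold term. The paper bounds $\sum_{t,h}\zeta_h^t$ at the \emph{trajectory-visited} pairs $(s_h,a_h)$, which forces it through an Azuma--Hoeffding martingale argument (Lemma~\ref{lem:4}) plus an occupancy-measure counting lemma (Lemma~\ref{lem:gw-sum}, conditioned on the count event $F^N$). You instead evaluate at the \emph{representative} (most-visited) pair and use a pigeonhole bound $N_h^{(W_t),t-1}(s_h^\star,a_h^\star)\gtrsim \gamma t/(SA)$; since the window contributes exactly one visit per episode per step, this is deterministic and needs no occupancy-measure machinery at all --- so the ``technical crux'' you worry about is actually a non-issue for the max-count pair, and your route is arguably cleaner and more faithful to the definition in Equation~\eqref{opt-a}. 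Both routes give the same $\sqrt{m^2\rho^2SAH^2T/\gamma}$ rate. Two smaller points. First, your concern that $\pi^\star$ must be feasible for the pessimistic CMDP is unnecessary here: expanding $\tilde f_t+\lambda_t^\top\tilde y_t-f_{\pi^\star}-\lambda_t^\top y_{\pi^\star}$ makes the comparator terms combine into a value difference plus $\lambda_t^\top(\overline\alpha^t-\alpha)$, with no feasibility requirement on $\pi^\star$ under $\overline\alpha$. Second, you cannot literally ``fold model-estimation errors into terms already quantified elsewhere'': the lemma's bound contains no $\sqrt{S\mathcal{N}H^4T}$ term, so within this lemma the Bellman residual must be shown \emph{non-positive} via optimism of $\overline r,\overline g,\overline p$ (Lemma~\ref{lem:policy estimation opt}), not merely bounded; you gesture at this but should state it explicitly.
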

To obtain this, we need the following lemma.
\begin{lemma}[Martingale difference term bound]\label{lem:4}
    Let $\{\mathcal{F}_{h}^t\}_{(t,h)\in[T]\times[H]}$ be the filtration generated by the state-action sequences and threshold values observed up to time step $h$ at episode $t$. Suppose that for every $(t,h)$, $\zeta_h^t(s_h,a_h)$ satisfies
$\left|\zeta_h^t(s_h,a_h)-\mathbb{E}[\zeta_h^t(s_h,a_h)]|\mathcal{F}_{h}^{t-1}\right|\leq 2.$
Then, with probability $1-\delta$, 
\[
\left|\sum_{t=1}^T\sum_{h=1}^H\zeta_h^t(s_h,a_h)-\mathbb{E}\left[\zeta_h^t(s_h,a_h)|\mathcal{F}_{h}^{t-1}\right]\right| \leq \tilde{\mathcal{O}}\left(\sqrt{TH\text{ln}\frac{1}{\delta}}\right).
\]
\end{lemma}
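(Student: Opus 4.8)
\textbf{Proof plan for Lemma \ref{lem:4}.}

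The plan is to recognize the sum $\sum_{t=1}^T\sum_{h=1}^H\bigl(\zeta_h^t(s_h,a_h)-\mathbb{E}[\zeta_h^t(s_h,a_h)\mid\mathcal{F}_h^{t-1}]\bigr)$ as a martingale with bounded differences and apply Azuma--Hoeffding. First I would fix an ordering of the $TH$ pairs $(t,h)$ compatible with the filtration $\{\mathcal{F}_h^t\}$ --- for instance lexicographic in $(t,h)$ --- and define, for each index $k$ corresponding to $(t,h)$, the increment $X_k := \zeta_h^t(s_h,a_h)-\mathbb{E}[\zeta_h^t(s_h,a_h)\mid\mathcal{F}_h^{t-1}]$. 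The key observation is that $\zeta_h^t$ depends only on the windowed visit count $N_h^{(W_t),t-1}(s,a)$, which is $\mathcal{F}_h^{t-1}$-measurable (it is determined by data strictly before episode $t$), so conditioning on $\mathcal{F}_h^{t-1}$ the only randomness in $\zeta_h^t(s_h,a_h)$ comes from the current visited pair $(s_h,a_h)$; hence $\mathbb{E}[X_k\mid\mathcal{F}_h^{t-1}]=0$ and $\{X_k\}$ is a martingale difference sequence adapted to the refined filtration.

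Next I would verify the bounded-difference condition. Since $\zeta_h^t(s_h,a_h)\in[0,1]$ by the $\min\{1,\cdot\}$ truncation in its definition, each increment satisfies $|X_k|\le 1$ deterministically; the hypothesis $|\zeta_h^t(s_h,a_h)-\mathbb{E}[\zeta_h^t(s_h,a_h)\mid\mathcal{F}_h^{t-1}]|\le 2$ is in fact automatically met (with room to spare), so no extra work is needed there. With $N=TH$ increments each bounded by $2$ in absolute value, Azuma--Hoeffding gives
\[
\mathbb{P}\!\left(\left|\sum_{k=1}^{TH} X_k\right|\ge u\right)\le 2\exp\!\left(-\frac{u^2}{8TH}\right).
\]
Setting $u=\sqrt{8TH\ln(2/\delta)}$ yields the bound $\bigl|\sum_{t,h}X_k\bigr|\le\sqrt{8TH\ln(2/\delta)}=\tilde{\mathcal{O}}(\sqrt{TH\ln(1/\delta)})$ with probability at least $1-\delta$, which is exactly the claimed inequality after absorbing the constant into $\tilde{\mathcal{O}}$.

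There is essentially no hard step here --- the statement is a direct application of Azuma--Hoeffding once the martingale structure is identified. The only point requiring a moment's care is the measurability claim: one must confirm that $\zeta_h^t$ (equivalently $N_h^{(W_t),t-1}$) is determined by information available in $\mathcal{F}_h^{t-1}$, so that the conditional expectation in the lemma statement is well-defined and the centered increments genuinely form a martingale difference sequence. If one instead wants to carry the $\ln(1/\delta)$ dependence explicitly rather than hiding it in $\tilde{\mathcal{O}}$, it suffices to keep the constant $8$ (or any crude constant) from the Azuma bound; no sharper concentration (e.g. Freedman/Bernstein) is needed since the statement only asks for the $\sqrt{TH}$ rate.
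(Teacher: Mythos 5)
Your proposal is correct and follows essentially the same route as the paper: order the $(t,h)$ pairs lexicographically, observe that the centered terms form a martingale difference sequence bounded by $2$, and apply Azuma--Hoeffding to get the $\sqrt{TH\ln(1/\delta)}$ rate. Your added remarks on the $\mathcal{F}_h^{t-1}$-measurability of $\zeta_h^t(\cdot,\cdot)$ and the fact that the truncation $\zeta_h^t\in[0,1]$ makes the boundedness hypothesis automatic are sensible refinements of the same argument, not a different approach.
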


\paragraph{Step 3} Combining Step 2 and Step 3, and after rearrangement, 
we have 
\begin{align*}
    \mathcal{L}(\lambda) \lesssim& \left(\rho + \frac{\|\lambda\|_2^2}{\rho}\right) \sqrt{mH^2T} + \left( \sqrt{H^4(1 + m\rho)^2 T}+\sqrt{m^2\rho^2SAH^2T/\gamma}\right)\\
    &+\sqrt{S \mathcal{N} H^4 T}+(\sqrt{\mathcal{N}}+H)H^2SA,
\end{align*}

Please note that this is an upper bound for the combination of reward and constraint violation. We can obtain the regret bound for reward or constraint violation through setting $\lambda=0$ or $\lambda_i=\rho e_i \mathbf{1}_{([\sum_{k=1}^{T'}g_{i,k}]_+\neq0)}$.


\subsection{Reward Regret and Constraint Violations of an Optimistic Policy}

When a certain degree of risk is acceptable in exchange for faster learning and higher reward potential (e.g., exploration‑heavy tasks or loosely constrained resource allocation), an optimistic threshold could be adopted in Algorithm \ref{alg:spot}, by setting $\alpha_\diamond^t=\underline{\alpha}^t$ at every episode $t$. Similarly, we prove reward regret and constraint violations of an optimistic policy as below.

\begin{theorem}[Bounds for reward regret and constraint violation with optimistic thresholds]\label{thm:opt}
    Under the same setting of Theorem \ref{thm:pes} and Assumption \ref{asp-1}, with probability $1-\delta$, Algorithm \ref{alg:spot} under optimistic thresholds achieves the following regret bounds:
\begin{align*}
\mathcal{R}_T(r) \leq& \tilde{\mathcal{O}} \left( \sqrt{S \mathcal{N} H^4 T} + \sqrt{H^4 (1 + m\rho')^2 T} + (\sqrt{\mathcal{N}} + H) H^2 S A \right),\\
\mathcal{R}_T(g) \leq& \tilde{\mathcal{O}} \left( \left(1 + \frac{1}{\rho'} \right) \left( \sqrt{m S \mathcal{N} H^4 T} + (\sqrt{\mathcal{N}} + H) \sqrt{m} H^2 S A\right) + m H^2\sqrt{T}\right)\\
&+\tilde{\mathcal{O}}\left(\sqrt{mSAH^2T/\gamma}+\sqrt{m}SAH\right).
\end{align*}
where $\rho'$ denotes the Slater gap under optimistic thresholds.
\end{theorem}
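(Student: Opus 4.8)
The plan is to mirror the proof of Theorem \ref{thm:pes} (the pessimistic case) almost step for step, adapting each piece of the analysis to the optimistic threshold choice $\alpha_\diamond^t = \underline{\alpha}^t$. The high-level scheme is the same primal-dual decomposition in Equation \eqref{eq:decomposition}: write the scaled sum of regret and constraint violation as the sum of (i) an estimation-error term $\sum_t (f_t - \tilde f_t)$, (ii) an ``optimality gap'' term $\sum_t (\tilde f_t + \lambda_t^\top \tilde y_t - f_{\pi^\star} - \lambda_t^\top y_{\pi^\star})$, and (iii) the dual-update error terms $\frac{1}{2\eta_\lambda}\sum_t \|\tilde y_t\|^2 + \frac{\eta_\lambda}{2}\|\lambda\|^2$. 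Terms (i) and (iii) are insensitive to whether the threshold is pessimistic or optimistic --- they only concern the optimistic model estimates $\overline r^t, \overline g^t, \overline p^t$ and the boundedness of $\tilde y_t$ --- so I would reuse Lemma \ref{lem:policy errors-truncated} and the bound $\frac{1}{2\eta_\lambda}\sum_t \|\tilde y_t\|^2 \le \frac{mH^2T}{2\eta_\lambda}$ verbatim, picking up the $\sqrt{S\mathcal{N}H^4T}$ and $(\sqrt{\mathcal{N}}+H)H^2SA$ contributions.

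The substantive work is an optimistic analogue of Lemma \ref{lem:pes-opt}: bounding term (ii) under $\alpha_\diamond^t = \underline{\alpha}^t$. First I would invoke strong duality under optimistic thresholds (Lemma, optimistic case), which holds under Assumption \ref{asp-1} alone (no tightening assumption needed, since $\mathcal{F}_{\textup{opt}} \supseteq \mathcal{F}_q$ always contains the Slater point), with the corresponding Slater gap $\rho'$. Then, as in the pessimistic case, I would split the optimality gap into a ``policy regret'' piece handled by the Mirror Ascent / online-mirror-descent guarantee applied to the estimated $Q$-functions $\hat Q_h^t$, and a ``threshold drift'' piece $\sum_t \lambda_t^\top(\alpha_\diamond^t - \alpha)$ (plus its $\pi^\star$ counterpart) that measures the cost of using $\underline{\alpha}^t$ instead of the true $\alpha$. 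Since $|\underline{\alpha}_{i,h}^t - \alpha_{i,h}| \le 2\zeta_h^{t-1}(s_h^\star, a_h^\star)$ on the success event, this drift is controlled by $\sum_t \sum_h \zeta_h^{t-1}(s_h^\star,a_h^\star)$, and using the representative-pair count together with Lemma \ref{lem:4} (the martingale bound) plus a potential/pigeonhole argument over visit counts in the Growing-Window gives a term of order $\sqrt{mSAH^2T/\gamma}$ (the $1/\gamma$ coming from the window length $W_t = \lfloor \gamma t\rfloor$), along with a lower-order $\sqrt{m}SAH$ burn-in term. The $\sqrt{H^4(1+m\rho')^2 T}$ term arises exactly as in the pessimistic case from the Mirror Ascent regret with Lagrangian-augmented $Q$-values bounded by $O(H(1+m\rho'))$ and step size $\eta_t \sim \sqrt{\log A / (H^2(1+m\rho')^2 t)}$.

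To finish, I would collect the pieces into the same master inequality
\[
\mathcal{L}(\lambda) \lesssim \Big(\rho' + \tfrac{\|\lambda\|_2^2}{\rho'}\Big)\sqrt{mH^2T} + \sqrt{H^4(1+m\rho')^2 T} + \sqrt{mSAH^2T/\gamma} + \sqrt{S\mathcal{N}H^4T} + (\sqrt{\mathcal{N}}+H)H^2SA + \sqrt{m}SAH,
\]
and then extract the two bounds by the standard two choices of $\lambda$: setting $\lambda = 0$ yields the reward regret $\mathcal{R}_T(r)$, and setting $\lambda_i = \rho' e_i \mathbf{1}_{([\sum_k g_{i,k}]_+ \neq 0)}$ (so that $\mathcal{L}(\lambda) \ge \rho' \mathcal{R}_T(g) - (\text{regret})$) and rearranging yields $\mathcal{R}_T(g)$; the $(1 + 1/\rho')$ prefactor in the constraint bound comes from dividing through by $\rho'$ and retaining the additive $1$ from the reward-side terms that re-enter. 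One subtlety specific to the optimistic case that I would flag is that because $\underline{\alpha}^t \le \alpha$, the estimated feasible set is a relaxation of the true one, so the policies SPOT plays need not be feasible even approximately; the constraint violation bound therefore genuinely inherits an extra $\sqrt{mSAH^2T/\gamma}$ term reflecting how loose the optimistic relaxation is, and checking that this term is consistent with (and dominated in the right regime by) the stated $\tilde{\mathcal{O}}(\sqrt{T})$ rate, rather than the weaker $T^{2/3}$-type rates that a naive window analysis would give, is the main obstacle. The key leverage is that the Growing-Window with $W_t = \lfloor\gamma t\rfloor$ keeps a constant fraction of the history, so the per-pair effective sample size still grows linearly in $t$ up to the $1/\gamma$ factor, which is exactly what preserves the $\sqrt{T}$ scaling.
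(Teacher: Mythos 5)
Your high-level plan is right, but you have misplaced the one term that actually distinguishes the optimistic case from the pessimistic one, and this breaks both halves of the result. In the pessimistic proof the decomposition of $\mathcal{L}(\lambda)$ contains a term $\sum_{t}(y_t-\tilde y_t)^{\top}\lambda$ which is discarded because $y_t\le\tilde y_t$ there. Under optimistic thresholds this term is \emph{nonnegative} on the success event, since $y_t-\tilde y_t=(\alpha-\underline{\alpha}^t)+(\hat V_{\overline g}^{\pi_t}-V_g^{\pi_t})\ge 0$, so it cannot be dropped; the paper bounds it by Cauchy--Schwarz as $\sqrt{\sum_i(\sum_t(y_{t,i}-\tilde y_{t,i}))^2}\,\|\lambda\|_2$ and this is precisely where the $(1+1/\rho')\sqrt{mS\mathcal{N}H^4T}$ and $\sqrt{mSAH^2T/\gamma}$ contributions to $\mathcal{R}_T(g)$ come from. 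Your decomposition into (i), (ii), (iii) omits this term entirely, so your master inequality is not a valid upper bound on $\mathcal{L}(\lambda)$ in the optimistic case.

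Symmetrically, you route the $1/\gamma$ contribution through a ``threshold drift'' piece $\sum_t\lambda_t^{\top}(\underline{\alpha}^t-\alpha)$ inside the optimality-gap term and bound it by $\sum_{t,h}\zeta_h^{t}$. But since $\underline{\alpha}_i^t\le\alpha_i$ and $\lambda_{t,i}\ge 0$, that piece is $\le 0$ and simply drops -- this is exactly why Lemma \ref{lem:opt-opt} has no $\gamma$ dependence, unlike Lemma \ref{lem:pes-opt}. Bounding it by its absolute value instead injects a spurious $\sqrt{mSAH^2T/\gamma}$ term that is not multiplied by $\|\lambda\|_2$; it therefore survives the choice $\lambda=0$ and would leave a $1/\gamma$ term in $\mathcal{R}_T(r)$, contradicting the stated reward-regret bound, which is $\gamma$-free. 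The fix is to keep the sign information in both places: drop the drift term inside the optimality gap, retain and Cauchy--Schwarz the $(y_t-\tilde y_t)^{\top}\lambda$ term, and control the latter with Lemma \ref{lem:martingale} and Lemma \ref{lem:gw-sum} so that every $\gamma$-dependent quantity carries a $\|\lambda\|_2$ factor and appears only in the constraint-violation bound.
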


This theorem shows that, even under this aggressive stance, both reward regret and constraint violation remain sublinear over episodes $T$. 

\paragraph{Proof sketch} The proof idea is similar with the pessimistic counterpart, so we discuss the most significant differences below due to space limitation. A detailed proof is given in Appendix \ref{app:main theorem}.
\paragraph{Decomposition}
We can derive a decomposition similar to Equation \eqref{eq:decomposition}; however, unlike the pessimistic policy case, an additional term $\sqrt{\sum_{i=1}^{m} \left( \sum_{t=1}^{T'} (y_{t,i} - \tilde{y}_{t,i}) \right)^2} \|\lambda\|_2$ appears, which measures the total estimation error for all constraints. 
Applying Lemma \ref{lem:martingale}, we have that with probability $1-\delta$,
\begin{align*}
\sqrt{\sum_{i=1}^{m} \left( \sum_{t=1}^{T'} (y_{t,i} - \tilde{y}_{t,i}) \right)^2 }\leq \tilde{\mathcal{O}} \left( \sqrt{m S \mathcal{N} H^4 T} + (\sqrt{\mathcal{N}} + H+1) \sqrt{m}H^2SA+\sqrt{mSAH^2T/\gamma}\right).
\end{align*}

\paragraph{On-policy optimality} 
Optimistic policies also exhibit difference, as below.

\begin{lemma}[Policy optimality of optimistic policy]\label{lem:opt-opt}
    Conditioning on the success event, for any episode $t \in [T']$, if we adopt the optimistic thresholds, then
\[\sum_{t=1}^{T'} \tilde{f}_t + \lambda_t^{\top} \tilde{y}_t - f_{\pi^{\star}} - \lambda_t^{\top} y_{\pi^{\star}} \leq \tilde{\mathcal{O}}\left((1+m\rho')H^2\sqrt{T}\right).\]
\end{lemma}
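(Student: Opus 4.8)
The plan is to prove Lemma~\ref{lem:opt-opt} by the same template as the pessimistic version (Lemma~\ref{lem:pes-opt}), but exploiting that the optimistic thresholds $\underline{\alpha}_{i,h}^t$ are \emph{relaxations} of the true thresholds, so the optimal policy $\pi^\star$ itself remains feasible for the surrogate problem at every episode. I would start by recalling the mirror-ascent regret identity for the primal update (Line~6 of Algorithm~\ref{alg:spot}): summing the per-step KL-divergence telescoping bound over $h\in[H]$ and $t\in[T']$ yields $\sum_t \big(\tilde f_{\pi^\star}+\lambda_t^\top \tilde y_{\pi^\star} - \tilde f_t - \lambda_t^\top \tilde y_t\big)\le \tfrac{H\log A}{\eta_t}+\eta_t H \sum_t \|\hat Q_h^t\|_\infty^2$, where the comparator is $\pi^\star$; since the $Q$-values for the Lagrangian $\overline r^t + \lambda_t^\top \overline g^t$ are bounded by $O((1+m\rho')H)$ (using $\|\lambda_t\|_\infty\le\rho'$ from the projection set $\mathcal C$), choosing $\eta_t\asymp \sqrt{\log A/((1+m\rho')^2 H^2 T)}$ gives the claimed $\tilde{\mathcal O}((1+m\rho')H^2\sqrt T)$.

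The key structural point, which I would isolate as the first step, is the gap between the surrogate-optimal value and $f_{\pi^\star}+\lambda_t^\top y_{\pi^\star}$ evaluated at the \emph{true} model. Here the argument diverges from the pessimistic case: under optimistic thresholds $\underline{\alpha}^t\le \alpha$ with probability $1-\delta$ (this is the sandwich $\underline\alpha\le\hat\alpha\le\overline\alpha$ established in Section~\ref{sec:estimations}, together with $\alpha$ lying in that interval on the success event), so $\pi^\star$ is feasible for $\mathcal F_{\mathrm{opt}}$ and the optimistic surrogate's optimum is \emph{at least} $V_r^{\pi^\star}$ up to model-estimation error. Consequently the term $\sum_t(\tilde f_t + \lambda_t^\top \tilde y_t - f_{\pi^\star} - \lambda_t^\top y_{\pi^\star})$ splits into (i) the mirror-ascent regret against $\pi^\star$ just discussed, (ii) the optimism bonus on rewards/constraints $\overline r^t\ge r$, $\overline g^t$ vs.\ $g$, which is nonnegative in the right direction or absorbed by the value-difference lemmas (Lemma~\ref{lem:policy errors-truncated}), and (iii) crucially, the threshold slack $\lambda_t^\top(\alpha - \underline\alpha^t)$, which is \emph{nonnegative} and hence only helps the inequality — this is exactly why the optimistic bound in Lemma~\ref{lem:opt-opt} lacks the extra $\sqrt{m^2\rho^2 SAH^2T/\gamma}$ term that appears in Lemma~\ref{lem:pes-opt}, where the tightening $\overline\alpha^t\ge\alpha$ worked against us and had to be controlled via the estimator concentration of Theorem~\ref{thm:estimator2} and Lemma~\ref{lem:4}.

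So the second step is to verify that no threshold-estimation penalty leaks into this particular sum: write $\lambda_t^\top(\alpha-\underline\alpha^t)=\lambda_t^\top(\alpha-\hat\alpha^{t-1})+\lambda_t^\top\zeta^{t-1}\ge 0$ on the success event, discard it, and note that any residual coupling to $\|\lambda_t\|$ is already bounded by $\rho'$. The third step is bookkeeping: collect the transition-model and reward/cost estimation errors from applying the truncated-policy-evaluation guarantees (these contribute the $\sqrt{S\mathcal N H^4 T}$-type terms that are stated separately in Theorem~\ref{thm:opt}, not inside Lemma~\ref{lem:opt-opt}), so that what remains inside Lemma~\ref{lem:opt-opt} is purely the mirror-ascent term $\tilde{\mathcal O}((1+m\rho')H^2\sqrt T)$. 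The main obstacle I anticipate is the careful tracking of which $Q$-function the mirror ascent is run on — it is the \emph{optimistic} Lagrangian $\hat Q_h^t$ for $\overline r^t+\lambda_t^\top\overline g^t$ under $\overline p^t$, not the true one — so one must argue that the comparator term $\tilde f_{\pi^\star}+\lambda_t^\top\tilde y_{\pi^\star}$ (also under the optimistic model) upper-bounds $f_{\pi^\star}+\lambda_t^\top y_{\pi^\star}$ (under the true model) up to the optimism/estimation slack, which requires the occupancy-measure value-difference decomposition and the event $p\in$ confidence set; getting the horizon powers right in that step ($H^2$ vs.\ $H^{5/2}$) is where a sloppy bound would cost a factor and break comparability with~\cite{efroni2020exploration}.
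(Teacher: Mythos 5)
Your proposal is correct and follows essentially the same route as the paper: decompose the sum via the value-difference lemma into the mirror-ascent (OMD) regret term, the Bellman residual (nonpositive by optimism of $\overline{r}^t,\overline{g}^t,\overline{p}^t$), and the threshold-slack term $\sum_t\lambda_t^{\top}(\underline{\alpha}^t-\alpha)$, which is nonpositive on the success event since $\underline{\alpha}^t=\hat{\alpha}^{t-1}-\zeta^{t-1}\le\alpha$ and can be discarded — exactly the reason the $\sqrt{m^2\rho^2SAH^2T/\gamma}$ term from the pessimistic case disappears, as you correctly identify. The only cosmetic difference is that you re-derive the OMD bound and attribute the Bellman-residual control to Lemma~\ref{lem:policy errors-truncated}, whereas the paper invokes Lemma~\ref{lem:OMD} and Lemma~\ref{lem:policy estimation opt} directly.
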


\subsection{Safety-Performance Trade-Off}
\label{sec: comparisons}

Our theoretical analysis presents a {\it safety-performance trade-off}: policies aiming for strict adherence to constraints can incur larger reward regret, while policies that aggressively maximise rewards are more prone to constraint violations. Concretely, Theorems~\ref{thm:pes} and \ref{thm:opt} show that, compared to fixed-threshold baselines, pessimistic (strict) thresholds yield lower violations but higher reward regret, whereas optimistic thresholds exhibit the opposite behaviour.

\paragraph{Practical implications} 
One may wonder whether it is possible to tune a hyperparameter to leverage this trade-off without resorting to two extreme cases -- pessimistic or  optimistic. To this end, we may introduce a hyperparameter $\xi\in [0,1]$ to interpolate between these extremes. In particular, instead of setting $\alpha_{\diamond}^t=\overline{\alpha}^t$ or $\underline{\alpha}^t$, we  define a \emph{blended threshold}
$$\alpha_{\xi}^t=\xi\underline{\alpha}^t+(1-\xi)\overline{\alpha}^t$$
at episode $t$. Varying $\xi$ adjusts how `strict' or `relaxed' the thresholds are, thereby controlling the balance between safety and performance. While our theorems currently focus on the two boundary cases $\xi=0$ (pessimistic) and $\xi=1$ (optimistic), it is straightforward to extend to an analogous analysis for intermediate $\xi$ values. 

\section{Conclusion}

In this paper, we propose a novel primal-dual scheme for CMDPs with stochastic thresholds, which utilises samples from environmental interactions to estimate these thresholds. Building on this framework, we introduce SPOT, a primal-dual algorithm for episodic CMDPs with stochastic thresholds that is designed to balance regret minimisation with safety guarantees in an uncertain environment. Furthermore, we leverage either pessimistic or optimistic thresholds in SPOT, respectively. We theoretically prove that SPOT, under both threshold settings, achieves a regret bound of $\tilde{\mathcal{O}}(\sqrt{T})$ and a constraint violation of $\tilde{\mathcal{O}}(\sqrt{T})$ over $T$ episodes, which are in the same order as that of an approach with certain thresholds. Our findings reveal a subtle dance: pessimistic thresholds enforce stricter safety constraints that effectively reduce violations, albeit at the cost of increased reward regret; in the optimistic setting, a reverse situation is observed. This creates a `margin' between safety and performance without relying on prior threshold information. More fundamentally, our treatment of thresholds as stochastic variables opens up a new design space for developing more varied and sophisticated safety mechanisms. 



\bibliographystyle{plain} 
\bibliography{references} 

\clearpage


\appendix
\label{appendix}
\section{Summary of Notation}
In this section, we summarise the key notation used throughout the paper as follows.
\begin{table}[htp]\label{tab}
\centering
\caption{List of Notation}
\begin{tabular}{p{0.4\linewidth} p{0.5\linewidth}}
\toprule
\textbf{Description} & \textbf{Notation} \\
\midrule
State space & $\mathcal{S}$, with cardinality $S$ \\ 
Action space & $\mathcal{A}$, with cardinality $A$ \\ 
Number of constraints & $m$ \\
Time horizon & $H$ \\
Transition probability & $p_h(s' \mid s, a)$ \\
Initial state & $s_1 \in \mathcal{S}$ \\
Slater gap of $\pi^0$ & $\rho \;=\; (r^{\top}q^{\pi^{\star}}(p)-r^{\top}q^{\pi^0})/\min_{i \in [m]} \xi_i$ \\
Number of episodes & $T$ \\
Reward &  $\tilde{r}_h(s,a)$ with $\mathbb{E}[\tilde{r}_h(s,a)] = r_h(s,a)$ \\
Constraint & $\tilde{g}_{i,h}(s,a)$ with $\mathbb{E}[\tilde{g}_{i,h}(s,a)] = g_{i,h}(s,a)$ \\
Threshold & $\tilde{\alpha}_{i,h}(s,a)$ with $\mathbb{E}[\tilde{\alpha}_{i,h}(s,a)] = \alpha_{i,h}$ \\
Constraint functions & $y_t(s,a) = \alpha^t- Gq^{\pi_t}(p)$ \\
Policy & $\pi \in \Pi$\\
Value function & 
$V^\pi_{v,h}(s) = \mathbb{E}\Bigl[\sum_{h'=h}^{H} v_{h'}(s_{h'}, a_{h'}) \mid s_h = s\Bigr]$ \\
Value function (vector-valued) & 
$V^\pi_{v} = \bigl(V^\pi_{v,1}(s_1), \dots, V^\pi_{v,H}(s_1)\bigr)^\top \in \mathbb{R}^m$ \\
$Q$-value function & 
$Q^\pi_{v,h}(s,a) = \mathbb{E}\Bigl[\sum_{h'=h}^{H} v_{h'}(s_{h'}, a_{h'}) \mid s_h=s,\, a_h=a\Bigr]$ \\
Occupancy measure & $q_h^\pi(s,a;p) = \mathbb{P}[s_h = s,\, a_h = a\mid p,\pi]$ \\
Occupancy measure (matrix form) & $q=[q_h^\pi(s,a;p)]_{H\times S\times A}$ \\
Lagrangian under stochastic thresholds & $\mathcal{L}(q, \lambda):=r^{\top}q+\lambda^{\top}(G^{\top}q-\alpha)$ \\
Lagrangian of Algorithm \ref{alg:spot} & $\mathcal{L'}(q, \lambda):=\overline{r}^{\top}q+\lambda^{\top}(\overline{G}^{\top}q-\alpha_{\diamond})$\\
Optimal policy & $q^{\pi^\star} \in \arg\max_{q\in \mathcal{F}_q} \min_{\lambda \in \mathbb{R}_{\ge 0}^m} \mathcal{L}(q,\lambda)$ \\
Confidence level & $1 - \delta$ \\
Reward regret & $\mathcal{R}_T(r) = \sum_{t \in [T]} \left( V_r^\star - V_r^{\pi_t} \right)$ \\
Constraint violation & $\mathcal{R}_T(g) = \max_{i \in [m]} \sum_{t \in [T]} \left( \alpha_i - V_{g_i}^{\pi_t} \right)$ \\
Step size & $\eta_{\lambda} > 0$ (hyperparameter) \\
Dual domain & $\mathcal{C} = [0, \rho\textbf{1}]^m$ \\
Visitation counter & $N_{t,h}(s,a) = \sum_{l=1}^{t} \mathbf{1}\{s_h^l = s, a_h^l = a\}$ \\
Averages & 
$\hat{r}_{t,h}(s,a),\; \hat{g}_{t,h}(s,a),\; \hat{\alpha}_{t,h}(s,a)$\\
Exploration bonuses & 
$\phi_{h}^{r,t},\; \phi_{h}^{g,t},\; \zeta_{h}^{t}$ \\
Optimistic estimates & 
$\overline{r}_t,\; \overline{g}_t,\; \overline{\alpha}_t$ \\
Pessimistic estimates & 
$\underline{\alpha}^t$ \\
Scenario-dependent threshold & $\alpha_{\diamond}$\\
Success event & $\mathcal{E}$ \\
Truncated $Q$-function & 
$\hat{Q}^{t}_{h}(s,a) = \hat{Q}^{t}_{h}(\overline{r}_t,s,a) + \sum_i \lambda_{t,i} \hat{Q}^{t}_{h}(\overline{g}_{t,i},s,a)$ \\
Truncated value function & 
$\hat{V}^{t}_{v_t,h}(s) = \langle \pi_{t,h}(\cdot \mid s), \hat{Q}^{t}_{v_t,h}(s, \cdot) \rangle$ \\
\bottomrule
\end{tabular}
\end{table}

\section{Lagrangian Formulation of CMDPs}

This section discusses the Lagrangian formulation of CMDPs with and without stochastic thresholds.

\subsection{Lagrangian Formulation of CMDPs without Stochastic Thresholds}

Under the bandit feedback paradigm, traditional CMDPs usually assume that rewards and constraints are stochastic or adversarial. Here, we take the stochastic case as an example, that is, rewards and constraints are randomly drawn from some probability distributions $\mathcal{R}$ and $\mathcal{G}_i$ for all  $i\in [m]$. 
The immediate reward after taking an action $a$ at state $s$ is a random variable $\tilde{r}_h(s,a)\in [0,1]$ with expectation $\mathbb{E}_\mathcal{R}[\tilde{r}_h(s,a)]=r_h(s,a)$. Similarly, for every constraint $i\in [m]$, the immediate constraint $\tilde{g}_{i,h}(s,a)\in [0,1]$ for every state-action pair $(s,a)$ is a random variable with expectation $\mathbb{E}_{\mathcal{G}_i}[\tilde{g}_{i,h}(s,a)]=g_{i,h}(s,a)$.

Now, we explore strong duality in this setting. Occupancy measure is defined as in Section \ref{sec: New Setting}. We can reformulate the saddle-point problem into the following optimisation problem:
\begin{equation*}
\label{objective2}
\max_{q \in \Delta(\mathcal{M})} \;\min_{\lambda \in \mathbb{R}^m_{\ge 0}} \mathcal{L}(q, \lambda):=r^{\top}q+\lambda^{\top}(G^{\top}q-\alpha),
\end{equation*}
where $\Delta(\mathcal{M})$ is the set of valid occupancy measures, $r$ is the
reward vector, and $G$ is the constraint matrix. Without the occupancy measure, the feasible set can be characterised as:
$\mathcal{F}_\pi:=\left\{\pi \in \Pi| V_{g_i}^\pi\geq \alpha_i, \forall i \in [m]\right\}$. Combining the definition of occupancy measure, the feasible set is described by:
$$\mathcal{F}_q:=\{q\in\Delta(\mathcal{M})|\sum_{h,s,a}q_h^\pi(s,a;p)g_{i,h}(s,a)\geq \alpha_i, \forall i\in [m]\}.$$
In order to further discuss its strong duality, we make the following assumption, which is commonly used in the study of CMDPs \cite{altman1999constrained, efroni2020exploration,li2024faster,ying2022dual,ding2022convergence,ding2023provably,muller2024truly,paternain2022safe}.

\begin{customasp}{1}[Slater condition]
\label{asp-1-1}
There exists $\pi^0\in \Pi$ and $\xi \in \mathbb{R}_{>0}^m$ such that $V_{g_i}^{\pi^0} \ge \alpha_i +\xi_i$ for all $i\in[m]$. Set the Slater gap $\rho=(r^{\top}q^{\pi^{\star}}(p)-r^{\top}q^{\pi^0})/\min_{i\in[m]} \xi_i$. 
\end{customasp}

Then, one can prove that CMDPs satisfy strong duality as follows.

\begin{lemma}[Strong duality under fixed thresholds \cite{paternain2019constrained}]
    We have
\begin{equation*}
\min_{\lambda \in \mathbb{R}_{\geq 0}^m} 
\;\max_{\pi \in \Pi}
~\mathcal{L}(\pi, \lambda)
~
=~
\max_{\pi \in \Pi} 
\;\min_{\lambda \in \mathbb{R}_{\geq 0}^m}
~\mathcal{L}(\pi, \lambda).
\end{equation*}
and both optima are attained.
\end{lemma}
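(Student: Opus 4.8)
The plan is to reduce the assertion to a finite-dimensional \emph{affine} minimax problem by passing from the non-convex policy set $\Pi$ to the convex polytope of occupancy measures. First I would invoke the classical correspondence between Markov policies and occupancy measures (see \cite{altman1999constrained,Puterman1994}): the set $\Delta(\mathcal{M})$ coincides with the set of nonnegative flows $q=(q_h(s,a))_{h,s,a}$ obeying the Bellman flow equalities induced by $p$ and $s_1$, which is a nonempty, convex, compact polytope; and under $\pi\leftrightarrow q^\pi$ one has $V_r^\pi=r^\top q^\pi$ and $V_{g_i}^\pi=(G^\top q^\pi)_i$, so that $\mathcal{L}(\pi,\lambda)=r^\top q^\pi+\lambda^\top(G^\top q^\pi-\alpha)$ is \emph{bilinear} in the pair $(q,\lambda)$. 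Hence $\max_{\pi\in\Pi}(\cdot)=\max_{q\in\Delta(\mathcal{M})}(\cdot)$, and likewise for the nested $\min\max$ and $\max\min$ composites, so it suffices to prove the minimax identity in occupancy-measure variables.

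Second, I would confine the multiplier to a compact set using the Slater condition. Let $q^0=q^{\pi^0}$ be the occupancy measure of the Slater policy of Assumption~\ref{asp-1-1}, so $(G^\top q^0-\alpha)_i\ge\xi_i>0$ for all $i$. For every $\lambda\in\mathbb{R}^m_{\ge0}$,
\[
\max_{q\in\Delta(\mathcal{M})}\mathcal{L}(q,\lambda)\ \ge\ \mathcal{L}(q^0,\lambda)\ \ge\ r^\top q^0+\Big(\min_{i}\xi_i\Big)\,\|\lambda\|_1,
\]
while $\min_{\lambda\ge 0}\max_{q}\mathcal{L}(q,\lambda)\le\max_{q}\mathcal{L}(q,0)=\max_{q}r^\top q<\infty$; therefore the outer infimum is attained and is unchanged if $\mathbb{R}^m_{\ge0}$ is replaced by the compact box $\mathcal{C}=[0,B\mathbf 1]^m$ for any sufficiently large $B$ (one checks a posteriori that $B=\rho$ works, with $\rho$ the Slater gap). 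Symmetrically, for fixed $q$ the affine map $\lambda\mapsto\mathcal{L}(q,\lambda)$ has the same infimum over $\mathbb{R}^m_{\ge0}$ and over $\mathcal{C}$ when $G^\top q\ge\alpha$ (both equal $r^\top q$) and is unbounded below on each otherwise, so $\max_{q}\min_{\lambda\ge0}\mathcal{L}=\max_{q}\min_{\lambda\in\mathcal{C}}\mathcal{L}$ as well.

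Third, I would apply Sion's minimax theorem on $\Delta(\mathcal{M})\times\mathcal{C}$: $\mathcal{L}$ is affine, hence concave and upper semicontinuous, in $q$ over the convex compact set $\Delta(\mathcal{M})$, and affine, hence convex and lower semicontinuous, in $\lambda$ over the convex compact set $\mathcal{C}$, which gives $\min_{\lambda\in\mathcal{C}}\max_{q}\mathcal{L}=\max_{q}\min_{\lambda\in\mathcal{C}}\mathcal{L}$. Chaining this with the two reductions above yields $\min_{\lambda\ge0}\max_{\pi}\mathcal{L}(\pi,\lambda)=\max_{\pi}\min_{\lambda\ge0}\mathcal{L}(\pi,\lambda)$. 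For attainment: the dual optimum was produced in the second step; the primal optimum is attained because $q\mapsto\min_{\lambda\in\mathcal{C}}\mathcal{L}(q,\lambda)=r^\top q-B\sum_i(\alpha_i-(G^\top q)_i)_+$ is continuous (in fact piecewise-linear concave) on the compact polytope $\Delta(\mathcal{M})$, and for $B$ large its maximizer is feasible, hence corresponds to some $\pi^\star\in\Pi$. (Equivalently, one could argue the whole claim directly from linear-programming strong duality applied to $\max\{r^\top q: q\in\Delta(\mathcal{M}),\,G^\top q\ge\alpha\}$, dualizing only the inequality constraints.)

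The step I expect to be the main obstacle is not Sion's theorem — that is routine once every object is affine and compact — but the \emph{linearization}: $\mathcal{L}(\cdot,\lambda)$ is \emph{not} concave on $\Pi$, since $\pi\mapsto V_r^\pi$ is nonlinear, so no minimax theorem applies directly in policy space. The crux is therefore the occupancy-measure reparametrization, in particular verifying that $\{q^\pi:\pi\in\Pi\}$ is exactly the flow polytope (every policy induces a flow, and conversely every flow $q$ is realized by $\pi_h(a\mid s)\propto q_h(s,a)$ on its support, arbitrary elsewhere), together with the coercivity and compactness bookkeeping that lets one pass between $\mathbb{R}^m_{\ge0}$ and a compact $\mathcal{C}$ while preserving both optimal values and the attainment of the infimum over $\lambda$.
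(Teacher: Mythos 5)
Your proposal is correct in substance, but it takes a genuinely different route from the paper. The paper does not prove this lemma at all in the fixed-threshold case --- it cites \cite{paternain2019constrained} --- and for the analogous pessimistic/optimistic strong-duality lemmas it argues by verifying the six hypotheses of a black-box convex strong-duality theorem (Lemma~\ref{lem:convex} from \cite{beck2017first}): the occupancy-measure polytope is convex and compact, the objective and constraints are affine, a Slater point exists, and the primal and dual optima are attained, whence $f^{\star}=q^{\star}$. You instead give a self-contained minimax argument: the same occupancy-measure linearisation (which is indeed the crux --- $\mathcal{L}(\cdot,\lambda)$ is not concave on $\Pi$ itself), followed by coercivity of the dual function via the Slater point to compactify $\lambda$ to a box $\mathcal{C}$, Sion's (or von Neumann's) minimax theorem on $\Delta(\mathcal{M})\times\mathcal{C}$, and an exact-penalty argument to pass back from $\mathcal{C}$ to $\mathbb{R}^m_{\ge 0}$ on the $\max\min$ side. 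Your route buys an explicit, elementary proof including attainment and an explicit dual bound $B=\rho$ (consistent with the paper's Lemmas~\ref{lem:F.2} and~\ref{Lem:F.3}); the paper's route is shorter because it delegates everything to a cited theorem. One slip to fix: in your second step you assert that for infeasible $q$ the map $\lambda\mapsto\mathcal{L}(q,\lambda)$ is ``unbounded below on each'' of $\mathbb{R}^m_{\ge0}$ and $\mathcal{C}$ --- on the compact box the infimum is the finite penalized value $r^\top q - B\sum_i(\alpha_i-(G^\top q)_i)_+$, so the identity $\max_q\min_{\lambda\ge0}\mathcal{L}=\max_q\min_{\lambda\in\mathcal{C}}\mathcal{L}$ must be justified by the exact-penalty property for $B\ge\rho$ (which you do invoke, implicitly, in the attainment paragraph), not by unboundedness.
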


\subsection{Lagrangian Formulation of CMDPs with Stochastic Thresholds}
To rigorously study the implications of stochastic thresholds, we explore two distinct boundary scenarios: \emph{pessimistic} and \emph{optimistic}. Under the pessimistic scenario, a policy $\pi$ is defined to be feasible if it satisfies constraints under a conservative threshold $\overline{\alpha}$: $V_{g}^{\pi} \geq \overline{\alpha}$.
We denote the feasibility set for pessimistic policies as:
\begin{equation*}
    \mathcal{F}_{\textup{pes}} := \{ q \in \Delta(\mathcal{M}) \mid \sum_{h,s,a} q_h^\pi(s,a;p)g_{i,h}(s,a) \geq \overline{\alpha}_i, \forall i \in [m] \}.
\end{equation*}
In contrast, under the optimistic scenario, feasibility is evaluated against a more lenient threshold $\underline{\alpha}$: $V_{g}^{\pi} \geq \underline{\alpha}$.
The corresponding feasibility set for optimistic policies is defined as:
\begin{equation*}
    \mathcal{F}_{\textup{opt}} := \{ q \in \Delta(\mathcal{M}) \mid \sum_{h,s,a} q_h^\pi(s,a;p)g_{i,h}(s,a) \geq \underline{\alpha}_i, \forall i \in [m] \}.
\end{equation*}
From these definitions, it follows that the pessimistic and optimistic scenarios define the boundaries for all feasible policies, yielding:
$\mathcal{F}_{\textup{pes}} \subseteq \mathcal{F}_q \subseteq \mathcal{F}_{\textup{opt}}$ with probability $1 - \delta$ due to Theorem \ref{thm:estimator2}.

Our theoretical analysis follows Assumption \ref{asp-1-1}. Building on this standard assumption, we further require that the pessimistic tightening of the thresholds does not eliminate the existence of such a strictly feasible point. Similar “conservative tightening” assumptions have appeared in recent studies \cite{bura2022dope}.
\begin{customasp}{2}[Slater condition]
\label{asp-2-1}
For $i\in[m]$, we have $\overline{\alpha}_i-\alpha_i<\xi_i$. 
\end{customasp}
Then we have the following lemmas:
\begin{customlem}{1}[Strong duality under pessimistic thresholds]
    Under Assumptions \ref{asp-1-1} and \ref{asp-2-1}, we have $\min_{\lambda}\max_{q \in \mathcal{F}_{\textup{pes}}} \mathcal{L}(q, \lambda) = \max_{q \in \mathcal{F}_{\textup{pes}}} \min_{\lambda}\mathcal{L}(q, \lambda).$
\end{customlem}
\begin{proof}
Under Assumption Assumptions \ref{asp-1-1} and \ref{asp-2-1}, we solve the following convex optimisation problem:
\begin{equation}\label{eq:convex}
    q^{\pi^{\star}}\in \arg\max_{q \in \mathcal{F}_\text{pes}} r^{\top} q \quad \text{s.t.} \quad G^\top q \geq \overline{\alpha},
\end{equation}
which satisfies all conditions in Assumption \ref{assumption:convex}. Specifically:
\begin{enumerate}
    \item $\mathcal{F}_\text{pes}$ is a polytope and hence convex.
    \item The objective function \(f(\cdot) = -r^{\top}(\cdot)\) is affine and therefore convex.
    \item Each constraint \(y_i(\cdot) := \overline{\alpha}_i - g_i^{\top}(\cdot)\) is affine and therefore convex.
    \item Equation \eqref{eq:convex} admits at least one feasible solution, and since $\mathcal{F}_\text{pes}$ is compact while $f(\cdot)$ is continuous, an optimal solution exists.
    \item A Slater point exists.
    \item All dual problems admit an optimal solution because the domain is compact and $f(\cdot)+\lambda^{\top}y(\cdot)$ is continuous.
\end{enumerate}
Thus, the result immediately follows from Lemma \ref{lem:convex}.
\end{proof}

\begin{customlem}{2}[Strong duality under optimistic thresholds]
    Under Assumption \ref{asp-1-1}, we have $\min_{\lambda}\max_{q \in \mathcal{F}_{\textup{pes}}} \mathcal{L}(q, \lambda) = \max_{q \in \mathcal{F}_{\textup{pes}}} \min_{\lambda}\mathcal{L}(q, \lambda).$ 
\end{customlem}
The proof is similar to that under the pessimistic thresholds.



\section{Estimators of Thresholds}\label{app:estimation}

We propose two types of threshold estimators, a Monte Carlo estimator and a Growing-Window (GW) estimator as discussed in the main text. 

\paragraph{Monte Carlo estimator}
For every constraint $i \in [m]$, state $s$, action $a$, step $h\in[H]$ and episode $t\in[T]$, let $(s_h^l, a_h^l)$ denotes the state-action pair visited in episode $l$ at step $h$; $(s_h^l, a_h^l, s_{h+1}^l)$ represents the state-action pair $(s_h^l, a_h^l)$ is visited and the environment evolves to next state $s_{h+1}^l$ at step $h$ in episode $l$; $\mathbf{1}_{X}$ is the indicator function of $X$; and $N_h^t(s,a)=\sum_{l=1}^{t-1}\mathbf{1}_{\{s_h^l=s,a_h^l=a\}}$ is the total number of visits to the pair $(s,a) \in \mathcal{S} \times \mathcal{A}$ at step $h$ before episode $t\in [T]$. First, we give the empirical averages of the thresholds as follows:
\begin{equation*}
\hat{\alpha}_{i,h}^t(s,a) := \frac{\sum_{l=1}^t \tilde{\alpha}_{i,h}^l(s,a)\,\mathbf{1}_{\{s_h^l=s,a_h^l=a\}}}{\max\{1, N_h^{t}(s,a)\}}.
\end{equation*}

Then, we have the following lemmas. 

\begin{lemma}[Confidence interval of MC thresholds]\label{lem:confidence interval} 
    Given any confidence parameter $\delta \in (0, 1)$, constraint $i \in [m]$, step $h \in [H]$, episode $t \in [T]$ and state-action pair $(s,a) \in \mathcal{S} \times \mathcal{A}$, with probability at least $1 - \delta$, it holds
    \[
\left| \hat{\alpha}_{i,h}^t(s,a) - \alpha_{i,h} \right| \leq \iota_h^t(s,a),
\]
\noindent
where $\iota_h^t(s,a) := \sqrt{\frac{\ln\left(\frac{2}{\delta}\right)}{2N_t(s,a)}}$.
\end{lemma}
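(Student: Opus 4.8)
The statement to prove is Lemma~\ref{lem:confidence interval}, a standard Hoeffding-type confidence interval for the Monte Carlo threshold estimator $\hat{\alpha}_{i,h}^t(s,a)$. The plan is to condition on the visitation pattern, recognise the estimator as an average of bounded i.i.d.\ (or martingale-difference) increments, apply a Hoeffding bound, and then union bound over all indices $(i,h,t,s,a)$.

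First I would fix a constraint $i$, step $h$, state-action pair $(s,a)$, and episode $t$, and condition on the event that $(s,a)$ is visited at step $h$ in a prescribed set of past episodes; write $n := \max\{1, N_h^t(s,a)\}$ for the corresponding visit count. On this event, $\hat{\alpha}_{i,h}^t(s,a)$ is the empirical mean of the $n$ observed noisy signals $\tilde{\alpha}_{i,h}^l(s,a)$ corresponding to those visits. Each such signal is drawn from $\mathcal{L}_{i,h}$ with mean $\alpha_{i,h}$ and is supported on $[0,1]$, hence is $1/2$-subgaussian after centering. The observations are conditionally independent given the visitation history (the threshold noise is drawn independently of the transition and of the reward/cost signals, per the model in Section~\ref{sec: New Setting}); more carefully, one treats the centered signals as a martingale difference sequence adapted to the natural filtration and invokes the Azuma–Hoeffding inequality. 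Either way, for any fixed $n$,
\[
\mathbb{P}\!\left(\left|\hat{\alpha}_{i,h}^t(s,a) - \alpha_{i,h}\right| > \sqrt{\tfrac{\ln(2/\delta')}{2n}}\ \Big|\ N_h^t(s,a)=n\right) \le \delta'.
\]
Since this bound holds for every realisation of the count, it holds unconditionally with the same probability $\delta'$; the $\max\{1,\cdot\}$ in the denominator only makes the bound looser (when $n=0$ the estimator is $0$ by convention and the interval has half-width $\sqrt{\ln(2/\delta')/2}$, which trivially contains $|0-\alpha_{i,h}|\le 1$ once $\delta'$ is small, though strictly one should note the statement as written with $N_t(s,a)$ tacitly assumes a visit has occurred — I would align the notation with $\iota_h^t(s,a)=\sqrt{\ln(2/\delta')/(2\max\{1,N_h^t(s,a)\})}$).

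Next I would take a union bound over all $i\in[m]$, $h\in[H]$, $t\in[T]$, and $(s,a)\in\mathcal{S}\times\mathcal{A}$, of which there are at most $mHTSA$ combinations. Setting $\delta' = \delta/(mHTSA)$, or more conveniently absorbing constants so that $\ln(2/\delta') \le 2\ln(mSAHT/\delta)$ up to the numerical slack already present in Theorem~\ref{thm:estimator2}, yields that with probability at least $1-\delta$ the bound $|\hat{\alpha}_{i,h}^t(s,a)-\alpha_{i,h}|\le\iota_h^t(s,a)$ holds simultaneously for all indices, which is the claim. (For Theorem~\ref{thm:estimator2}, the GW version, the only change is that the sum runs over the window $[\max\{1,t-W_t+1\},t]$ rather than $[1,t]$, and one uses $N_h^{(W_t),t-1}$; the martingale argument is identical since the windowed average is still a mean of conditionally-centered bounded increments, and the factor $4$ inside the log there accommodates the extra slack and the use of $N^{t-1}$ in place of $N^{t}$.)

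The main obstacle is the subtlety in handling the random number of summands together with the martingale structure: the count $N_h^t(s,a)$ is itself a stopping-time-like quantity determined by the policies and transitions, so one cannot naively treat the $n$ observed signals as a fixed-length i.i.d.\ sample. The clean way around this is either (i) a stopping-time / optional-skipping argument that applies Azuma–Hoeffding to the stopped martingale, or (ii) the ``peeling/fixed-$n$ then union over $n$'' device — bound the event for each possible count $n\in\{0,1,\dots,T\}$ separately and union bound over $n$ as well, which costs only an extra $\ln T$ factor and is already hidden in the $\tilde{O}$ / polylogarithmic bookkeeping. Everything else — boundedness in $[0,1]$, the $1/2$-subgaussian constant, the simplex bookkeeping — is routine.
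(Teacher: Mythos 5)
Your proposal is correct and follows essentially the same route as the paper: a Hoeffding bound on the empirical average of the bounded threshold signals at the fixed tuple $(i,h,t,s,a)$, with $\iota_h^t(s,a)$ obtained by inverting the tail probability. You are in fact more careful than the paper's own two-line proof, which silently treats the visit count $N_t(s,a)$ as deterministic (your optional-skipping / union-over-$n$ remark addresses a real gap the paper glosses over); note also that the union bound over all indices that you append is deferred in the paper to the subsequent theorem --- the lemma itself is stated per-tuple --- so that part is not needed here.
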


\begin{proof} 
Focus on specifics $i \in [m]$, $t \in [T]$ and $(s,a) \in \mathcal{S} \times \mathcal{A}$. By Hoeffding's inequality and noticing that threshold values are bounded in $[0, H]$, it holds that:
\[
\mathbb{P}\left[\left|\hat{\alpha}_{i,h}^t(s,a) - \alpha_{i,h}  \right| \geq \frac{c}{N_t(s,a)}\right] \leq 2\exp\left(-\frac{2c^2}{N_t(s,a)}\right).
\]
Setting $\delta = 2\exp\left(-\frac{2c^2}{N_t(s,a)}\right)$ and solving to find a proper value of $c$ gives the result for the threshold function.
\end{proof}

\begin{theorem}[Asymptotically consistency of MC threshold empirical estimator]\label{lem:estimation error-thre}
    Given a confidence parameter $\delta \in (0,1)$, with probability at least $1 - \delta$, the following holds for every constraint $i \in [m]$, step $h \in [H]$, episode $t \in [T]$, and state-action pair $(s,a) \in \mathcal{S} \times \mathcal{A}$:
\[
\left| \hat{\alpha}_{i,h}^t(s,a) - \alpha_{i,h} \right| \leq \zeta_h^t(s,a),
\]
where $\zeta_h^t(s,a) := \min \left\{ 1, \sqrt{\frac{4 \ln (mSAHT/ \delta)}{\max \{1, N_t(s,a) \}}} \right\}$.
\end{theorem}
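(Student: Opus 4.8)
The plan is to lift the per-tuple confidence interval of Lemma~\ref{lem:confidence interval} to a statement that holds simultaneously over all $(i,h,t,s,a)$ by a union bound, and then reconcile the constants with the stated radius $\zeta_h^t(s,a)$. First I would fix a constraint $i\in[m]$, a step $h\in[H]$, an episode $t\in[T]$ and a state–action pair $(s,a)\in\mathcal{S}\times\mathcal{A}$, and invoke Lemma~\ref{lem:confidence interval} with the confidence parameter $\delta$ replaced by $\delta':=\delta/(mSAHT)$, which gives that with probability at least $1-\delta/(mSAHT)$ one has $|\hat{\alpha}_{i,h}^t(s,a)-\alpha_{i,h}|\le \sqrt{\ln(2mSAHT/\delta)/(2N_t(s,a))}$. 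The one point meriting care is that $N_t(s,a)$ is itself random: since the noisy signals $\tilde{\alpha}_{i,h}^l(s,a)$ come from the stationary, exogenous family $\mathcal{L}_{i,h}$ and are drawn independently of the state transitions, conditioning on which episodes visited $(s,a)$ at step $h$ (hence on the value of $N_t(s,a)$) leaves the corresponding samples i.i.d.\ with mean $\alpha_{i,h}$, so Hoeffding applies conditionally on each $\{N_t(s,a)=n\}$ and the unconditional bound follows by averaging over $n$ — this is exactly what Lemma~\ref{lem:confidence interval} already encodes. (Equivalently, one may first union Hoeffding over the at most $T$ possible count values and only then over the tuples, which changes only the logarithmic factor.)

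Next I would take a union bound over all $mSAHT$ choices of $(i,h,t,s,a)$, so that on an event of probability at least $1-\delta$ the inequality $|\hat{\alpha}_{i,h}^t(s,a)-\alpha_{i,h}|\le \sqrt{\ln(2mSAHT/\delta)/(2N_t(s,a))}$ holds for all of them at once. To match the stated $\zeta_h^t(s,a)$ I would then do three small bookkeeping steps: (i) replace $N_t(s,a)$ by $\max\{1,N_t(s,a)\}$, harmless since it changes nothing when $N_t(s,a)\ge1$, and when $N_t(s,a)=0$ the estimator equals $0$ so $|\hat{\alpha}_{i,h}^t(s,a)-\alpha_{i,h}|=\alpha_{i,h}\le1$; (ii) absorb the factor $2$ and the additive $\ln 2$ into the constant via $\ln(2x)\le 8\ln x$ for $x=mSAHT/\delta$ (valid whenever $mSAHT\ge 2$), upgrading the radius to $\sqrt{4\ln(mSAHT/\delta)/\max\{1,N_t(s,a)\}}$; and (iii) intersect with the trivial bound $|\hat{\alpha}_{i,h}^t(s,a)-\alpha_{i,h}|\le1$, which holds because $\hat{\alpha}_{i,h}^t(s,a)$ is an average of $[0,1]$-valued samples and $\alpha_{i,h}\in[0,1]$. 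Taking the smaller of the two upper bounds produces precisely $\zeta_h^t(s,a)=\min\{1,\sqrt{4\ln(mSAHT/\delta)/\max\{1,N_t(s,a)\}}\}$, finishing the argument.

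I do not anticipate a genuine obstacle: this is a standard pointwise-Hoeffding-plus-union-bound argument with constant chasing. The only place requiring a moment of thought is the randomness of the visitation count $N_t(s,a)$, which is why I would spell out that the exogenous threshold noise is independent of the trajectory (or, alternatively, carry out the extra union over the at most $T$ possible count values). The same skeleton, with the window-restricted counts $N_h^{(W_t),t-1}(s,a)$ in place of $N_t(s,a)$, is what underlies Theorem~\ref{thm:estimator2}.
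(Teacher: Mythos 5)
Your proposal is correct and follows essentially the same route as the paper: apply the per-tuple Hoeffding confidence interval (Lemma~\ref{lem:confidence interval}) with $\delta'=\delta/(mSAHT)$, union-bound over all $mSAHT$ tuples together with the possible values of the random count $N_t(s,a)$, and absorb the resulting constants into the radius $\zeta_h^t(s,a)$ via the trivial bound $|\hat{\alpha}_{i,h}^t(s,a)-\alpha_{i,h}|\le 1$. Your write-up is in fact more explicit than the paper's about the constant-chasing and about why the randomness of $N_t(s,a)$ is harmless, but the argument is the same.
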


\begin{proof}
    From Lemma \ref{lem:confidence interval}, given $\delta' \in (0, 1)$, we have for any constraint $i\in [m]$, step $h \in [H]$, episode $t \in [T]$ and state-action pair $(s,a) \in \mathcal{S} \times \mathcal{A}$:
\[
\mathbb{P}\left[ \left| \hat{\alpha}_{i,h}^t(s,a) - \alpha_{i,h} \right| \leq \iota_h^t(s,a) \right] \geq 1 - \delta'.
\]
Now, we are interested in the intersection of the aforementioned events:
\[
\mathbb{P}\left[ \bigcap_{s,a,m,h,t} \left\{ \left| \hat{\alpha}_{i,h}^t(s,a) - \alpha_{i,h} \right| \leq \iota_h^t(s,a)\right\} \right].
\]
Thus, we have:
\begin{align*}
   &\mathbb{P}\left[\bigcap_{s,a,m,h,t} \left\{ \left| \hat{\alpha}_{i,h}^t(s,a) - \alpha_{i,h} \right| \leq \iota_h^t(s,a) \right\} \right]\\
   =& 1 - \mathbb{P}\left[ \bigcup_{s,a,m,h,t}  \left\{ \left| \hat{\alpha}_{i,h}^t(s,a) - \alpha_{i,h} \right| \leq \iota_h^t(s,a) \right\}^c \right]\\
   =& 1 - \sum_{s,a,m,h,t}\mathbb{P}\left[ \left\{ \left| \hat{\alpha}_{i,h}^t(s,a) - \alpha_{i,h} \right| \leq \iota_h^t(s,a) \right\}^c \right]\\
   \geq& 1 - mSAHT \delta',
\end{align*}
where the inequality holds by union bound. Noticing that $\alpha_{i,h}^t(s,a) \leq H$, we substitute $\delta'$ with $\delta := \frac{\delta'}{mSAHT}$ in $\iota_t(s,a)$ with an additional union bound over the possible values of $N_t(s,a)$. This yields $\zeta_t(s,a)$, and concludes the proof.
\end{proof}

\paragraph{Growing-Window estimator}

For each constraint $i \in [m]$, state-action pair $(s,a)\in \mathcal{S}\times\mathcal{A}$, step $h\in[H]$, and episode $t\in[T]$, let the sequence $\{W_t\}_{t=1}^T$ denote a strictly increasing window-size parameter (e.g., $W_t=\lfloor \gamma t\rfloor$ for some constant $\gamma \in (0,1]$). Define
\[
N_h^{(W_t),t-1}(s,a)=\sum_{l=\max\{1,t-W_t+1\}}^{t-1}\mathbf{1}_{\{s_h^l=s,a_h^l=a\}}
\]
as the number of visits to $(s,a)$ at step $h$ during the most recent $W_t$ episodes before episode $t$.

Then, the Growing-Window empirical threshold estimator is:
\begin{equation*}
\hat{\alpha}_{i,h}^{(W_t),t}(s,a)=\frac{\sum_{l=\max\{1,t-W_t+1\}}^t\tilde{\alpha}_{i,h}^l(s,a)\mathbf{1}_{\{s_h^l=s,a_h^l=a\}}}{\max\{1,N_h^{(W_t),t}(s,a)\}}.
\end{equation*}

We then have the following theory.

\begin{lemma}[Confidence interval of GW thresholds]\label{lem:gw_confidence}
Given any $\delta\in(0,1)$, constraint $i\in[m]$, step $h\in[H]$, episode $t\in[T]$, and state-action pair $(s,a)\in \mathcal{S}\times \mathcal{A}$, with probability at least $1-\delta$, we have:
\[
\left|\hat{\alpha}_{i,h}^{(W_t),t}(s,a)-\alpha_{i,h}\right|\leq\iota_h^{(W_t),t}(s,a),
\]
where $\iota_h^{(W_t),t}(s,a)=\sqrt{\frac{\ln\left(\frac{2}{\delta}\right)}{2\max\{1,N_h^{(W_t),t-1}(s,a)\}}}$.
\end{lemma}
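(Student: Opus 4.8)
The plan is to mirror the Monte Carlo argument (Lemma \ref{lem:confidence interval}) but with the sliding-window sample average, being careful about the fact that the window endpoints and the visitation count $N_h^{(W_t),t-1}(s,a)$ are themselves random. First I would fix a constraint $i\in[m]$, step $h\in[H]$, episode $t\in[T]$, and state-action pair $(s,a)$, and condition on the random set of episodes $\mathcal{I}=\{l: \max\{1,t-W_t+1\}\le l\le t,\ s_h^l=s,\ a_h^l=a\}$ and on its cardinality $n:=N_h^{(W_t),t}(s,a)$ (or $N_h^{(W_t),t-1}(s,a)$, depending on which count the statement references; note the estimator is defined with $N_h^{(W_t),t}$ while the bound uses $N_h^{(W_t),t-1}$, a discrepancy I would resolve by indexing consistently). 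The key observation is that, conditioned on the event that a particular state-action pair is visited at those episodes, the threshold samples $\tilde{\alpha}_{i,h}^l(s,a)$ are still i.i.d.\ draws from $\mathcal{L}_{i,h}$ with mean $\alpha_{i,h}$ and support in $[0,1]$; the data-dependence of the window and the visit pattern does not corrupt the sample values themselves.

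The central step is a Hoeffding (or Azuma) bound applied to the windowed average. Because the number of terms $n$ is random, I would either (i) invoke a conditional Hoeffding inequality given $n$, which yields $\mathbb{P}[|\hat{\alpha}_{i,h}^{(W_t),t}(s,a)-\alpha_{i,h}|\ge \epsilon \mid n] \le 2\exp(-2n\epsilon^2)$, and then take expectations over $n$ so the bound survives unconditionally; or (ii) phrase it as a martingale-difference/stopping-time argument so that the sum over the (random) windowed visits is a uniformly integrable martingale and apply a maximal Azuma–Hoeffding inequality. Setting $\delta=2\exp(-2 n\,\epsilon^2)$ with $n=\max\{1,N_h^{(W_t),t-1}(s,a)\}$ and solving for $\epsilon$ gives exactly $\iota_h^{(W_t),t}(s,a)=\sqrt{\ln(2/\delta)/(2\max\{1,N_h^{(W_t),t-1}(s,a)\})}$, which is the claimed bound. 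The $\max\{1,\cdot\}$ in the denominator handles the degenerate case of no visits, for which the inequality holds trivially since $|\hat{\alpha}-\alpha_{i,h}|\le 1 \le \iota$ whenever $\ln(2/\delta)\ge 2$, and otherwise is vacuous after the min-with-one truncation that appears in the downstream Theorem \ref{thm:estimator2}.

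The main obstacle I anticipate is making the randomness of the window boundaries and the visit count rigorous rather than hand-waved: a naive application of Hoeffding to ``$N$ i.i.d.\ samples'' is not valid when $N$ is itself a random variable measurable with respect to the same process. The cleanest fix is to define, for each fixed integer $k\ge 1$, the event that the $k$-th windowed visit to $(s,a)$ at step $h$ occurs, and bound the deviation of the average of the first $k$ such samples uniformly in $k$ by a union bound over $k\le t$ (absorbed into the logarithm), exactly as the MC proof already does its ``additional union bound over the possible values of $N_t(s,a)$.'' Once that scaffolding is in place the computation is routine. A minor secondary point is to double-check whether the statement intends the bound to hold simultaneously for all $(i,h,t,s,a)$ (as in Theorem \ref{thm:estimator2}) or pointwise (as Lemma \ref{lem:confidence interval} is stated); the lemma as written is pointwise, so no extra union bound is needed here, and the global version is obtained later by the same union-bound step that converts Lemma \ref{lem:confidence interval} into Theorem \ref{lem:estimation error-thre}.
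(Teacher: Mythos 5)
Your proposal takes essentially the same route as the paper, whose entire proof is to apply Hoeffding's inequality to the windowed sample average and solve $\delta=2\exp\bigl(-2\epsilon^2 N_h^{(W_t),t-1}(s,a)\bigr)$ for $\epsilon$. You are in fact more careful than the paper: the issues you flag --- the invalidity of naively applying Hoeffding with a random sample count (which the paper only addresses implicitly via the later union bound over possible values of $N$ when passing to Theorem~\ref{thm:estimator2}) and the $t$ versus $t-1$ indexing mismatch between the estimator's denominator and the bound --- are genuine gaps in the paper's two-line argument, not in yours.
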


\begin{proof}
For fixed indices $i,h,t,(s,a)$, applying Hoeffding's inequality and noting threshold values are bounded within $[0,H]$, we obtain:
\[
\mathbb{P}\left[\left|\hat{\alpha}_{i,h}^{(W_t),t}(s,a)-\alpha_{i,h}\right|\geq\epsilon\right]\leq2\exp\left(-2\epsilon^2N_h^{(W_t),t-1}(s,a)\right).
\]
Setting $\delta=2\exp\left(-2\epsilon^2N_h^{(W_t),t-1}(s,a)\right)$ and solving for $\epsilon$ yields the stated confidence bound.
\end{proof}

\begin{customthm}{1}[Asymptotic consistency of GW threshold estimator]\label{lem:gw_consistency}
Given $\delta\in(0,1)$, with probability at least $1-\delta$, the following holds uniformly for each constraint $i\in[m]$, step $h\in[H]$, episode $t\in[T]$, and state-action pair $(s,a)\in\mathcal{S}\times\mathcal{A}$:
$$
\left|\hat{\alpha}_{i,h}^{(W_t),t}(s,a)-\alpha_{i,h}\right|\leq\zeta_h^{(W_t),t}(s,a),
$$
where $\zeta_h^{(W_t),t}(s,a)=\min\left\{1,\sqrt{\frac{4\ln(mSAHT/\delta)}{\max\{1,N_h^{(W_t),t-1}(s,a)\}}}\right\}$.
\end{customthm}

\begin{proof}
By Lemma \ref{lem:gw_confidence}, for any given confidence level $\delta'$, we have:
\[
\mathbb{P}\left[\left|\hat{\alpha}_{i,h}^{(W_t),t}(s,a)-\alpha_{i,h}\right|\leq\iota_h^{(W_t),t}(s,a)\right]\geq1-\delta'.
\]
Taking a union bound over all possible choices of $i\in[m]$, $h\in[H]$, $t\in[T]$, and $(s,a)\in\mathcal{S}\times\mathcal{A}$, we have:
\begin{align*}
&\mathbb{P}\left[\bigcap_{i,h,t,s,a}\left\{\left|\hat{\alpha}_{i,h}^{(W_t),t}(s,a)-\alpha_{i,h}\right|\leq\iota_h^{(W_t),t}(s,a)\right\}\right]\geq1-mSAHT\delta'.
\end{align*}
Letting $\delta'=\frac{\delta}{mSAHT}$ and substituting into $\iota_h^{(W_t),t}(s,a)$, we derive the stated uniform bound with probability at least $1-\delta$. This completes the proof.
\end{proof}
\begin{remark}
When there is no ambiguity, we simplify the notation $\hat{\alpha}_{i,h}^{(W_t),t}(s,a)$ to $\hat{\alpha}_{i,h}^{t}(s,a)$ and $\zeta_h^{(W_t),t}(s,a)$ to $\zeta_h^{t}(s,a)$. In this paper, we employ a linear Growing-Window scheme specified by $W_t=\lfloor\gamma t\rfloor$, for a fixed constant $\gamma\in (0,1]$. In particular, when $\gamma=1$, we recover the classic MC method as a special case.
\end{remark}

Furthermore, we consider both pessimistic and optimistic estimators for thresholds under conservative and aggressive explorations, respectively as below
\begin{align*}
\overline{\alpha}_{i,h}^t(s,a):=\hat{\alpha}_{i,h}^{t-1}(s,a) + \zeta_h^{t-1}(s,a),\tag{Pessimistic}\\
\underline{\alpha}_{i,h}^t(s,a):=\hat{\alpha}_{i,h}^{t-1}(s,a) - \zeta_h^{t-1}(s,a).\tag{Optimistic}
\end{align*}

 We then give the empirical averages of the reward, constraints and transition probabilities as follows: 
 \begin{subequations}\label{empirical-p}
\begin{align}
    \hat{r}_h^t(s,a) &:= \frac{\sum_{l=1}^t \tilde{r}_{h}^l(s,a)\,\mathbf{1}_{\{s_h^l=s,a_h^l=a\}}}{\max\{1, N_h^{t}(s,a)\}}, \\
 \hat{g}_{i,h}^t(s,a) &:= \frac{\sum_{l=1}^t \tilde{g}_{i,h}^l(s,a)\,\mathbf{1}_{\{s_h^l=s,a_h^l=a\}}}{\max\{1, N_h^{t}(s,a)\}}, \\
 \hat{p}_h^t(s'\mid s,a) &:= \frac{\sum_{l=1}^t \mathbf{1}_{\{s_h^l=s,a_h^l=a,s_{h+1}^l=s'\}}}{\max\{1, N_h^{t}(s,a)\}}.
\end{align}
\end{subequations}
Next, we define optimistic estimators for the reward, constraints and transition probabilities as below,
\begin{subequations}\label{opt-p}
\begin{align}
\overline{r}_h^t(s,a)&:=\hat{r}_h^{t-1}(s,a)+\phi_h^{t-1}(s,a),\\
\overline{g}_{i,h}^t(s,a)&:=\hat{g}_{i,h}^{t-1}(s,a)+\phi_h^{t-1}(s,a), \\
\overline{p}_h^t(s'|s,a)&:=\hat{p}_h^{t-1}(s'|s,a).
\end{align}
\end{subequations}
The bonus term $\phi_h^t$ combines the uncertainties arising from both reward and transition estimations at step $h$ in episode $t$: $\phi_h^{t}(s,a) = \phi_{h}^{r,t}(s,a) + \phi_{h}^{p,t}(s,a)$, where the reward bonus $\phi_h^{r,t}(s,a)=\mathcal{O}\left(\sqrt{\frac{\text{ln}(SAHmT/\delta)}{\max\{1, N_h^t(s,a)\}}}\right)$ and the transition bonus $\phi_h^{p,t}(s,a)=\mathcal{O}\left(H\sqrt{\frac{S+\text{ln}(SAHT/\delta)}{\max\{1, N_h^t(s,a)\}}}\right)$ for any confidence parameter $\delta\in (0,1)$. 

\section{Success Event}
\label{app:success event}
Fixing a confidence parameter $\delta>0$ and defining $\delta':=\delta/4$, we first introduce the following \emph{failure events}:
\begin{align*}
   F_t^\alpha &:= \left\{ \exists s, a, h, i: \left| \hat{\alpha}_{i,h}^t(s,a) - \alpha_{i,h} \right| \geq \zeta_{h}^{t}(s,a) \right\},\\ 
   F_t^r &:= \left\{ \exists s, a, h : \left| \hat{r}_h^t(s,a) - r_h(s,a) \right| \geq \phi_{h}^{r,t}(s,a) \right\},\\
   F_t^g &:= \left\{ \exists s, a, h, i: \left| \hat{g}_{i,h}^t(s,a) - g_{i,h}(s,a) \right| \geq \phi_{i,h}^{g,t}(s,a) \right\},\\
   F_t^p &:= \left\{ \exists s, a, s', h : \left| p_h(s' \mid s,a) - \hat{p}_{h}^{t-1} (s' \mid s,a) \right| \geq \phi_{h}^{p,t}(s,a,s') \right\},\\
   F_t^N &:= \left\{ \exists s, a, h : N_{h}^{t-1}(s,a) \leq \frac{1}{2} \sum_{j<t} q_h^{\pi_j}(s,a ; p) - H \ln \left(\frac{SAH}{\delta'}\right) \right\}.
\end{align*}

Then, we define the union of these events over all episodes,
\begin{align*}
    F^{\alpha}&:=\bigcup_{t\in [T]}F_t^\alpha, \quad
    F^r:=\left(\bigcup_{t\in [T]}F_t^r\right)\bigcup\left(\bigcup_{t\in [T]}F_t^g\right),\\[1ex]
    F^{p}&:=\bigcup_{t\in [T]}F_t^p,\quad
    F^{N}:=\bigcup_{t\in [T]}F_t^N.
\end{align*}

Furthermore, the success event $\mathcal{E}$ is defined as the complement of those failure events:
\begin{equation*}
    \mathcal{E} = \overline{F^\alpha\cup F^r \cup F^p \cup F^N}.
\end{equation*}
We have the following lemma.

\begin{lemma}[Success event]
Setting $\delta' = \frac{\delta}{4}$, we have $\mathbb{P}[\mathcal{E}] \geq 1-\delta$.
\end{lemma}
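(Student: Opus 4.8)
The plan is to bound the probability of each of the four failure events $F^\alpha$, $F^r$, $F^p$, $F^N$ by $\delta' = \delta/4$ and then apply a union bound to conclude $\mathbb{P}[\mathcal{E}] = 1 - \mathbb{P}[F^\alpha \cup F^r \cup F^p \cup F^N] \geq 1 - 4\delta' = 1 - \delta$.

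First I would handle $F^\alpha$: this is exactly the complement of the event controlled by Theorem \ref{thm:estimator2} (equivalently Theorem \ref{lem:estimation error-thre}), whose proof already absorbs a union bound over all $(i,h,t,s,a)$ and over the possible values of the visitation counter into the definition of $\zeta_h^t$. Running that argument with confidence parameter $\delta'$ in place of $\delta$ gives $\mathbb{P}[F^\alpha] \leq \delta'$. The event $F^r$ (the union of the reward-deviation and cost-deviation events) is handled the same way: apply Hoeffding's inequality to the bounded i.i.d.\ samples $\tilde{r}_h^l(s,a)$ and $\tilde{g}_{i,h}^l(s,a)$ for each fixed index tuple, then union-bound over $s,a,h,i,t$ and over counter values; the bonus $\phi_h^{r,t}$ (and $\phi_{i,h}^{g,t}$) is defined with the logarithmic factor $\ln(SAHmT/\delta)$ precisely so that the resulting failure probability is at most $\delta'$. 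For $F^p$, I would invoke a standard empirical-Bernstein / $L_1$-concentration bound for the empirical transition kernel (e.g.\ Weissman's inequality or the bound used in UCRL-style analyses), again with the $\phi_h^{p,t}$ bonus scaled by $\sqrt{S + \ln(SAHT/\delta)}$ so that, after union-bounding over $(s,a,s',h,t)$ and counter values, the probability is at most $\delta'$.

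The step I expect to be the only non-routine one is bounding $\mathbb{P}[F^N]$, the event that some visitation count $N_h^{t-1}(s,a)$ falls well below half its expected value $\tfrac12\sum_{j<t} q_h^{\pi_j}(s,a;p)$. This is not a sum of independent Bernoullis because the policies $\pi_j$ are chosen adaptively, so I would set up the indicator $X_h^j(s,a) = \mathbf{1}\{s_h^j = s, a_h^j = a\}$ together with its conditional mean $q_h^{\pi_j}(s,a;p) = \mathbb{E}[X_h^j(s,a) \mid \mathcal{F}^{j-1}]$, observe that the differences form a bounded martingale-difference sequence, and apply a multiplicative (Freedman- or Bernstein-type) concentration inequality — this is the standard "good event on counts" lemma from the episodic-MDP literature (Dann--Lattimore--Brunskill / Efroni et al.). The additive slack $H\ln(SAH/\delta')$ in the definition of $F_t^N$ is exactly what such a bound produces after a union over $(s,a,h)$ and over $t$; since the failure concerns the $t$ with the largest count, a single union over $S A H$ index tuples suffices, giving $\mathbb{P}[F^N] \leq \delta'$.

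Finally, combining the four bounds,
\begin{align*}
\mathbb{P}[\mathcal{E}] &= 1 - \mathbb{P}[F^\alpha \cup F^r \cup F^p \cup F^N] \\
&\geq 1 - \mathbb{P}[F^\alpha] - \mathbb{P}[F^r] - \mathbb{P}[F^p] - \mathbb{P}[F^N] \\
&\geq 1 - 4\delta' = 1 - \delta,
\end{align*}
which is the claim. The whole argument is a bookkeeping exercise once one notes that every bonus term ($\zeta$, $\phi^r$, $\phi^g$, $\phi^p$) and the additive term in $F_t^N$ has been defined with the precise logarithmic inflation needed to pay for the union bounds at level $\delta' = \delta/4$.
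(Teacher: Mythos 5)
Your proposal is correct and follows essentially the same route as the paper: bound each of the four failure events $F^\alpha$, $F^r$, $F^p$, $F^N$ by $\delta'=\delta/4$ (via the GW-estimator consistency theorem, Hoeffding plus union bounds, multinomial/transition concentration, and the standard martingale-based count lemma, respectively) and conclude by a union bound. Your treatment is in fact more explicit than the paper's, which delegates the $F^p$ and $F^N$ bounds to citations rather than spelling out the concentration arguments.
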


\begin{proof}
We apply the union bound to each event separately.
\begin{itemize}[leftmargin=12pt]
    \item By Lemma \ref{lem:gw_consistency}, we have $\mathbb{P}[F^{\alpha}] \leq \delta'$.
    \item Using Hoeffding’s inequality and union bound arguments over all state-action-step combinations, similarly, we obtain $\mathbb{P}[F^r] \leq \delta'$.
    \item Using concentration inequalities for multinomial distributions \cite{maurer2009empirical} and the union bound, we derive $\mathbb{P}[F^p] \leq \delta'$.
    \item Employing similar techniques as in \cite{dann2017unifying}, by bounding occupancy measure deviations, we obtain $\mathbb{P}[F^N] \leq \delta'$.
\end{itemize}

Combining these results with the union bound, we have
$$\mathbb{P}[F^\alpha\cup F^r \cup F^p \cup F^N]\leq \mathbb{P}[F^\alpha]+\mathbb{P}[F^r]+\mathbb{P}[F^p]+\mathbb{P}[F^N]\leq 4\delta'=\delta.$$
Thus, $\mathbb{P}[\mathcal{E}]=1-\mathbb{P}[F^\alpha\cup F^r \cup F^p \cup F^N]\geq 1-\delta$. 

This completes the proof.
\end{proof}

\section{Policy Evaluation}\label{app:PE}
Truncated policy evaluation is essential in CMDPs 
under stochastic threshold settings. Given the presence of stochastic constraints and additional exploration bonuses, unbounded value estimates can lead to instability and hinder theoretical analysis. We employ truncation to maintain boundedness and numerical stability of value functions.

Formally, for given estimates of reward $\overline{r}_h(s,a)$, constraint functions $\overline{g}_{i,h}(s,a)$, transition probabilities $\overline{p}_h(\cdot\mid s,a)$, we iteratively compute truncated $Q$ and $V$ value estimates. The truncated Q-value update at each timestep $h$ is expressed as below,
\begin{equation*}
    \hat{Q}_h^{\pi}(s,a; \overline{l}, \overline{p}) = \min \left\{ \overline{l}_h(s,a) + \sum_{s'} \overline{p}_h(s'\mid s,a) \hat{V}_{\overline{l},h+1}^{\pi}(s'),\;H-h+1 \right\},
\end{equation*}
where $\overline{l}_h(s,a)$ denotes the generalized immediate payoff (reward or cost with bonus), and $\hat{V}_h^{\pi}(s; \overline{l}, \overline{p})$ denotes the truncated value function,
$$\hat{V}_{\overline{l},h}^{\pi}(s) = \Big\langle \hat{Q}_h^{\pi}(s,a; \overline{l}, \overline{p}), \pi_h(a \mid s) \Big\rangle.$$
The detailed truncated policy evaluation algorithm is shown in Algorithm \ref{alg:truncated_policy}.
\begin{algorithm}[H]
\caption{TPE (Truncated Policy Evaluation \cite{efroni2020exploration})}
\label{alg:truncated_policy}
\begin{algorithmic}[1]
\REQUIRE  estimate $\overline{r}_h^t(s,a)$, $\overline{g}_h^t(s,a)$, transition probability $\overline{p}_h^t(s' \mid s,a)$, policy $\pi_h(a \mid s)$, and initial value function $\hat{V}_{\overline{r},H+1}^{\pi_t}(s) =\hat{V}_{\overline{g},H+1}^{\pi_t}(s)=0$ for all $s$.
\FOR{$h = H, H-1,\dots, 1$}
    \FOR{$ (s, a) \in \mathcal{S} \times \mathcal{A}$}
    \STATE Compute truncated Q-function:
        \STATE \quad $\hat{Q}_h^{\pi_t}(s,a; \overline{r}, \overline{p}) = \min \Big\{ \overline{r}_h(s,a) + \overline{p}_h(\cdot \mid s,a) \hat{V}_{\overline{r},h+1}^{\pi_t}(\cdot), H-h+1 \Big\}$
        \STATE \quad $\hat{Q}_h^{\pi_t}(s,a; \overline{g}_{i}, \overline{p}) = \min \Big\{ \overline{g}_{i,h}(s,a) + \overline{p}_h(\cdot \mid s,a) \hat{V}_{\overline{g},h+1}^{\pi}(\cdot), H-h+1 \Big\}$ \quad ($\forall i \in [m]$)
        \STATE \quad $\hat{Q}_h^t(s,a) = \hat{Q}_h^{\pi_t}(s,a; \overline{r}, \bar{p}) + \sum_{i=1}^{m} \lambda_{t,i} \hat{Q}_h^{\pi_t}(s,a; \overline{g}_{i}, \bar{p})$
    \STATE Compute truncated V-function:
        \STATE \quad$\hat{V}_{\overline{g}_i,h}^{\pi_t}(s) = \langle \hat{Q}_h^{\pi_t}(s, \cdot; \overline{g}_i, \overline{p}), \pi_h(\cdot \mid s) \rangle$\quad ($\forall i \in [m]$)
    \ENDFOR
\ENDFOR
\RETURN $\Big\{ \hat{Q}_h^{t}(s,a) \Big\}_{h,s,a}$ and $\Big\{\hat{V}_{\overline{g},h}^{\pi_t}\Big\}_{h}$
\end{algorithmic}
\end{algorithm}

\section{Feasibility}\label{app:feasibility}
This Appendix studies the feasibility of Projection \eqref{eq:projection}. 

\paragraph{Feasibility under pessimistic thresholds.} We first prove the feasibility of Projection \eqref{eq:projection} under pessimistic scenarios.

\begin{lemma}\label{lem:feasibility-p}
    Given a confidence parameter \(\delta \in (0, 1)\), Algorithm \ref{alg:spot} ensures that $\Pi_{\mathcal{C}}(\lambda_t,\overline{G}_{t-1},q^{\pi_t},\overline{\alpha}^t)$ is feasible at every episode \(t \in [T]\) with probability at least \(1 - \delta\).
\end{lemma}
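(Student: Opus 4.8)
The plan is to show that the projected dual iterate stays inside the box $\mathcal{C}=[0,\rho\mathbf{1}]^m$ and that, at this iterate, the tightened constraint is satisfied, so that the projection step is well-posed. First I would recall that, under the success event $\mathcal{E}$ (which holds with probability at least $1-\delta$ by the Success Event lemma), Theorem \ref{thm:estimator2} gives $\underline{\alpha}_i^t\le\alpha_i\le\overline{\alpha}_i^t$ for every constraint $i$, and Assumption \ref{asp-2} gives $\overline{\alpha}_i^t-\alpha_i<\xi_i$; hence the pessimistic tightening does not destroy strict feasibility. Concretely, the Slater point $\pi^0$ from Assumption \ref{asp-1} satisfies $V_{g_i}^{\pi^0}\ge\alpha_i+\xi_i>\overline{\alpha}_i^t$, so $\mathcal{F}_{\textup{pes}}$ (evaluated with the estimated model) is non-empty and admits a strictly interior point. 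This is the conceptual heart: conditioning on $\mathcal{E}$, the pessimistic feasibility set behaves like an ordinary CMDP feasibility set with a genuine Slater point.

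Next I would verify that the dual projection onto $\mathcal{C}$ is the right object: by definition $\Pi_{\mathcal{C}}(x)=\arg\min_{y\in\mathcal{C}}\|y-x\|_2$, which for the box $[0,\rho\mathbf{1}]^m$ is the coordinatewise clip $\lambda_{t+1,i}=\min\{\rho,\max\{0,\lambda_{t,i}+\tfrac{1}{\eta_\lambda}(\alpha_{\diamond,i}^t-\hat V_{\overline g_i}^{\pi_t})\}\}$. Since $\mathcal{C}$ is a non-empty compact convex set, the projection exists and is unique for any input, so $\lambda_{t+1}\in\mathcal{C}$ for every $t$ by construction; the initialisation $\lambda_1=0\in\mathcal{C}$ starts the induction. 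I would then state, following the standard bound $\rho\ge\lambda_i^\star$ for the optimal multipliers under the Slater condition (Assumption \ref{asp-1}), that the box radius $\rho$ is large enough that projecting onto $\mathcal{C}$ never cuts off the optimal dual solution of the tightened problem — here Assumption \ref{asp-2} is exactly what guarantees the tightened problem still satisfies Slater with gap at least $\xi_i-(\overline{\alpha}_i^t-\alpha_i)>0$, so its optimal multipliers are again bounded by (a constant multiple of) $\rho$.

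The remaining step is to package this into the claimed statement: on $\mathcal{E}$, at every episode $t$, the pessimistic constrained problem $\max_{q}\min_{\lambda\in\mathcal{C}}\mathcal{L}'(q,\lambda)$ is feasible (has a Slater point) and strong duality holds by Lemma 1; consequently the primal-dual update $\Pi_{\mathcal{C}}(\lambda_t,\overline G_{t-1},q^{\pi_t},\overline\alpha^t)$ is a valid step of a convergent saddle-point scheme, i.e. it is feasible in the sense that it lies in $\mathcal{C}$ and is a subgradient step for a well-defined (strongly dual) problem. I expect the main obstacle to be the bookkeeping around which model the constraints are evaluated in: the estimator inequality $\underline{\alpha}_i^t\le\alpha_i\le\overline{\alpha}_i^t$ controls the \emph{true} thresholds, but the algorithm uses the optimistic constraint estimates $\overline g^t$ and the empirical transitions $\overline p^t$, so I would need to combine the threshold confidence bound with the $F_t^g$ and $F_t^p$ parts of the success event to certify that $\hat V_{\overline g_i}^{\pi_t}\ge V_{g_i}^{\pi_t}$ (optimism in the constraints) and therefore that the Slater point remains feasible for the \emph{estimated} pessimistic set — making the projection genuinely onto a non-empty feasible region throughout, not merely onto the box.
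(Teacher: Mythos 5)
Your proposal is correct and takes essentially the same route as the paper: on the success event, the Slater point $q^{\pi^0}$ satisfies $\overline{G}^{\top}q^{\pi^0}\succeq G^{\top}q^{\pi^0}\succeq \alpha+\xi>\overline{\alpha}$ (constraint optimism plus Assumptions \ref{asp-1} and \ref{asp-2}), so the estimated pessimistic feasible set is non-empty at every episode. The extra material on the box projection and dual-variable boundedness is harmless but not needed for the paper's (very short) argument.
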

\begin{proof}
To prove the lemma we show that under the success event $\mathcal{E}$, which holds with probability at least \(1 - \delta\), Projection \eqref{eq:projection} admits a feasible solution $q^{\pi^0}$. Precisely, under the success event $\mathcal{E}$, we have, for any feasible solution \(q^{\pi^0}\) under Assumptions \ref{asp-1} and \ref{asp-2} and any episode \(t \in [T]\),
\[
\overline{G}^\top q^{\pi^0} \succeq G_t^\top q^{\pi^0} \succeq \alpha+\zeta_t = \overline{\alpha} \succeq \alpha,
\]
\noindent
where the first inequality holds by the optimism of the constraints. The second and third inequalities show that if \(q^{\pi^0}\) satisfies the constraints with respect to the true mean constraint matrix, it also satisfies the pessimistic thresholds. Thus, the feasible solutions are all available at every episode. 
\end{proof}

\paragraph{Feasibility of optimistic policies.} We then prove the feasibility of Projection \eqref{eq:projection} of optimistic policies.

\begin{lemma}
    Given a confidence parameter \(\delta \in (0, 1)\), Algorithm \ref{alg:spot} ensures that $\Pi_{\mathcal{C}}(\lambda_t,\overline{G}_{t-1},q^{\pi_t},\overline{\alpha}^t)$ is feasible at every episode \(t \in [T]\) with probability at least \(1 - \delta\).
\end{lemma}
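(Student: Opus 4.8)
The plan is to prove feasibility in the same spirit as Lemma \ref{lem:feasibility-p}: exhibit the Slater policy $\pi^0$ of Assumption \ref{asp-1} as a witness that the episode-$t$ constraint set defined by the optimistic constraint estimates $\overline{G}_{t-1}$ and the threshold $\overline{\alpha}^t$ is non-empty. I would first observe that the projection operator $\Pi_{\mathcal{C}}$ onto $\mathcal{C}=[0,\rho\mathbf{1}]^m$ is always well defined (a Euclidean projection onto a nonempty compact convex box), so the substance of the lemma is that the primal feasible set associated with $(\overline{G}_{t-1},\overline{\alpha}^t)$ is non-empty at every episode, which is exactly the condition required for the strong-duality lemmas of Section \ref{sec:Strong duality} to apply along the entire run. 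I would condition on the success event $\mathcal{E}$, which by the Success Event lemma satisfies $\mathbb{P}[\mathcal{E}]\ge 1-\delta$, and carry out the remaining argument deterministically on $\mathcal{E}$.

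On $\mathcal{E}$ I would chain three inequalities, uniformly over every constraint $i\in[m]$ and every $t\in[T]$. First, optimism of the model-based constraint estimate: since $\overline{g}_{i,h}\ge g_{i,h}$ pointwise and the bonus $\phi$ dominates the transition-estimation error under $\mathcal{E}$, the truncated optimistic value obeys $\hat{V}_{\overline{g}_i}^{\pi^0}\ge V_{g_i}^{\pi^0}$, equivalently $\overline{G}_{t-1}^{\top}q^{\pi^0}\succeq G^{\top}q^{\pi^0}$. Second, the Slater condition of Assumption \ref{asp-1}: $V_{g_i}^{\pi^0}\ge \alpha_i+\xi_i$. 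Third, the conservative-tightening Assumption \ref{asp-2}: $\overline{\alpha}_i^t-\alpha_i<\xi_i$, i.e.\ $\overline{\alpha}_i^t<\alpha_i+\xi_i$. Combining the three gives $\hat{V}_{\overline{g}_i}^{\pi^0}\ge \alpha_i+\xi_i>\overline{\alpha}_i^t$, so $\pi^0$ is strictly feasible for the pessimistic constraint at episode $t$, uniformly in $i$ and $t$.

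Having exhibited a strictly feasible point on $\mathcal{E}$, I would conclude that the episode-$t$ feasible set is non-empty for all $t\in[T]$, so the projection in Equation \eqref{eq:projection} with threshold $\overline{\alpha}^t$ is feasible at every episode; since every step was performed on $\mathcal{E}$, the claim holds with probability at least $1-\delta$.

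The main obstacle is justifying the optimism inequality rigorously, because $\hat{V}_{\overline{g}_i}^{\pi^0}$ is evaluated under the empirical transition $\overline{p}^t=\hat{p}^{t-1}$ rather than the true $p$; closing the gap requires a simulation-lemma-type bound showing that, on $\mathcal{E}$, the combined constraint and transition bonuses aggregated in $\phi$ dominate the model-mismatch error so that the truncated estimate remains above $V_{g_i}^{\pi^0}$ (truncation at $H-h+1$ is harmless since $V_{g_i}\le H$). I would also flag that, unlike a purely optimistic relaxation in which $\underline{\alpha}^t\preceq\alpha$ renders Assumption \ref{asp-2} unnecessary, here the threshold $\overline{\alpha}^t$ may exceed $\alpha$, so Assumption \ref{asp-2} is \emph{indispensable} to keep the Slater margin strictly positive after tightening.
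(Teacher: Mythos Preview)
Your proposal is correct and follows essentially the same route as the paper: condition on the success event $\mathcal{E}$, exhibit the Slater policy $\pi^0$ as a witness, and chain optimism ($\overline{G}^\top q^{\pi^0}\succeq G^\top q^{\pi^0}$) with Assumptions~\ref{asp-1} and~\ref{asp-2} to obtain feasibility under the pessimistic threshold $\overline{\alpha}^t$. The paper's argument is terser and simply asserts the optimism step, whereas you correctly flag that a simulation-lemma-type bound is needed because the estimate is taken under the empirical transitions; apart from this extra care the two arguments coincide.
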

\begin{proof}
To prove the lemma we show that under the success event $\mathcal{E}$, which holds with probability at least \(1 - \delta\), Projection \eqref{eq:projection} admits a feasible solution. Similar to the discussion of Lemma \ref{lem:feasibility-p},
\[
\overline{G}^\top q^{\pi^0} \succeq G_t^\top q^{\pi^0} \succeq \alpha \succeq \alpha-\zeta_t=\underline{\alpha},
\]
\noindent
where the last inequality holds by the definition of the optimistic thresholds. This concludes the proof of the lemma.
\end{proof}

\section{Optimality and Optimism of Policies}\label{app:optimality}

In this Appendix, we analyse the optimality and optimism of policies. To clearly present our theoretical results, we introduce concise notation from convex optimisation and probability theory.

The optimistic constraints valuation under either pessimistic or optimistic scenarios (i.e., with scenario-dependent threshold $\alpha_{\diamond}^t$) and true constraints valuation are denoted by
    \begin{align*}
        \tilde{y}_t &:= \alpha_{\diamond}^t-\hat{V}_{\overline{g}}^{\pi_t}(s_1), \\
        y_t &:= \alpha^t-V_g^{\pi_t}(s_1).
    \end{align*}
We also define the negative values of the optimistic, actual, and optimal rewards respectively:
    \begin{align*}
        \tilde{f}_t &:= -\hat{V}_{\overline{r}}^{\pi_t}(s_1), \\
        f_t &:= -V_r^{\pi_t}(s_1), \\
        f_{opt} &:= -V_r^{\pi^{\star}}(s_1).
    \end{align*}

We present two lemmas that collectively control the variability of stochastic thresholds (Lemma \ref{lem:4}) and the summation of key terms under the Growing-Window estimator. Both have the order of $\tilde{O}(\sqrt{T})$. 

\begin{lemma}[Martingale difference term bound] \label{lem:martingale}
    Let $\{\mathcal{F}_{h}^t\}_{(t,h)\in[T]\times[H]}$ be the filtration generated by the state-action sequences and threshold values observed up to time step $h$ in episode $t$. Suppose that for every $(t,h)$, $\zeta_h^t(s_h,a_h)$ satisfies
$$\left|\zeta_h^t(s_h,a_h)-\mathbb{E}[\zeta_h^t(s_h,a_h)]\mid \mathcal{F}_{h}^{t-1}\right|\leq 2.$$
Then, with probability $1-\delta$, 
\[
\left|\sum_{t=1}^T\sum_{h=1}^H\left(\zeta_h^t(s_h,a_h)-\mathbb{E}\left[\zeta_h^t(s_h,a_h)\mid\mathcal{F}_{h}^{t-1}\right]\right)\right| \leq \tilde{\mathcal{O}}\left(\sqrt{TH\text{ln}\frac{1}{\delta}}\right).
\]
\end{lemma}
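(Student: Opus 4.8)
The plan is to view $M_T:=\sum_{t=1}^{T}\sum_{h=1}^{H}\big(\zeta_h^t(s_h,a_h)-\mathbb{E}[\zeta_h^t(s_h,a_h)\mid\mathcal{F}_h^{t-1}]\big)$ as the terminal value of a bounded‑increment martingale and conclude by the Azuma–Hoeffding inequality. The crucial structural observation is that, for each $(t,h)$, the width $\zeta_h^t(\cdot,\cdot)$ is a \emph{deterministic function of the windowed visit counts} $N_h^{(W_t),t-1}(\cdot,\cdot)$, hence is determined by data strictly preceding the visit $(s_h^t,a_h^t)$; the only fresh randomness entering $\zeta_h^t(s_h,a_h)$ is \emph{which} state–action pair is visited at step $h$ of episode $t$. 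Consequently, subtracting its conditional expectation yields a centred increment that is measurable once $(s_h^t,a_h^t)$ is revealed and has conditional mean zero given the past, so $\{\zeta_h^t(s_h,a_h)-\mathbb{E}[\zeta_h^t(s_h,a_h)\mid\mathcal{F}_h^{t-1}]\}$ is a martingale difference sequence. The hypothesis $|\zeta_h^t(s_h,a_h)-\mathbb{E}[\zeta_h^t(s_h,a_h)\mid\mathcal{F}_h^{t-1}]|\le 2$ (in fact automatic, since $\zeta_h^t\in[0,1]$ forces every term and its conditional mean into $[0,1]$) supplies the uniform increment bound.

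Carrying this out, I would flatten the double index $(t,h)$ into a single index $k=(t-1)H+h\in[TH]$, with the increments $D_k$ ordered accordingly and adapted to the natural refinement of the filtration $\{\mathcal{F}_h^t\}$ along this order. Applying the two‑sided Azuma–Hoeffding inequality to these $TH$ martingale increments, each bounded in absolute value by $2$, gives
\[
\mathbb{P}\big(|M_T|\ge x\big)\le 2\exp\!\Big(-\tfrac{x^2}{8\,TH}\Big),
\]
and setting $x=\sqrt{8\,TH\ln(2/\delta)}$ yields $|M_T|\le\sqrt{8\,TH\ln(2/\delta)}=\tilde{\mathcal{O}}\big(\sqrt{TH\ln(1/\delta)}\big)$ with probability at least $1-\delta$, which is the claim. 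A coarser alternative — fix $h$, note $\{\zeta_h^t(s_h,a_h)-\mathbb{E}[\cdot\mid\mathcal{F}_h^{t-1}]\}_t$ is an MDS over episodes, apply Azuma per $h$, then union bound over the $H$ values of $h$ — also works but loses a $\sqrt{H}$ factor, giving only $\tilde{\mathcal{O}}(H\sqrt{T})$.

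The step I expect to be the main obstacle is the martingale bookkeeping under the flattened indexing: one must verify that each centred increment $D_k$ is genuinely adapted to the flattened filtration \emph{and} that its conditional mean vanishes relative to the immediately preceding $\sigma$-algebra, not merely some coarser one. This is precisely where the observation that $\zeta_h^t$ depends only on pre‑visit counts is used — it guarantees that no information from the current visit leaks into the width, so the centering remains consistent with the step‑by‑step filtration as $(t,h)$ advances. Once this is in place, no second‑moment (Freedman‑type) refinement is needed: the crude increment bound already produces the advertised $\sqrt{TH}$ rate, and absorbing $\ln(2/\delta)$ into $\tilde{\mathcal{O}}(\cdot)$ is immediate.
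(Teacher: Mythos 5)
Your proposal is correct and follows essentially the same route as the paper's proof: flatten $(t,h)$ into a single index $k=(t-1)H+h$, observe that the centred terms form a martingale difference sequence with increments bounded by $2$, and apply the Azuma--Hoeffding inequality to obtain the deviation $2\sqrt{2TH\ln(2/\delta)}=\tilde{\mathcal{O}}(\sqrt{TH\ln(1/\delta)})$. Your additional remarks --- that $\zeta_h^t$ depends only on pre-visit counts so the increments are genuinely adapted, and that the bound of $2$ is automatic from $\zeta_h^t\in[0,1]$ --- are sound and in fact make the adaptedness bookkeeping more explicit than the paper does.
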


\begin{proof}
    We define a total order on $(t,h)$ by $$k(t,h)=(t-1)H+h$$
\noindent
for $t\in [T]$ and $h\in [H]$. For each $k \in [TH]$, set
$$D_k:=\zeta_k(s_h,a_h)-\mathbb{E}[\zeta_k(s_h,a_h)\mid\mathcal{F}_{h}^{t-1}].$$
\noindent
which holds that $\mathcal{F}_h^t \subset \mathcal{F}_{h'}^{t'}$ for any $t \leq t'$.

For any $k(t,h)\in [T]\times[H]$,
$$\mathbb{E}[D_k\mid\mathcal{F}_{h}^{t-1}]=0,$$
which means that $\{D_k\}_k$ is a martingale difference sequence with respect to the filtration $\{\mathcal{F}_h^{t-1}\}$ and satisfies $|D_k|\leq 2$. By the Azuma-Hoeffding inequality, for any $\epsilon >0$,
$$\mathbb{P}\left(\sum_{k=1}^{T\cdot H}D_k>\epsilon\right)\leq 2\text{exp}\left(-\frac{\epsilon^2}{2TH\cdot (2)^2}\right)=2\text{exp}\left(-\frac{\epsilon^2}{8TH}\right).$$
Setting the right-hand side equal to $\delta$ yields $\epsilon=2\sqrt{2TH\text{ln}\frac{2}{\delta}}$. Then, with probability $1-\delta$, we have
\[
\left|\sum_{t=1}^T\sum_{h=1}^H\zeta_h^t(s_h,a_h)-\mathbb{E}\left[\zeta_h^t(s_h,a_h)\mid\mathcal{F}_{h}^{t-1}\right]\right|=\left|\sum_{k=1}^{T\cdot  H}D_k\right|\leq 2\sqrt{2TH\text{ln}\frac{2}{\delta}}\lesssim\sqrt{TH\text{ln}\frac{1}{\delta}}.
\]
The proof is completed.
\end{proof}

\begin{lemma}\label{lem:gw-sum}
Assume a strictly increasing window-size sequence $\{W_t\}$ satisfying $W_t=\lfloor\gamma t\rfloor$ at episode $t$ with some constant $\gamma \in (0,1]$. For all $s, a, h, t \in [T]$, if
\[
N_h^{(W_t),t-1}(s,a) > \frac{1}{2} \sum_{j=\max\{1,t-W_t+1\}}^{t-1} q_h^{\pi_t}(s, a; p) - H \ln \frac{SAH}{\delta'},
\]
then
\[
\sum_{t=1}^{T} \sum_{h=1}^{H} \mathbb{E} \left[ \sqrt{\frac{1}{\max\{N_h^{(W_t),t-1}(s,a), 1\}}} \mid \mathcal{F}_{t-1} \right] 
\leq \tilde{\mathcal{O}}(\sqrt{SAH^2T/\gamma} + SAH).
\]
\end{lemma}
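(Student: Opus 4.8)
\textbf{Proof proposal for Lemma \ref{lem:gw-sum}.}

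The plan is to reduce the expected sum of inverse square-root counts to a deterministic sum of inverse square-root occupancy masses, then bound that sum episode-by-episode using a pigeonhole/integral argument. First I would invoke the hypothesis, which is exactly the complement of the failure event $F_t^N$ restated for the growing window: on the success event $\mathcal{E}$ we have $N_h^{(W_t),t-1}(s,a) \ge \tfrac{1}{2}\sum_{j=\max\{1,t-W_t+1\}}^{t-1} q_h^{\pi_j}(s,a;p) - H\ln\frac{SAH}{\delta'}$. Writing $\bar q_h^{(W_t),t-1}(s,a) := \sum_{j=\max\{1,t-W_t+1\}}^{t-1} q_h^{\pi_j}(s,a;p)$ for the cumulative windowed occupancy mass, this lets me replace, up to the additive $H\ln(SAH/\delta')$ slack and a constant factor, $\mathbb{E}\big[1/\sqrt{\max\{N_h^{(W_t),t-1}(s,a),1\}}\mid\mathcal F_{t-1}\big]$ by $1/\sqrt{\max\{1,\tfrac12\bar q_h^{(W_t),t-1}(s,a)\}}$. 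The $\mathbb{E}[\,\cdot\mid\mathcal F_{t-1}\big]$ then only acts on the current-episode visited pair $(s_h,a_h)\sim q_h^{\pi_t}$, so summing over $(s,a)$ weighted by $q_h^{\pi_t}(s,a;p)$ turns the target into $\sum_{t,h}\sum_{s,a} q_h^{\pi_t}(s,a;p)/\sqrt{\max\{1,\bar q_h^{(W_t),t-1}(s,a)\}}$ (plus lower-order terms from the slack, which contribute $\tilde O(SAH)$ after a standard $\sum 1/\sqrt{n}$ estimate).

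Next I would fix $(s,a,h)$ and control $\sum_{t=1}^T q_h^{\pi_t}(s,a;p)/\sqrt{\max\{1,\bar q_h^{(W_t),t-1}(s,a)\}}$. The key observation is that with the linear schedule $W_t=\lfloor\gamma t\rfloor$, the window at episode $t$ always reaches back to episode $\lceil(1-\gamma)t\rceil$ or earlier, so $\bar q_h^{(W_t),t-1}(s,a)$ is a sum over a $\Theta(\gamma t)$-length suffix of the occupancy sequence. Comparing $\bar q_h^{(W_t),t-1}$ to the full prefix sum $\hat q_h^{t-1}(s,a):=\sum_{j<t} q_h^{\pi_j}(s,a;p)$, I would argue that dropping the oldest $(1-\gamma)t$ episodes loses at most a factor depending on $\gamma$ in a suitable averaged sense; the cleanest route is to instead directly bound $\sum_t q_h^{\pi_t}/\sqrt{\bar q_h^{(W_t),t-1}}$ by relating consecutive partial sums. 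Using the elementary inequality $\sum_{t=1}^T (x_t/\sqrt{\sum_{j=\max\{1,t-W_t+1\}}^{t-1}x_j}) \lesssim \frac{1}{\sqrt{\gamma}}\sqrt{T\sum_t x_t}$ for nonnegative $x_t\le 1$ — proved by a telescoping/Cauchy-Schwarz argument after noting the windowed denominator is within a $\gamma$-factor of the running average times the window length — and then $\sum_{t=1}^T q_h^{\pi_t}(s,a;p)\le T$, yields $\sum_t q_h^{\pi_t}(s,a;p)/\sqrt{\max\{1,\bar q_h^{(W_t),t-1}(s,a)\}} \lesssim \sqrt{T/\gamma}$ for each $(s,a,h)$, possibly with an extra $\log T$ from the boundary episodes where the window is still small.

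Finally I would sum over $h\in[H]$ and $(s,a)\in\mathcal S\times\mathcal A$. A naive sum gives $SAH\cdot\sqrt{T/\gamma}$, which is too large; the fix is to apply Cauchy-Schwarz across the $SA$ pairs at each $h$: $\sum_{s,a}\sqrt{\sum_t q_h^{\pi_t}(s,a;p)} \le \sqrt{SA}\sqrt{\sum_{s,a}\sum_t q_h^{\pi_t}(s,a;p)} = \sqrt{SA}\sqrt{T}$, since $\sum_{s,a}q_h^{\pi_t}(s,a;p)=1$ for each $t,h$. Carrying the $1/\sqrt\gamma$ through gives $\sum_h \sqrt{SAT/\gamma}=H\sqrt{SAT/\gamma}=\sqrt{SAH^2T/\gamma}$, matching the claimed leading term; the additive $SAH$ absorbs the contribution of the $H\ln(SAH/\delta')$ slack and the small-window boundary episodes, and all logarithmic factors are swept into $\tilde{\mathcal O}$. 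The main obstacle I anticipate is making the windowed-denominator inequality rigorous — the honest step is showing that truncating to the last $\gamma t$ episodes only costs a $1/\sqrt\gamma$ factor rather than something worse when the occupancy mass is concentrated in the discarded early episodes; this needs either the success-event lower bound on counts (which already lower-bounds the windowed count in terms of the windowed mass, sidestepping the issue) or a monotonicity/doubling argument on the partial sums, and I would lean on the former since it is exactly the hypothesis supplied.
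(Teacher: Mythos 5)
Your proposal is correct and follows essentially the same route as the paper's proof: both condition on the success-event count lower bound, peel off the never-visited pairs via $\max\{1,\cdot\}$ (yielding the additive $SAH$), and combine Cauchy--Schwarz with a pigeonhole/harmonic-sum bound on the windowed visitation masses to obtain the $\sqrt{SAH^2T/\gamma}$ leading term. The only difference is the order of operations --- the paper applies Cauchy--Schwarz once to the joint sum over $(t,h,s,a)$, reducing to $\sum_{t,h,s,a} w_h^t/N_h^{(W_t),t-1}(s,a)$, whereas you bound each $(s,a,h)$ separately and then Cauchy--Schwarz across the $SA$ pairs (note your displayed ``elementary inequality'' carries a spurious extra $\sqrt{T}$ that contradicts the per-pair bound $\gamma^{-1/2}\sqrt{\sum_t x_t}$ you actually use downstream, but this is evidently a typo rather than a gap).
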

\begin{proof}
    We first define $w_{h}^t(s,a)=\mathbb{P}(s_h^t=s,a_h^t=a|\mathcal{F}_{t-1})$. Then, we have
    \begin{align*}\sum_{t=1}^{T} \sum_{h=1}^{H} \mathbb{E} \left[ \sqrt{\frac{1}{\max\{{N_h^{(W_t),t-1}(s,a) , 1\}}}} \mid \mathcal{F}_{t-1} \right]&=\sum_{t,h,s,a}w_{h}^t(s,a)\sqrt{\frac{1}{\max\{N_h^{(W_t),t-1}(s,a),1\}}}\\
    & \leq \sum_{t,h}\sum_{s,a\in L_t}w_h^t\sqrt{\frac{1}{N_h^{(W_t),t-1}(s,a)}}+\sum_{t,h}\sum_{s,a\in L_t^c}w_h^t\\
    &\leq \sum_{t,h}\sum_{s,a\in L_t}w_h^t\sqrt{\frac{1}{N_h^{(W_t),t-1}(s,a)}}+SAH,
    \end{align*}
    where we set $L_t=\left\{N_h^{(W_t),t-1}(s,a)\ge 1\right\}$ and thus set $L_t^c=\left\{N_h^{(W_t),t-1}(s,a)=0\right\}$. Consider the following term,
\begin{align*}  
\sum_{t,h}\sum_{s,a\in L_t}w_h^t\sqrt{\frac{1}{N_h^{(W_t),t-1}(s,a)}}&\leq \sqrt{\sum_{t,h}\sum_{s,a\in L_t}w_{h}^t}\sqrt{\sum_{t,h}\sum_{s,a\in L_t}\frac{w_{h}^t}{N_h^{(W_t),t-1}(s,a)}}\\
&\leq \sqrt{\sum_{t,h,s,a}w_{h}^t}\sqrt{\sum_{t,h}\sum_{s,a\in L_t}\frac{w_{h}^t}{N_h^{(W_t),t-1}(s,a)}}\\
&\leq \sqrt{TH}\sqrt{\sum_{t,h,s,a}\frac{w_{h}^t}{N_h^{(W_t),t-1}(s,a)}}\leq \tilde{O}(\sqrt{SAH^2T/\gamma}).
\end{align*}
The first relation holds due to Cauchy–Schwarz inequality. 

Combining all the results concludes the proof.
\end{proof}

\subsection{Optimality under Pessimistic Thresholds}\label{app:optimality-pes}
In this section, we focus on the case where thresholds are set pessimistically. The following lemma provides a performance bound (Lemma \ref{lem:pes-opt}) showing that despite the additional caution, the policies remain near-optimal under these pessimistic thresholds.
\begin{customlem}{3}[Policy optimality under pessimistic thresholds]\label{lem:optimality-pes} 
    Conditioned on the success event $\mathcal{E}$, for any episode $t \in [T']$ and parameter $\gamma \in (0,1]$, if we use pessimistic thresholds, then
\[\sum_{t=1}^{T'} \tilde{f}_t + \lambda_t^{\top} \tilde{y}_t - f_{\pi^{\star}} - \lambda_t^{\top} y_{\pi^{\star}} 
\leq \tilde{\mathcal{O}}\left(\sqrt{H^4(1 + m\rho)^2 T}+\sqrt{m^2\rho^2SAH^2T/\gamma}\right).\]
\end{customlem}
\begin{proof}
For each episode $t$, according to the definitions, we obtain:
\begin{equation*}
    \tilde{f}_t + \lambda_t^{\top} \tilde{y}_t - f_{\pi^{\star}} - \lambda_t^{\top} y_{\pi^{\star}} =V_r^{\pi^{\star}}(s_1)-\hat{V}_{\overline{r}}^{\pi_t}(s_1)+\sum_{i=1}^m\lambda_{t,i}\left(\overline{\alpha}_{i}^t-\alpha_i\right)+\sum_{i=1}^m\lambda_{t,i}\left(V_{g_i}^{\pi^{\star}}(s_1)-\hat{V}_{\overline{g}_i}^{\pi_t}(s_1)\right).
\end{equation*}
We sum it over episode $T'$ and decompose it as 
\begin{align*}
    &\sum_{t=1}^{T'} \tilde{f}_t + \lambda_t^{\top} \tilde{y}_t - f_{\pi^{\star}} - \lambda_t^{\top} y_{\pi^{\star}}\\
    =& \sum_{t=1}^{T'} \left[V_{r+\lambda g}^{\pi^{\star}}(s_1)-V_{\overline{r}+\lambda \overline{g}}^{\pi_t}(s_1) +\sum_{i=1}^{m}\lambda_{t,i}(\overline{\alpha}_{i}^t-\alpha_i)\right]\\
    =&\sum_{t=1}^T \sum_{i=1}^m\lambda_{t,i} \left[\sum_{h=1}^H\overline{\alpha}_{i}^t(s_h, a_h)-\alpha_{i}(s_h, a_h)\right]\tag{$i$}\\
    +& \sum_{t=1}^{T} \sum_{h=1}^{H} \mathbb{E} \Big[ \langle \hat{Q}_h^t(s_h, \cdot),\pi_h^{\star}(\cdot \mid s_h)-\pi_h^t(\cdot \mid s_h) \rangle \mid s_1, \pi^{\star}, p \Big]\tag{$ii$}\\
    +& \sum_{t=1}^{T} \sum_{h=1}^{H} \mathbb{E} \left[ -\hat{Q}_h^t(s_h, a_h) + r_h(s_h, a_h) + \sum_{i=1}^{m} \lambda_t g_{i,h}(s_h, a_h) + p_h(\cdot \mid s_h, a_h) \hat{V}_{h+1}^t \mid s_1, \pi^{\star}, p \right]\tag{$iii$}
\end{align*}
where composite $Q$-value function is $\hat{Q}_h^t(s,a) = \hat{Q}_h^{\pi_t}(s,a; \overline{r}_t, \bar{p}_{\pi_t}) + \sum_{i=1}^{m} \lambda_{t,i} \hat{Q}_h^{\pi_t}(s,a; \overline{g}_{t,i}, \bar{p}_{t})$ for each episode $t$ and step $h$, composite $V$-value function is defined by $\hat{V}_h^t(s) := \langle \hat{Q}_h^t(s, \cdot), \pi_h^t \rangle$ and the second equality holds because of Lemma \ref{lem:value difference}.

We bound these terms separately:
\paragraph{First term (threshold estimation error)} Under the pessimistic thresholds, for all $s,a,h,i$, it holds that 
$$\alpha_{i}(s_h, a_h)\leq \overline{\alpha}_{i}^t(s_h, a_h),$$
so that
$$(i)=\sum_{t=1}^T \sum_{i=1}^m\lambda_{t,i} \left[\sum_{h=1}^H\overline{\alpha}_{i}^t(s_h, a_h)-\alpha_{i}(s_h, a_h)\right]$$
\noindent
can be upper bounded as follows:
\begin{align*}
    \sum_{i=1}^m\lambda_{t,i} \left[\sum_{h=1}^H\overline{\alpha}_{i}^t(s_h, a_h)-\alpha_{i}(s_h, a_h)\right] &\leq \sum_{i=1}^m\lambda_{t,i} \left[\sum_{h=1}^H\left|\overline{\alpha}_{i}^t(s_h, a_h)-\alpha_{i}(s_h, a_h)\right|\right]\\
    &\leq m\max_{i\in [m]}\lambda_{t,i}\left[\sum_{h=1}^H\left|\overline{\alpha}_{i}^t(s_h, a_h)-\alpha_{i}(s_h, a_h)\right|\right]\\
    &\leq m\rho\sum_{h=1}^H 2\zeta_h^{t}(s_h,a_h).
\end{align*}

By applying Lemmas \ref{lem:martingale} and \ref{lem:gw-sum}, and summing over $t$ and $h$, we further have
\begin{align}\label{eq:pes-1}
    &\sum_{t=1}^T \sum_{i=1}^m\lambda_{t,i} \left[\sum_{h=1}^H\overline{\alpha}_{i}^t(s_h, a_h)-\alpha_{i}(s_h, a_h)\right]
\leq 2m\rho \sum_{t=1}^T\sum_{h=1}^H \zeta_h^{t}(s_h,a_h)\nonumber\\
&\lesssim m\rho\left[\sum_{t=1}^T\sum_{h=1}^H\mathbb{E}\left[\zeta_h^{t}(s_h,a_h)\mid s_1,\pi,p)\right]+\sqrt{TH^3\text{ln}\frac{1}{\delta}}\right]\nonumber\\
&\lesssim m\rho\left[\sqrt{SAH^2T/\gamma}+SAH+\sqrt{TH\text{ln}\frac{1}{\delta}}\right]
\end{align}

\paragraph{Second term (policy difference)}
For the second term, by Lemma \ref{lem:OMD} (with $\pi=\pi^{\star}$), we obtain:
\begin{equation}\label{eq:pes-2}
    (ii) = \sum_{t=1}^{T'} \sum_{h=1}^{H} 
\mathbb{E} \left[ \Big\langle \hat{Q}_h^t(s_h, \cdot), \pi_h^{\star}(\cdot \mid s_h)-\pi_h^t(\cdot \mid s_h) \Big\rangle
\mid s_1, \pi^{\star}, p \right]
\lesssim \sqrt{H^4 (1 + m \rho)^2 T}.
\end{equation}

\paragraph{Third term (Bellman residual)}
Moreover, Lemma \ref{lem:policy estimation opt} shows that for all $s, a, h, t$, 
\begin{equation*}
    r_h(s,a) + \sum_{i=1}^{m} \lambda_t g_{i,h}(s,a) + p_h(\cdot \mid s,a) \hat{V}_{h+1}^t -\hat{Q}_h^t(s,a)\leq 0,
\end{equation*}
so ($iii$) satisfies
\begin{equation}\label{eq:pes-3}
    (iii) \leq 0. 
\end{equation}
Combining \eqref{eq:pes-1}, \eqref{eq:pes-2}, and \eqref{eq:pes-3}, we conclude the result:
\begin{equation*}
    \sum_{t=1}^{T'} \tilde{f}_t + \lambda_t^{\top} \tilde{y}_t - f_{\pi^{\star}} - \lambda_t^{\top} y_{\pi^{\star}} 
\lesssim \sqrt{H^4(1 + m\rho)^2 T}+\sqrt{m^2\rho^2SAH^2T/\gamma}+m\rho SAH+\sqrt{m^2\rho^2 TH\text{ln}\frac{1}{\delta}}.
\end{equation*}
\end{proof}

\subsection{Optimality of Optimistic Policies}\label{app:optimality-opt}
We now turn to the scenario in which thresholds are chosen optimistically. Although such thresholds may facilitate exploration and potentially higher returns, it is essential to establish theoretical guarantees (Lemma \ref{lem:opt-opt}) that they do not lead to excessively optimistic estimates. The next lemma shows that even with looser constraints, the performance remains near-optimal under standard assumptions.
\begin{customlem}{5}[Policy optimality with optimistic thresholds] \label{lem:optimality-opt} 
    Conditioned on the success event $\mathcal{E}$, for any episode $t \in [T']$, if we adopt the optimistic thresholds, then
    \begin{equation*}
     \sum_{t=1}^{T'} \tilde{f}_t + \lambda_t^{\top} \tilde{y}_t - f_{\pi^{\star}} - \lambda_t^{\top} y_{\pi^{\star}} \leq \tilde{\mathcal{O}}\left((1+m\rho')H^2\sqrt{T}\right).   
    \end{equation*}
where $\rho'$ denotes the Slater gap under optimistic thresholds.
\end{customlem}

\begin{proof}
Following the similar initial steps in the proof of Lemma \ref{lem:optimality-pes}, we decompose $\sum_{t=1}^{T'} \tilde{f}_t + \lambda_t^{\top} \tilde{y}_t - f_{\pi^{\star}} - \lambda_t^{\top} y_{\pi^{\star}}$ into three terms (I), (II) and (III). The second and third terms (II) and (III) are identical to terms ($ii$) and ($iii$). As previously proved, we have
\begin{equation}\label{eq:opt-1}
    (\text{II})\lesssim (1+m\rho')H^2\sqrt{T} \quad \text{and} \quad (\text{III}) \leq 0
\end{equation}
However, term (I) exhibits a significant difference under optimistic thresholds compared to the pessimistic scenario as follows:
\begin{equation*}
    (\text{I}) = \sum_{t=1}^T \sum_{i=1}^m\lambda_{t,i} \left[\sum_{h=1}^H\underline{\alpha}_{i}^t(s_h, a_h)-\alpha_{i}(s_h, a_h)\right].
\end{equation*}
Specifically, under optimistic thresholds, it holds for all $s,a,h,t,i$ that
$$\alpha_{i}(s_h, a_h)\geq \underline{\alpha}_{i}^t(s_h, a_h),$$
\noindent
which implies 
\begin{equation}\label{eq:opt-4}
    (iii) \leq 0.
\end{equation}

Combining \eqref{eq:opt-1} and \eqref{eq:opt-4}, we immediately obtain
\[
\sum_{t=1}^{T'} \tilde{f}_t + \lambda_t^{\top} \tilde{y}_t - f_{\pi^{\star}} - \lambda_t^{\top} y_{\pi^{\star}} 
\lesssim \tilde{\mathcal{O}}\left((1+m\rho')H^2\sqrt{T}\right).
\]
\end{proof}

\section{Regret Analysis}
\label{app:main theorem}
In this section, we present omitted proofs for the regret analysis in Section \ref{sec:thms}.

\subsection{Proof of Theorem \ref{thm:pes}}
We start by providing the proof under pessimistic thresholds.
\begin{customthm}{2}[Bounds for reward regret and constraint violation of pessimistic policies]\label{thm:pes-proof}
    Let $S$ be the number of states, $A$ be the number of actions, $H$ be the horizon length, $m$ be the number of constraints, $\rho$ be the Slater gap under pessimistic thresholds, $\gamma$ be the Growing-Window parameter, $T$ be the number of episodes and $\mathcal{N}$ denote the maximum number of non-zero
transition probabilities across all state-action pairs. Then with probability $1-\delta$, Algorithm \ref{alg:spot} under pessimistic thresholds obtains the following regret bounds:
    \begin{align*}
    \mathcal{R}_T(r) &\leq \tilde{\mathcal{O}} \left( \sqrt{S \mathcal{N} H^4 T} + \sqrt{H^4(1 + m\rho)^2 T}+\sqrt{m^2\rho^2SAH^2T/\gamma} + (\sqrt{\mathcal{N}} + H) H^2 S A \right),\\
    \mathcal{R}_T(g) &\leq \tilde{\mathcal{O}} \left( \frac{1}{\rho} \left( \sqrt{ S \mathcal{N} H^4 T} + (\sqrt{\mathcal{N}} + H)  H^2 S A \right) + m \sqrt{SAH^4 T/\gamma}\right),
\end{align*}
where $\tilde{\mathcal{O}}$ hides polylogarithmic factors in $(S,A,H,T,1/\delta)$.
\end{customthm}
\begin{proof} We begin by leveraging the properties of dual optimisation from Lemmas \ref{lem:dual optimism} and Lemma \ref{lem:update rule}, rearranging and obtaining that for any $\lambda$ with $0\leq \lambda\leq \rho 1$,

\begin{align*}
    \mathcal{L}(\lambda)=&\sum_{t=1}^{T'} (f_t - f_{\text{opt}}) + \sum_{t=1}^{T'} y_t^{\top} \lambda\\
    \leq& \sum_{t=1}^{T'} (y_t - \tilde{y}_t)^{\top} \lambda+\sum_{t=1}^{T'} \left(\tilde{f}_t + \lambda_t^{\top} \tilde{y}_t - f_{\pi^{\star}} - \lambda_t^{\top} y_{\pi^{\star}}\right)+\sum_{t=1}^{T'} (f_t - \tilde{f}_t)+ \frac{1}{2\eta_\lambda} \sum_{t=1}^{T'} \|\tilde{y}_t\|^2  +\frac{\eta_\lambda}{2} \|\lambda\|_2^2.
\end{align*}

Under the pessimistic thresholds, we have $y_t\leq \tilde{y}_t$ at episode $t$, thus 
\begin{equation*}
    \sum_{t=1}^{T'} (y_{t} - \tilde{y}_{t})\lambda\leq 0,
\end{equation*}
leading us to simplify the above inequality as
\begin{equation}\label{eq:decomposition2}
    \mathcal{L}(\lambda)\leq  
    \underbrace{\sum_{t=1}^{T'}\left( \tilde{f}_t + \lambda_t^{\top} \tilde{y}_t - f_{\pi^{\star}} - \lambda_t^{\top} y_{\pi^{\star}}\right)}_{(a)}
    +\underbrace{\sum_{t=1}^{T'} (f_t - \tilde{f}_t)}_{(b)}
    +\underbrace{\frac{1}{2\eta_\lambda} \sum_{t=1}^{T'} \|\tilde{y}_t\|^2}_{(c)}
    +\frac{\eta_\lambda}{2} \|\lambda\|_2^2.
\end{equation}
We proceed by individually bounding each term:
\paragraph{First term (policy optimality gap)} Under the pessimistic thresholds, it holds that
$\lambda_{t,i}(\overline{\alpha}_{i}^t-\alpha_i)\geq 0$ for any episode $t$ and constraint $i$. Thus by Lemma \ref{lem:optimality-pes}, we derive
\begin{equation*}
\sum_{t=1}^{T'} \left(\tilde{f}_t + \lambda_t^{\top} \tilde{y}_t - f_{\pi^{\star}} - \lambda_t^{\top} y_{\pi^{\star}}\right) 
\lesssim \sqrt{H^4(1 + m\rho)^2 T}+\sqrt{m^2\rho^2SAH^2T/\gamma}.
\end{equation*}

\paragraph{Second term (reward estimation error)} By Lemma \ref{lem:policy errors-truncated}, we get
\begin{align*}
    \left| \sum_{t=1}^{T'} (f_t - \tilde{f}_t) \right| = \left| \sum_{t=1}^{T'} \left(\hat{V}_{\overline{r}}^{\pi_t}(s_1)-V_r^{\pi_t}(s_1)\right) \right|
    \leq \tilde{\mathcal{O}} \left( \sqrt{S \mathcal{N} H^4 T} + (\sqrt{\mathcal{N}} + H) H^2 S A \right).
\end{align*}

\paragraph{Third term (dual update bound)} Due to $|\tilde{y}_{i,t}| \leq H$ for constraint $i\in [m]$, we have
\[
\frac{\sum_{t=1}^{T'} \|\tilde{y}_t\|^2}{2\eta_\lambda} \leq \frac{m H^2  T}{2\eta_\lambda}.
\]

Substituting these results back into \eqref{eq:decomposition2} and choosing the learning rate $\eta_\lambda = \sqrt{\frac{mH^2T}{\rho^2}}$, we achieve the intermediate bound, for any $0 \leq \lambda \leq \rho \mathbf{1}$,
\begin{align}\label{eq:pes-5}
    \sum_{t=1}^{T'} (f_t - f_{\text{opt}}) + \sum_{t=1}^{T'} y_t^{\top} \lambda\nonumber
    \lesssim& (\rho + \frac{\|\lambda\|_2^2}{\rho}) \sqrt{mH^2  T} + \left( \sqrt{S \mathcal{N} H^4 T} + (\sqrt{\mathcal{N}} + H) H^2 S A \right)\nonumber\\
    &+ \sqrt{H^4(1 + m\rho)^2 T}+\sqrt{m^2\rho^2SAH^2T/\gamma}.
\end{align}
\noindent

\paragraph{Reward regret} Setting $\bf\lambda = 0$ in \eqref{eq:pes-5}, we directly obtain the regret bound for rewards:
\begin{equation*}
    \mathcal{R}_T(r)
    \leq \tilde{\mathcal{O}} \left( \sqrt{S \mathcal{N} H^4 T} + (\sqrt{\mathcal{N}} + H) H^2 S A  + \sqrt{H^4(1 + m\rho)^2 T}+\sqrt{m^2\rho^2SAH^2T/\gamma}\right).
\end{equation*}
\paragraph{Constraint violation} For constraint $i \in [m]$, set the vector $\lambda_i=\rho e_i$ if $[\sum_{t=1}^{T'}y_{t,i}]_+>0$ and $0$ otherwise in \eqref{eq:pes-5}, where $e_i$ denotes the standard basis vector and the Slater gap $\rho$ is defined in Assumption \ref{asp-1}. This also guarantees $\|\lambda_i\|_2^2\leq \rho^2$. 
We then have
\begin{align}\label{eq:pes-7}
    &\sum_{t=1}^{T'} (f_t - f_{\text{opt}}) + \rho \left[ \sum_{t=1}^{T'} y_{i,t} \right]_+\nonumber\\
    \lesssim&
\left( \sqrt{ S \mathcal{N} H^4 T} + (\sqrt{\mathcal{N}} + H)H^2 S A \right) + \sqrt{H^4(1 + m\rho)^2 T}+\sqrt{m^2\rho^2SAH^2T/\gamma}:=\theta(T).
\end{align}
Due to the linearity of both the reward and constraints, combined with the convexity of the occupancy-measure set, we leverage Lemmas \ref{lem:F.2} and \ref{Lem:F.3} and then derive the following result for any $T' \in [T]$:
\begin{equation*}
    \max_{i \in [m]} \left[ \sum_{t=1}^{T'} y_t \right] \leq \left\| \left[\sum_{t=1}^{T'} y_t \right]_+\right\|_\infty \leq \frac{\theta(T)}{\rho}.
\end{equation*}
Therefore, we conclude 
\begin{equation*}
    \mathcal{R}_T(g) \leq \tilde{\mathcal{O}} \left( \frac{1}{\rho} \left( \sqrt{ S \mathcal{N} H^4 T} + (\sqrt{\mathcal{N}} + H)  H^2 S A \right) + m \sqrt{SAH^4 T/\gamma}\right).
\end{equation*}
\end{proof}

\subsection{Proof of Theorem \ref{thm:opt}}
Next, we prove the bounds of reward regret and constraint violation for the optimistic thresholds carefully. 
\begin{customthm}{3}[Bounds for reward regret and constraint violation with optimistic thresholds]
    Under the same conditions of Theorem \ref{thm:pes}, with probability $1-\delta$, Algorithm \ref{alg:spot} under optimistic thresholds holds the following regret bounds:
\begin{align*}
\mathcal{R}_T(r) \leq& \tilde{\mathcal{O}} \left( \sqrt{S \mathcal{N} H^4 T} + \sqrt{H^4 (1 + m\rho')^2 T} + (\sqrt{\mathcal{N}} + H) H^2 S A \right),\\
\mathcal{R}_T(g) \leq& \tilde{\mathcal{O}} \left( \left(1 + \frac{1}{\rho'} \right) \left( \sqrt{m S \mathcal{N} H^4 T} + (\sqrt{\mathcal{N}} + H) \sqrt{m} H^2 S A\right) + m \sqrt{H^4 T}\right)\\
&+\tilde{\mathcal{O}}\left(\sqrt{mSAH^2T/\gamma}+\sqrt{m}SAH\right).
\end{align*}
where $\rho'$ denotes the Slater gap under optimistic thresholds.
\end{customthm}

\begin{proof} We follow the similar initial decomposition as in the pessimistic scenario (Theorem \ref{thm:pes}), leading to the following inequality:
\begin{align}\label{eq:opt-L}
    \mathcal{L}(\lambda)=&\sum_{t=1}^{T'} (f_t - f_{\text{opt}}) + \sum_{t=1}^{T'} y_t^{\top} \lambda\nonumber\\
    \leq& \sum_{t=1}^{T'} (y_t - \tilde{y}_t)^{\top} \lambda+\sum_{t=1}^{T'} \left(\tilde{f}_t + \lambda_t^{\top} \tilde{y}_t - f_{\pi^{\star}} - \lambda_t^{\top} y_{\pi^{\star}}\right)+\sum_{t=1}^{T'} (f_t - \tilde{f}_t)+ \frac{1}{2\eta_\lambda} \sum_{t=1}^{T'} \|\tilde{y}_t\|^2  +\frac{\eta_\lambda}{2} \|\lambda\|_2^2\nonumber\\
    \leq&\underbrace{\sqrt{\sum_{i=1}^{m} \left( \sum_{t=1}^{T'} (y_{t,i} - \tilde{y}_{t,i}) \right)^2} \|\lambda\|_2}_{(a')}+\underbrace{\sum_{t=1}^{T'} \left(\tilde{f}_t + \lambda_t^{\top} \tilde{y}_t - f_{\pi^{\star}} - \lambda_t^{\top} y_{\pi^{\star}}\right)}_{(b')}\nonumber\\
    &+\underbrace{\sum_{t=1}^{T'} (f_t - \tilde{f}_t)}_{(c')}+ \underbrace{\frac{1}{2\eta_\lambda} \sum_{t=1}^{T'} \|\tilde{y}_t\|^2}_{(d')}  +\underbrace{\frac{\eta_\lambda}{2} \|\lambda\|_2^2}_{(e')},
\end{align}
where the second relation holds due to Cauchy–Schwarz inequality.

We bound each term individually, and the principal difference from the pessimistic scenario lies in controlling terms (a') and (b'):
\paragraph{First term (constraint and threshold estimation error)} Unlike pessimistic thresholds, for the optimistic thresholds, it holds that
$\lambda_{t,i}(\alpha_i-\underline{\alpha}_{i}^t)\geq 0$ for any constraint $i$ and episode $t$.
We then obtain
\begin{align}\label{eq:opt-first term}
  \left| \sum_{t=1}^{T'} (y_{t,i} - \tilde{y}_{t,i}) \right|&= \left| \sum_{t=1}^{T'} \left(\hat{V}_{\overline{g}_i}^{\pi_t}(s_1)-V_{g_i}^{\pi_t}(s_1)\right) +\sum_{t=1}^{T'}\left(\alpha_i-\underline{\alpha}_{i}^t\right)\right|\nonumber\\
  &\leq \left| \sum_{t=1}^{T'} \left(\hat{V}_{\overline{g}_i}^{\pi_t}(s_1)-V_{g_i}^{\pi_t}(s_1)\right) \right|+\left|\sum_{t=1}^{T'}\left(\alpha_i-\underline{\alpha}_{i}^t\right)\right|.  
\end{align}
According to Equation \eqref{eq:opt-first term}, we have
\begin{align*}
   \sqrt{\sum_{i=1}^{m} \left( \sum_{t=1}^{T'} (y_{t,i} - \tilde{y}_{t,i}) \right)^2 } \leq& \sqrt{m}\max_{i\in [m]}\left|\sum_{t=1}^{T'} (y_{t,i} - \tilde{y}_{t,i})\right|\\
   \leq& \sqrt{m}\max_{i\in [m]}\left[\left| \sum_{t=1}^{T'} \left(\hat{V}_{\overline{g}_i}^{\pi_t}(s_1)-V_{g_i}^{\pi_t}(s_1)\right) \right|+\left|\sum_{t=1}^{T'}(\alpha_i-\underline{\alpha}_{i}^t)\right|\right]\\
   \leq& \tilde{\mathcal{O}} \left( \sqrt{m S \mathcal{N} H^4 T} + (\sqrt{\mathcal{N}} + H +1) \sqrt{m} H^2 SA+\sqrt{mSAH^2T/\gamma}\right).
\end{align*}
Thus
\begin{equation*}
    (a')\leq\tilde{\mathcal{O}} \left( \sqrt{m S \mathcal{N} H^4 T} + (\sqrt{\mathcal{N}} + H +1) \sqrt{m} H^2 SA+\sqrt{mSAH^2T/\gamma}\right)\|\lambda\|_2.
\end{equation*}
\paragraph{Second term (policy optimality gap)} 
Under optimistic thresholds, by Lemma \ref{lem:optimality-opt}, we have
\begin{equation*}
    (b')=\sum_{t=1}^{T'} \left(\tilde{f}_t + \lambda_t^{\top} \tilde{y}_t - f_{\pi^{\star}} - \lambda_t^{\top} y_{\pi^{\star}}\right) \lesssim (1+m\rho')H^2\sqrt{T}.
\end{equation*}
\paragraph{Third and fourth terms} They are the same as in the pessimistic setting. We restate here that
\begin{equation*}
    (c')=\frac{\sum_{t=1}^{T'} \|\tilde{y}_t\|^2}{2\eta_\lambda}  \leq \frac{mH^2T}{2\eta_\lambda}.
\end{equation*}
\noindent
Under the success event $\mathcal{E}$, we have
\begin{equation*}
    (d')\leq \left| \sum_{t=1}^{T'} (f_t - \tilde{f}_t) \right| \leq \tilde{\mathcal{O}} \left( \sqrt{S \mathcal{N} H^4 T} + (\sqrt{\mathcal{N}} + H) H^2 S A \right).
\end{equation*}

Incorporating all pieces and then choosing $\eta_\lambda = \sqrt{\frac{mH^2T}{\rho'^2}}$, we derive an overall bound 
\begin{align}\label{eq:opt-8}
    \sum_{t=1}^{T'} (f_t - f_{\text{opt}}) + \sum_{t=1}^{T'} y_t^{\top} \lambda\nonumber
    \lesssim&  \left( \sqrt{m S \mathcal{N} H^4 T} + (\sqrt{\mathcal{N}} + H) \sqrt{m} H^2 S A\right)\|\lambda\|_2\nonumber\\
    &+\left(\sqrt{mSAH^2T/\gamma}+\sqrt{m}SAH\right) \|\lambda\|_2+(1+m\rho')H^2\sqrt{T}\nonumber\\
    &+ \left( \sqrt{S \mathcal{N} H^4 T} + (\sqrt{\mathcal{N}} + H) H^2 S A \right)+\left(\rho' + \frac{\|\lambda\|_2^2}{\rho'}\right) \sqrt{mH^2T}.
\end{align}
\paragraph{Reward regret} Taking $\bf\lambda = 0$ in \eqref{eq:opt-8}, we get the reward regret
\begin{equation*}
    \mathcal{R}_T(r)\leq \tilde{\mathcal{O}} \left( \sqrt{S \mathcal{N} H^4 T} + (\sqrt{\mathcal{N}} + H) H^2 S A  + (1+m\rho')H^2\sqrt{T} \right).
\end{equation*}
\paragraph{Constraint violation} For constraint violation, set $\lambda_i =\rho'e_i\textbf{1}_{\{\left[\sum_{t=1}^{T'} y_{i,t}\right]_+ \neq 0\}}$ for constraint $i\in [m]$.
We can obtain
\begin{align}\label{eq:opt-10}
    \sum_{t=1}^{T'} (f_t - f_{\text{opt}}) + \rho' \left[ \sum_{t=1}^{T'} y_{i,t} \right]_+\nonumber
    \leq& 
(1 + \rho') \left( \sqrt{m S \mathcal{N} H^4 T} + (\sqrt{\mathcal{N}} + H)\sqrt{m} H^2 S A\right)\nonumber\\
&+\rho'\left(\sqrt{mSAH^2T/\gamma}+\sqrt{m} SAH\right)+ (1+m\rho')H^2\sqrt{T}\nonumber\\
&:=\theta'(T).\nonumber
\end{align}
\end{proof}
 By Lemmas \ref{lem:F.2} and \ref{Lem:F.3}, we  have the following result for any $T' \in [T]$:
$\max_{i \in [m]} \left[ \sum_{t=1}^{T'} y_t \right] \leq \frac{\theta'(T)}{\rho'}$. Therefore, we conclude the bound for constraint violation
\begin{align*}
    \mathcal{R}_T(g) \leq& \tilde{\mathcal{O}} \left( \left(1 + \frac{1}{\rho'} \right) \left( \sqrt{m S \mathcal{N} H^4 T} + (\sqrt{\mathcal{N}} + H) \sqrt{m} H^2 S A\right) + m \sqrt{H^4 T}\right)\\
&+\tilde{\mathcal{O}}\left(\sqrt{mSAH^2T/\gamma}+\sqrt{m}SAH\right).
\end{align*}

\section{Constrained Convex Optimisation}

We consider the following optimisation problem:
\begin{equation}\label{eq:convex-1}
\begin{aligned}
f^{\star} \;=\; \min_{x} \; &f(x) \\
 \text{s.t. } &y_i(x) \le 0, \quad i \in [m], \\
 \quad &x \in X,
\end{aligned}
\end{equation}
accompanied by the assumption stated below.

\begin{assumption}[\cite{beck2017first}]
\label{assumption:convex}
In the context of \eqref{eq:convex-1}, we assume:
\begin{enumerate}
    \item $X \subset \mathbb{R}^n$ is convex.
    \item $f(\cdot)$ is convex.
    \item Each $y_i(\cdot)$ is concave for $i \in [m]$ and $y(\cdot):= (y_1(\cdot),\ldots,y_m(\cdot))^{\top}$.
    \item The problem in \eqref{eq:convex-1} has a finite optimal value $f^{\star}$, and there exists at least one $x \in X$ that attains this optimum, i.e., $X^{\star} \neq \emptyset$.
    \item There exists $\overline{x}\in X$ such that $y(\overline{x})<0$.
    \item For every $\lambda \in \mathbb{R}_{\ge0}^m$, the minimisation problem $\min_{x\in X}[f(x)+\lambda^{\top}y(x)]$ admits an optimal solution.
\end{enumerate}
\end{assumption}

Under these conditions, we define the dual objective function as
\[
q(\lambda) \;=\; \min_{x \in X} \; \mathcal{L}(x, \lambda)
\;=\; \min_{x \in X} \Bigl[f(x) + \lambda^{\top} y(x)\Bigr],
\]
where $\mathcal{L}(\cdot,\cdot)$ is the Lagrangian associated with the problem in \eqref{eq:convex-1}. The corresponding dual problem is then
\begin{align*}
 q^{\star} \;=\; \max \; &q(\lambda).\\
 \text{s.t. }&\lambda \ge 0.
\end{align*}
Within this framework, we have the following result that establishes a connection between the primal and dual problem.

\begin{lemma}[\cite{beck2017first}]
\label{lem:convex}
Under Assumption \ref{assumption:convex}, the strong duality property holds; that is, 
$$f^{\star} = q^{\star}.$$ Moreover, an optimal solution to the dual problem exists, with the set of optimal solutions $\Lambda^{\star} \neq \emptyset$.
\end{lemma}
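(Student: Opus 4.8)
The plan is to prove weak duality directly and then upgrade it to equality by a separating‑hyperplane argument, in which the Slater point of Assumption~\ref{assumption:convex} is exactly what prevents the separating functional from being ``vertical.'' Weak duality $q^{\star}\le f^{\star}$ is immediate: for any primal‑feasible $x$ and any $\lambda\ge 0$ we have $\lambda^{\top}y(x)\le 0$, hence $\mathcal{L}(x,\lambda)=f(x)+\lambda^{\top}y(x)\le f(x)$; minimising over $x\in X$ gives $q(\lambda)\le f^{\star}$ for every $\lambda\ge 0$, and taking the supremum yields $q^{\star}\le f^{\star}$. By Assumption~\ref{assumption:convex}(4)--(5) the primal is feasible with finite optimal value, so the whole task reduces to proving $q^{\star}\ge f^{\star}$.

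For the reverse inequality I would introduce the perturbed value set
\[
\mathcal{A}:=\bigl\{(u,v)\in\mathbb{R}^{m}\times\mathbb{R} \;:\; \exists\, x\in X \text{ with } y(x)\le u \text{ componentwise and } f(x)\le v\bigr\},
\]
which records the constraint/objective levels attainable after relaxing the constraints to level $u$. Using convexity of $X$, convexity of $f$, and the structural hypotheses on $y$ from Assumption~\ref{assumption:convex}(1)--(3), one checks that $\mathcal{A}$ is convex; it is moreover upward closed in $(u,v)$. By definition of $f^{\star}$ no point $(0,v)$ with $v<f^{\star}$ lies in $\mathcal{A}$, so $(0,f^{\star})$ is not an interior point of $\mathcal{A}$, and a supporting/separating hyperplane theorem produces a nonzero pair $(\mu,\nu)$ with $\mu\ge 0$, $\nu\ge 0$ (the sign restrictions follow by letting the components of $u$ and $v$ tend to $+\infty$ inside $\mathcal{A}$) such that
\[
\nu v+\mu^{\top}u \;\ge\; \nu f^{\star}\qquad\text{for all }(u,v)\in\mathcal{A}.
\]
Taking $(u,v)=(y(x),f(x))$ for arbitrary $x\in X$ gives $\nu f(x)+\mu^{\top}y(x)\ge \nu f^{\star}$.

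The crucial step is ruling out the degenerate case $\nu=0$. If $\nu=0$ then $\mu\ne 0$, $\mu\ge 0$, and $\mu^{\top}y(x)\ge 0$ for every $x\in X$; evaluating at the Slater point $\overline{x}$ of Assumption~\ref{assumption:convex}(5), where $y(\overline{x})<0$ componentwise, forces $\mu^{\top}y(\overline{x})<0$, a contradiction. Hence $\nu>0$; rescaling so that $\nu=1$ and setting $\lambda^{\star}:=\mu\ge 0$, the inequality reads $\mathcal{L}(x,\lambda^{\star})=f(x)+(\lambda^{\star})^{\top}y(x)\ge f^{\star}$ for all $x\in X$, so $q(\lambda^{\star})=\min_{x\in X}\mathcal{L}(x,\lambda^{\star})\ge f^{\star}$, the minimum being attained by Assumption~\ref{assumption:convex}(6). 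Combined with weak duality this gives $f^{\star}\le q(\lambda^{\star})\le q^{\star}\le f^{\star}$, so all four quantities coincide: $q^{\star}=f^{\star}$, and $\lambda^{\star}$ attains the dual maximum, i.e.\ $\Lambda^{\star}\ne\emptyset$.

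The main obstacle is the non‑verticality step ($\nu\neq 0$): everything else is bookkeeping, but this is the only place the Slater condition enters, and it demands care on two fronts — first, making sure the point $(0,f^{\star})$ against which we separate is genuinely a boundary point of the convex set $\mathcal{A}$ (if $\mathcal{A}$ has empty interior one passes to its relative interior, or one separates from $(0,v)$ with $v<f^{\star}$ and lets $v\uparrow f^{\star}$); and second, keeping the sign conventions on $y$ consistent with the Lagrangian $\mathcal{L}(x,\lambda)=f(x)+\lambda^{\top}y(x)$ and the restriction $\lambda\ge 0$, so that the separating functional's components land with the correct signs. A secondary, routine task is verifying convexity of $\mathcal{A}$ from Assumption~\ref{assumption:convex}(1)--(3).
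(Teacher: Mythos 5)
The paper offers no proof of this lemma: it is imported verbatim from Beck (2017), so there is no in-paper argument to compare against. Your proposal is the standard proof of Slater-type strong duality --- weak duality plus separation of $(0,f^{\star})$ from the perturbation set $\mathcal{A}$, with the Slater point ruling out a ``vertical'' separating functional --- and this is essentially the argument behind the cited textbook result; the construction of $\lambda^{\star}$ from the normalized separator simultaneously yields $q^{\star}=f^{\star}$ and $\Lambda^{\star}\neq\emptyset$, and your use of Assumption~\ref{assumption:convex}(6) to make the inner minimum attained is the right place to invoke it. One caveat you should make explicit: convexity of $\mathcal{A}$ requires each $y_i$ to be \emph{convex}, whereas Assumption~\ref{assumption:convex}(3) as written declares the $y_i$ concave while keeping the constraint $y_i(x)\le 0$ and the Lagrangian $f+\lambda^{\top}y$. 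This is an internal sign slip of the paper rather than a flaw in your argument (every instantiation in the paper uses affine $y_i$, for which both readings hold, and the paper's own Lemma~1 proof says ``affine and therefore convex''), but your ``one checks that $\mathcal{A}$ is convex'' step silently depends on the convex reading and would fail under the literal concave one.
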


\section{Preparation Lemmas} 

In this Appendix, we present important preparation lemmas from prior work \cite{efroni2019tight, efroni2020exploration}.
\begin{lemma}[\cite{efroni2020exploration}]\label{lem:value difference}
Let $\pi,\pi'$ be two policies, and $\mathcal{M}=(\mathcal{S},\mathcal{A},r,p)$ and $\mathcal{M}'=(\mathcal{S},\mathcal{A},r',p')$ be two MDPs. Let $\widehat{Q}_h^\pi(s,a;r,p)$ be an approximation of the $Q$-function of policy $\pi$ on the MDP $\mathcal{M}$ for all $h,s,a$, and let $\widehat{V}_h^\pi(s;r,p)=\left\langle \widehat{Q}_h^\pi(s,\cdot;r,p),\pi_h(\cdot\mid s)\right\rangle$. Then,
\begin{align*}
    &V_{r'}^{\pi'}(s_1)-\widehat{V}_r^{\pi}(s_1)=\\
    &\quad\sum_{h=1}^{H}\mathbb{E}\left[\left\langle\widehat{Q}_h^\pi(s_h,\cdot),\pi_h(\cdot\mid s_h)-\pi'_h(\cdot\mid s_h)\right\rangle\mid s_1,\pi',p'\right]+\\
    &\quad\sum_{h=1}^{H}\mathbb{E}\left[-\widehat{Q}_h^\pi(s_h,a_h;r,p)+r'_h(s_h,a_h)+p'_h(\cdot\mid s_h,a_h)\widehat{V}_{h+1}^\pi(\cdot;r,p)\mid s_1,\pi',p'\right]
\end{align*}
where $V_1^{\pi'}(s;r',p')$ is the value function of $\pi'$ in the MDP $\mathcal{M}'$.
\end{lemma}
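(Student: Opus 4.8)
The plan is to prove this by telescoping over the horizon, which is the standard route for an ``extended value difference'' identity and uses no property of the approximation $\widehat{Q}_h^\pi$ beyond its defining relation $\widehat{V}_h^\pi(s)=\langle \widehat{Q}_h^\pi(s,\cdot),\pi_h(\cdot\mid s)\rangle$ together with the boundary convention $\widehat{V}_{H+1}^\pi\equiv 0$ (as initialised in Algorithm \ref{alg:truncated_policy}). First I would fix once and for all the reference distribution to be the trajectory law induced by $(\pi',p')$ started at $s_1$, so that every expectation $\mathbb{E}[\,\cdot\mid s_1,\pi',p']$ below is taken under this single measure and all manipulations reduce to the tower rule.

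The key step is the pointwise decomposition, for each $h$,
\[
\widehat{V}_h^\pi(s_h) = \big\langle \widehat{Q}_h^\pi(s_h,\cdot),\, \pi_h(\cdot\mid s_h)-\pi'_h(\cdot\mid s_h)\big\rangle + \big\langle \widehat{Q}_h^\pi(s_h,\cdot),\, \pi'_h(\cdot\mid s_h)\big\rangle,
\]
obtained by adding and subtracting $\langle \widehat{Q}_h^\pi(s_h,\cdot),\pi'_h(\cdot\mid s_h)\rangle$ in the definition of $\widehat{V}_h^\pi$. Under the reference law the second term equals $\mathbb{E}[\widehat{Q}_h^\pi(s_h,a_h)\mid s_h]$ since $a_h\sim\pi'_h(\cdot\mid s_h)$, and $\mathbb{E}[\widehat{V}_{h+1}^\pi(s_{h+1})\mid s_h,a_h] = p'_h(\cdot\mid s_h,a_h)\widehat{V}_{h+1}^\pi(\cdot)$ since $s_{h+1}\sim p'_h(\cdot\mid s_h,a_h)$. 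Substituting these into the per-step difference $\mathbb{E}[\widehat{V}_h^\pi(s_h)-\widehat{V}_{h+1}^\pi(s_{h+1})\mid s_1]$ rewrites it as the policy-mismatch inner product plus the Bellman-residual term $\mathbb{E}[\widehat{Q}_h^\pi(s_h,a_h)-p'_h(\cdot\mid s_h,a_h)\widehat{V}_{h+1}^\pi(\cdot)\mid s_1]$. Summing over $h=1,\dots,H$ telescopes the left-hand side to $\widehat{V}_1^\pi(s_1)$ (using that $s_1$ is deterministic and $\widehat{V}_{H+1}^\pi\equiv 0$), producing a closed form for $\widehat{V}_1^\pi(s_1)$.

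Finally I would expand the true return as $V_{r'}^{\pi'}(s_1)=\sum_{h=1}^H \mathbb{E}[r'_h(s_h,a_h)\mid s_1,\pi',p']$ by definition of the value function, subtract the closed form for $\widehat{V}_1^\pi(s_1)$ obtained above, and regroup the $r'_h$, $\widehat{Q}_h^\pi$, and $p'_h\widehat{V}_{h+1}^\pi$ contributions step by step to land on the claimed identity. The one delicate point — which I would treat as the main obstacle, although it is bookkeeping rather than mathematics — is keeping every surviving term under the single law $(\pi',p')$ and tracking signs through the subtraction; in particular the identity $\mathbb{E}[\widehat{Q}_h^\pi(s_h,a_h)\mid s_h]=\langle \widehat{Q}_h^\pi(s_h,\cdot),\pi'_h(\cdot\mid s_h)\rangle$ is precisely what fixes which of $\pi_h,\pi'_h$ carries which sign in the mismatch term. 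No concentration, optimism, or duality machinery is needed: the identity is exact and deterministic once the trajectory law is fixed.
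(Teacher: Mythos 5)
Your telescoping strategy is the right one --- it is the standard proof of this extended value difference identity from the cited source, and the paper itself offers no proof (the lemma is imported by citation), so there is no alternative argument to compare against. The ingredients you list (add and subtract $\langle\widehat{Q}_h^\pi(s_h,\cdot),\pi'_h(\cdot\mid s_h)\rangle$, apply the tower rule under the single law $(\pi',p')$, telescope using $\widehat{V}_{H+1}^\pi\equiv 0$) are exactly what is needed, and you are right that no property of $\widehat{Q}_h^\pi$ beyond its defining relation is used.

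However, you assert that the argument ``lands on the claimed identity,'' and it does not: carried through, your own bookkeeping yields the policy-mismatch term with the \emph{opposite} sign from the one printed in the statement. Your closed form is
\begin{align*}
\widehat{V}_1^\pi(s_1)=\sum_{h=1}^H\mathbb{E}\Bigl[\bigl\langle\widehat{Q}_h^\pi(s_h,\cdot),\pi_h(\cdot\mid s_h)-\pi'_h(\cdot\mid s_h)\bigr\rangle\Bigr]+\sum_{h=1}^H\mathbb{E}\Bigl[\widehat{Q}_h^\pi(s_h,a_h)-p'_h(\cdot\mid s_h,a_h)\widehat{V}_{h+1}^\pi(\cdot)\Bigr],
\end{align*}
with all expectations under $(\pi',p')$, and subtracting this from $V_{r'}^{\pi'}(s_1)=\sum_{h}\mathbb{E}[r'_h(s_h,a_h)]$ negates \emph{both} sums, producing $\langle\widehat{Q}_h^\pi(s_h,\cdot),\pi'_h(\cdot\mid s_h)-\pi_h(\cdot\mid s_h)\rangle$ in the first sum rather than $\pi_h-\pi'_h$ as stated. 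The printed statement is a sign-inconsistent transcription of the original (where the left-hand side is $\widehat{V}_1^\pi-V_{r'}^{\pi'}$ and both sums carry the other orientation); note that the paper's own subsequent use of the lemma in the decomposition for Lemma \ref{lem:pes-opt} writes the mismatch term as $\langle\hat{Q}_h^t(s_h,\cdot),\pi_h^{\star}(\cdot\mid s_h)-\pi_h^t(\cdot\mid s_h)\rangle$, i.e.\ with the $\pi'-\pi$ orientation your derivation actually produces. You correctly flagged the sign-tracking as the one delicate point, but then waved it through; to close the proof you must either exhibit the sign you actually obtain and record that the first sum in the statement should read $\pi'_h(\cdot\mid s_h)-\pi_h(\cdot\mid s_h)$, or accept that you are proving a statement that is false as written.
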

\begin{lemma}[\cite{efroni2019tight}]\label{lem:estimation-sum}
Assume that for all $s, a, h, t \in [T]$
\[
N_h^{t}(s, a) > \frac{1}{2} \sum_{j<t} q_h^{\pi_j}(s, a; p) - H \ln \left(\frac{SAH}{\delta'}\right),
\]
then
\[
\sum_{t=1}^{T} \sum_{h=1}^{H} \mathbb{E} \left[ \sqrt{\frac{1}{N_h^{t}(s_h^t, a_h^t) \vee 1}} \mid \mathcal{F}_{t-1} \right] 
\leq \tilde{\mathcal{O}}(\sqrt{SAH^2T} + SAH).
\]
\end{lemma}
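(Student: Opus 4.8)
This statement is exactly the full-history specialisation ($W_t=t$, i.e.\ $\gamma=1$) of Lemma \ref{lem:gw-sum}, so the quickest route is simply to invoke that result directly; below I sketch the self-contained argument, which is the standard occupancy-measure potential bound. The first step is to remove the conditional expectation: since the policy $\pi_t$ and the visitation counter $N_h^{t}(\cdot,\cdot)$ are both $\mathcal{F}_{t-1}$-measurable while the episode-$t$ trajectory is generated by $\pi_t$ and the true transition $p$, for every $h,t$ we have
\[
\mathbb{E}\left[\sqrt{\frac{1}{N_h^{t}(s_h^t,a_h^t)\vee 1}}\mid\mathcal{F}_{t-1}\right]
=\sum_{s,a} q_h^{\pi_t}(s,a;p)\,\frac{1}{\sqrt{N_h^{t}(s,a)\vee 1}},
\]
so it suffices to bound $\sum_{t=1}^{T}\sum_{h=1}^{H}\sum_{s,a} q_h^{\pi_t}(s,a;p)\,(N_h^{t}(s,a)\vee 1)^{-1/2}$.

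Next I would split each summand according to whether $N_h^{t}(s,a)$ lies below or above a threshold $c$ of order $H\ln(SAH/\delta')$. For the \emph{small-count} terms I bound the factor by $1$ and use the hypothesis $N_h^{t}(s,a)>\tfrac12\sum_{j<t}q_h^{\pi_j}(s,a;p)-H\ln(SAH/\delta')$: the event $N_h^{t}(s,a)<c$ forces $\sum_{j<t}q_h^{\pi_j}(s,a;p)<2c+2H\ln(SAH/\delta')$, and since occupancies accumulate by at most $1$ per episode, for each fixed $(h,s,a)$ the set $\{t:N_h^{t}(s,a)<c\}$ is an initial segment on which the total occupancy stays of order $H\ln(SAH/\delta')$; summing over the $SAH$ triples contributes the additive term $\tilde{\mathcal{O}}(SAH)$. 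For the \emph{large-count} terms, once $\sum_{j<t}q_h^{\pi_j}(s,a;p)\ge 4H\ln(SAH/\delta')$ the hypothesis gives $N_h^{t}(s,a)\ge\tfrac14\sum_{j<t}q_h^{\pi_j}(s,a;p)$, and since then also $\sum_{j<t}q_h^{\pi_j}(s,a;p)\ge\tfrac12\sum_{j\le t}q_h^{\pi_j}(s,a;p)$ we obtain $(N_h^{t}(s,a))^{-1/2}\lesssim(\sum_{j\le t}q_h^{\pi_j}(s,a;p))^{-1/2}$. Applying the elementary inequality $\sum_{i=1}^{n}x_i(\sum_{j\le i}x_j)^{-1/2}\le 2\sqrt{\sum_{i=1}^{n}x_i}$ (valid for $x_i\ge 0$) per triple with $x_t=q_h^{\pi_t}(s,a;p)$, and then Cauchy--Schwarz over the $SAH$ triples together with $\sum_{t,h,s,a}q_h^{\pi_t}(s,a;p)=\sum_{t,h}1=TH$, this part is at most $\sqrt{SAH}\cdot\sqrt{TH}=\sqrt{SAH^2T}$. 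Combining the two parts yields $\tilde{\mathcal{O}}(\sqrt{SAH^2T}+SAH)$.

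The only delicate point I anticipate is the book-keeping in the small-count regime: one must convert the additive $H\ln(SAH/\delta')$ slack in the hypothesis (precisely the quantity that makes the empirical counters trackable) into a genuinely lower-order additive contribution, so it does not contaminate the leading $\sqrt{T}$-scaling produced by the large-count terms. The remaining ingredients, namely the telescoping/potential inequality $\sum_i x_i/\sqrt{\sum_{j\le i}x_j}\le 2\sqrt{\sum_i x_i}$ and the final Cauchy--Schwarz step, are routine; alternatively, one bypasses all of this by citing Lemma \ref{lem:gw-sum} with $\gamma=1$.
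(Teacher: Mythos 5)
The paper does not actually prove this statement: Lemma \ref{lem:estimation-sum} is imported verbatim from Efroni et al.\ and used as a black box, so there is no in-paper derivation to compare against. Your primary route --- observing that the claim is the $\gamma=1$ (full-history) instance of Lemma \ref{lem:gw-sum}, whose bound $\tilde{\mathcal{O}}(\sqrt{SAH^2T/\gamma}+SAH)$ specialises exactly to $\tilde{\mathcal{O}}(\sqrt{SAH^2T}+SAH)$ --- is legitimate and is arguably the cleanest justification available inside this paper; it is consistent with the paper's own remark that the window $W_t=\lfloor\gamma t\rfloor$ with $\gamma=1$ recovers the full-history (Monte Carlo) counters. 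Your self-contained sketch is the standard occupancy-measure potential argument and structurally matches the paper's proof of Lemma \ref{lem:gw-sum}: rewrite the conditional expectation as $\sum_{s,a}q_h^{\pi_t}(s,a;p)\,(N_h^{t}(s,a)\vee1)^{-1/2}$ (valid because the counter is $\mathcal{F}_{t-1}$-measurable), split on the size of the counter, use the hypothesis to replace $N_h^{t}$ by a constant fraction of the cumulative occupancy in the large-count regime, and finish with the telescoping inequality $\sum_i x_i(\sum_{j\le i}x_j)^{-1/2}\le 2\sqrt{\sum_i x_i}$ plus Cauchy--Schwarz over the $SAH$ triples and $\sum_{t,h,s,a}q_h^{\pi_t}=TH$. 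The leading $\sqrt{SAH^2T}$ term is handled correctly.

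The one place your bookkeeping does not close is exactly the point you flag: in the small-count regime your own accounting gives a per-triple contribution of order $H\ln(SAH/\delta')$, and summing over $SAH$ triples yields $\tilde{\mathcal{O}}(SAH^2)$, not $\tilde{\mathcal{O}}(SAH)$ --- the extra factor of $H$ is not polylogarithmic and cannot be hidden in $\tilde{\mathcal{O}}$. To recover the stated additive term one should isolate only the $N_h^{t}=0$ block for the trivial bound (each triple contributes at most $1$ in expectation before its first visit, by a stopping-time argument, giving $SAH$ overall) and fold the $N_h^{t}\ge1$ low-count episodes into the Cauchy--Schwarz step via $\sum_t w_h^t(s,a)/(N_h^{t}(s,a)\vee1)\lesssim\ln T$ per triple, rather than bounding the integrand by $1$ there. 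For what it is worth, the paper's proof of Lemma \ref{lem:gw-sum} asserts $\sum_{t,h}\sum_{(s,a)\in L_t^c}w_h^t\le SAH$ for its $N=0$ block without justification, so this lower-order term is not treated rigorously anywhere in the paper either; in any case it does not affect the dominant $\sqrt{T}$ scaling.
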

\begin{lemma}[\cite{efroni2020exploration}]\label{lem:policy estimation opt} 
    Under the success event, for any $s, a, h, t$, the following bound holds:
\[
r_h(s,a) + \sum_{i=1}^{m} \lambda_t g_{i,h}(s,a) + p_h(\cdot \mid s,a) V_{h+1}^t -Q_h^t(s,a)\leq 0.
\]
\textit{where}
\[
Q_h^t(s,a) = Q_h^{\pi_t}(s,a; \overline{r}_t, \bar{p}_{t}) + \sum_{i=1}^{m} \lambda_{t,i} Q_h^{\pi_t}(s,a; \overline{g}_{t,i}, \bar{p}_{t}), 
\]

\[
V_h^t(s) = \langle Q_h^t(s, \cdot), \pi_h^t(\cdot \mid s) \rangle. 
\]
\end{lemma}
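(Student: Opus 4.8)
This is a one-step optimism bound, so I would prove it directly; the only recursion involved is the trivial fact that the $\min$-truncation in Algorithm~\ref{alg:truncated_policy} forces $\hat V^{\pi_t}_{\overline l,h+1}(\cdot)\le H-h$ for every payoff $\overline l$ (I write the TPE outputs with hats throughout, as elsewhere in the appendix). First I would reduce the composite inequality to a per-payoff one: since $\hat Q^t_h=\hat Q^{\pi_t}_h(\cdot;\overline r_t,\overline p_t)+\sum_i\lambda_{t,i}\hat Q^{\pi_t}_h(\cdot;\overline g_{i,t},\overline p_t)$, $\hat V^t_{h+1}=\hat V^{\pi_t}_{\overline r,h+1}+\sum_i\lambda_{t,i}\hat V^{\pi_t}_{\overline g_i,h+1}$, and $\lambda_{t,i}\ge 0$, it suffices to establish, for each pair $(l,\overline l)\in\{(r,\overline r_t)\}\cup\{(g_i,\overline g_{i,t})\}_{i\in[m]}$,
\[
l_h(s,a)+\sum_{s'}p_h(s'\mid s,a)\,\hat V^{\pi_t}_{\overline l,h+1}(s')\;\le\;\hat Q^{\pi_t}_h(s,a;\overline l,\overline p_t)\qquad\text{for all }s,a,h,t,
\]
and then combine the $r$-claim with $\sum_i\lambda_{t,i}$ times the $g_i$-claims.

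To prove the per-payoff claim I would split on whether the $\min$ in the TPE update is active. If it is, $\hat Q^{\pi_t}_h(s,a;\overline l,\overline p_t)=H-h+1$, and since $l_h(s,a)\in[0,1]$ and $\hat V^{\pi_t}_{\overline l,h+1}(\cdot)\le H-h$, the left-hand side is at most $1+(H-h)=H-h+1$ — no reference to the bonuses is needed. If the $\min$ is inactive, then $\hat Q^{\pi_t}_h(s,a;\overline l,\overline p_t)=\overline l^t_h(s,a)+\sum_{s'}\hat p^{t-1}_h(s'\mid s,a)\hat V^{\pi_t}_{\overline l,h+1}(s')$ (using $\overline p^t_h=\hat p^{t-1}_h$), so the left-hand side minus the right-hand side equals
\[
\big(l_h(s,a)-\overline l^t_h(s,a)\big)+\sum_{s'}\big(p_h(s'\mid s,a)-\hat p^{t-1}_h(s'\mid s,a)\big)\,\hat V^{\pi_t}_{\overline l,h+1}(s').
\]
Conditioning on the success event $\mathcal E$: because $\overline l^t_h=\hat l^{t-1}_h+\phi^{t-1}_h$ with $\phi^{t-1}_h=\phi^{r,t-1}_h+\phi^{p,t-1}_h$ and $\overline{F^r}$ controls the reward/cost deviation by $\phi^{r,t-1}_h$, the first summand is at most $-\phi^{p,t-1}_h(s,a)$; and $\overline{F^p}$ gives $\|p_h(\cdot\mid s,a)-\hat p^{t-1}_h(\cdot\mid s,a)\|_1\le \phi^{p,t-1}_h(s,a)/H$, which together with $\|\hat V^{\pi_t}_{\overline l,h+1}\|_\infty\le H$ bounds the second summand in absolute value by $\phi^{p,t-1}_h(s,a)$. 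Hence the difference is at most $-\phi^{p,t-1}_h(s,a)+\phi^{p,t-1}_h(s,a)=0$, which is the per-payoff claim.

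The main obstacle — and the only point where the specific form of the bonuses matters — is that transition term: one must verify that $\phi^{p,t}_h(s,a)=\mathcal O\!\big(H\sqrt{(S+\ln(SAHT/\delta))/\max\{1,N^t_h(s,a)\}}\big)$ dominates the $\ell_1$ deviation of the empirical kernel times the value range $H$. This is exactly what the event $\overline{F^p}$ encodes, and its validity rests on an $\ell_1$/empirical-Bernstein concentration inequality for multinomials \cite{maurer2009empirical} plus a union bound over $(s,a,s',h,t)$; the care there is in converting the per-next-state bound in the definition of $F^p_t$ into the summed $\ell_1$ form (this is where the extra $\sqrt S$ factor in $\phi^{p,t}_h$ enters), together with the routine $t$-versus-$(t-1)$ bookkeeping of the visit counts. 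Everything else — the reduction to per-payoff claims, the boundedness $\hat V^{\pi_t}_{\overline l,h+1}\le H-h$, and the optimism $\overline l^t_h\ge l_h$ of the empirical-plus-bonus estimates — follows directly from the definitions in Equations~\eqref{empirical-p}--\eqref{opt-p} and the TPE recursion.
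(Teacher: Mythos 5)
Your proposal is correct. Note that the paper itself gives no proof of this lemma --- it is imported verbatim from Efroni et al.\ \cite{efroni2020exploration} as a preparation lemma --- so the relevant comparison is with the argument in that reference, and yours matches it: reduce the composite bound to a per-payoff optimism claim by linearity and $\lambda_{t,i}\ge 0$, handle the truncated branch via $\hat V^{\pi_t}_{\overline l,h+1}\le H-h$ and $l_h\in[0,1]$, and in the untruncated branch absorb the reward/cost deviation into $\phi^{r,t-1}_h$ and the transition deviation (via the $\ell_1$ bound times $\|\hat V\|_\infty\le H$) into $\phi^{p,t-1}_h$, so that the residual is $\le -\phi^{p,t-1}_h+\phi^{p,t-1}_h=0$. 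You also correctly identify the one place where care is needed, namely converting the per-next-state concentration encoded in $F^p_t$ into the summed $\ell_1$ form that the bonus $\phi^{p,t}_h$ is sized to dominate; that step is exactly what the cited multinomial concentration result supplies.
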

\noindent
See that $Q_h^{\pi_t}(s,a; \overline{r}_t, \bar{p}_{t-1})$, $Q_h^{\pi_t}(s,a; \overline{g}_{t,i}, \bar{p}_{t-1})$ are defined in the update rule of Algorithm \ref{alg:spot}.

\begin{lemma}[OMD term bound \cite{efroni2020exploration}] \label{lem:OMD} 
    Conditioned on the success event, we have that for any $\pi$
\[
\sum_{t=1}^{T} \sum_{h=1}^{H} 
\mathbb{E} \left[ \Big\langle Q_h^t(s_h, \cdot), \pi_h(\cdot \mid s_h)-\pi_h^t(\cdot \mid s_h) \Big\rangle
\mid s_1 = s, \pi, p \right]
\leq \sqrt{2 H^4 (1 + m \rho)^2 T \log A}.
\]
\end{lemma}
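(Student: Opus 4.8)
The plan is to recognise the primal update in Line~6 of Algorithm~\ref{alg:spot} as the exponential-weights (Hedge) algorithm --- equivalently, entropic online mirror ascent --- run independently on each probability simplex $\Delta(\mathcal{A})$ indexed by a state--step pair $(s,h)$, with ``gain'' vectors $\hat{Q}_h^t(s,\cdot)$, where $\hat{Q}_h^t(s,a)=\hat{Q}_h^{\pi_t}(s,a;\overline{r}_t,\overline{p}_t)+\sum_{i=1}^{m}\lambda_{t,i}\,\hat{Q}_h^{\pi_t}(s,a;\overline{g}_{t,i},\overline{p}_t)$ is the composite truncated $Q$-value produced by Algorithm~\ref{alg:truncated_policy}. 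Under this reading, the left-hand side of the lemma is exactly the cumulative comparator regret of this procedure against $\pi_h(\cdot\mid s)$, averaged over the random state $s=s_h$ induced by $(s_1,\pi,p)$ and summed over $h\in[H]$, so the task reduces to a per-$(s,h)$ online-learning regret bound multiplied by $H$.

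First I would fix $h\in[H]$ and $s\in\mathcal{S}$ and treat $\{\pi_h^t(\cdot\mid s)\}_{t\ge1}$ as Hedge iterates with uniform initialisation $\pi_h^1(\cdot\mid s)=\mathbf{1}/A$ and a constant step size $\eta$. The standard mirror-ascent inequality with the negative-entropy regulariser gives, for any comparator $u\in\Delta(\mathcal{A})$, the bound $\sum_{t}\langle \hat{Q}_h^t(s,\cdot),u-\pi_h^t(\cdot\mid s)\rangle\le \eta^{-1}\mathrm{KL}(u\,\|\,\mathbf{1}/A)+\tfrac{\eta}{2}\sum_{t}\langle \pi_h^t(\cdot\mid s),\hat{Q}_h^t(s,\cdot)^2\rangle$; using $\mathrm{KL}(u\,\|\,\mathbf{1}/A)\le\log A$ and $\langle \pi_h^t(\cdot\mid s),\hat{Q}_h^t(s,\cdot)^2\rangle\le\|\hat{Q}_h^t(s,\cdot)\|_\infty^2$, this simplifies to
\[
\sum_{t=1}^{T}\big\langle \hat{Q}_h^t(s,\cdot),\,u-\pi_h^t(\cdot\mid s)\big\rangle \;\le\; \frac{\log A}{\eta}+\frac{\eta}{2}\sum_{t=1}^{T}\big\|\hat{Q}_h^t(s,\cdot)\big\|_\infty^2 .
\]
Second, I would bound the range of the composite $Q$-value: the truncation in Algorithm~\ref{alg:truncated_policy} forces each $\hat{Q}_h^{\pi_t}(s,a;\overline{r}_t,\overline{p}_t)$ and $\hat{Q}_h^{\pi_t}(s,a;\overline{g}_{t,i},\overline{p}_t)$ into $[0,H]$, while the dual projection keeps $\lambda_t\in\mathcal{C}=[0,\rho\mathbf{1}]^m$; hence $0\le\hat{Q}_h^t(s,a)\le H(1+m\rho)$ uniformly in $s,a,h,t$. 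Substituting $\|\hat{Q}_h^t(s,\cdot)\|_\infty^2\le H^2(1+m\rho)^2$ and optimising over $\eta=\frac{1}{H(1+m\rho)}\sqrt{2\log A/T}$ balances the two terms and bounds the per-$(s,h)$ regret by $H(1+m\rho)\sqrt{2T\log A}$.

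Finally I would take $u=\pi_h(\cdot\mid s)$, note the bound is uniform in $s$ so it is unchanged by taking the conditional expectation over $s_h$ under $(s_1,\pi,p)$, and sum over the $H$ steps, obtaining $H^2(1+m\rho)\sqrt{2T\log A}=\sqrt{2H^4(1+m\rho)^2T\log A}$, the claimed bound. The main point requiring care is verifying that the generic Hedge regret inequality applies here verbatim: conditioning on the success event $\mathcal{E}$ enters only through the $Q$-range bound (via $\lambda_t\le\rho\mathbf{1}$ and truncation), since the mirror-ascent identity itself is a deterministic consequence of the multiplicative-weights recursion; and the step size used in Algorithm~\ref{alg:spot} must be matched to the tuned constant $\eta$ above --- if anytime step sizes $\eta_t\propto t^{-1/2}$ are used instead, the same $\tilde{\mathcal{O}}$-order bound follows from the standard time-varying-step-size analysis of Hedge at the cost of slightly worse absolute constants. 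The remaining steps are routine bookkeeping.
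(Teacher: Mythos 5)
The paper does not actually prove this lemma --- it is imported verbatim from Efroni et al.\ \cite{efroni2020exploration} as a preparation lemma --- and your reconstruction is precisely the standard argument behind it: per-$(s,h)$ Hedge/entropic-OMD regret against the comparator $\pi_h(\cdot\mid s)$, the uniform range bound $0\le\hat{Q}_h^t(s,a)\le H(1+m\rho)$ coming from the algorithmic truncation and the dual projection onto $[0,\rho\mathbf{1}]^m$, and step-size tuning followed by summation over $h$. The only points worth flagging are that the second-order term $\tfrac{\eta}{2}\sum_t\langle\pi_h^t(\cdot\mid s),\hat{Q}_h^t(s,\cdot)^2\rangle$ in the Hedge inequality requires $\eta\|\hat{Q}_h^t(s,\cdot)\|_\infty\le 1$ (which your tuned $\eta$ satisfies once $T\ge 2\log A$), and that, as you correctly observe, the truncation and projection are algorithmic, so the conditioning on the success event is not doing any real work in this particular lemma.
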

\begin{lemma}[Dual optimism \cite{efroni2020exploration}] \label{lem:dual optimism} 
    Conditioning on the success event, for any $t \in [T']$
\[
\tilde{f}_t - f_{\text{opt}} \leq -\lambda_t^{\top} \tilde{y}_t + \left( \tilde{f}_t + \lambda_t^{\top} \tilde{y}_t - f_{\pi^{\star}} - \lambda_t^{\top} y_{\pi^{\star}} \right).
\]
\end{lemma}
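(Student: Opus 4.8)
The plan is to reduce the claimed inequality, by direct cancellation, to an elementary sign condition on $\lambda_t$ and $\pi^\star$, and then verify that condition from the algorithm's projection step and the feasibility of $\pi^\star$. First I would expand the right-hand side of the asserted bound: the summand $-\lambda_t^\top\tilde y_t$ cancels the $+\lambda_t^\top\tilde y_t$ inside the parenthesis, and the $\tilde f_t$ inside the parenthesis cancels the $\tilde f_t$ on the left, so that the inequality is equivalent to $-f_{\mathrm{opt}}\le -f_{\pi^\star}-\lambda_t^\top y_{\pi^\star}$. By the notation fixed in Appendix~\ref{app:optimality}, $f_{\mathrm{opt}}$ and $f_{\pi^\star}$ both equal $-V_r^{\pi^\star}(s_1)$, so the target collapses to $\lambda_t^\top y_{\pi^\star}\le 0$.

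Next I would verify $\lambda_t^\top y_{\pi^\star}\le 0$ componentwise. The dual iterate $\lambda_t$ is the output of $\Pi_{\mathcal C}$ with $\mathcal C=\{\lambda\in\mathbb{R}^m:0\le\lambda\le\rho\mathbf{1}\}$, hence $\lambda_{t,i}\ge0$ for all $i\in[m]$. The vector $y_{\pi^\star}$ is the \emph{true} constraint slack of the optimal policy, $y_{\pi^\star}=\alpha-Gq^{\pi^\star}$ (one reads this off the three-term decomposition in the proof of Lemma~\ref{lem:optimality-pes}); since $q^{\pi^\star}\in\mathcal F_q$ under Assumption~\ref{asp-1}, we have $(Gq^{\pi^\star})_i=V_{g_i}^{\pi^\star}\ge\alpha_i$, i.e. $(y_{\pi^\star})_i\le0$. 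Multiplying the two sign facts term by term and summing gives $\lambda_t^\top y_{\pi^\star}=\sum_{i\in[m]}\lambda_{t,i}(y_{\pi^\star})_i\le0$, which is exactly what the previous step required.

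The only place that needs genuine care is the bookkeeping of which threshold each symbol carries: $\tilde y_t$ is formed with the scenario-dependent estimate $\alpha_\diamond^t\in\{\overline\alpha^t,\underline\alpha^t\}$, whereas $y_{\pi^\star}$ must be read against the true $\alpha$ for feasibility of $\pi^\star$ to bite; one should check, as in the derivation of the decomposition preceding Lemma~\ref{lem:optimality-pes}, that the $\alpha_\diamond^t$ contributions cancel and only the true $\alpha$ remains in $y_{\pi^\star}$. Conditioning on the success event $\mathcal E$ is inherited from the surrounding analysis but is not essential here, since feasibility of $\pi^\star$ is deterministic; and the optimistic-threshold version is identical, because $y_{\pi^\star}$ is defined via the true $\alpha$ irrespective of the algorithm's choice $\alpha_\diamond^t=\underline\alpha^t$. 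I do not anticipate a real obstacle — after the cancellation the statement is just a restatement of $\lambda_t\ge0$ together with $\pi^\star\in\mathcal F_q$.
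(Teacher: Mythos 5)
Your proposal is correct: after the cancellation the lemma is exactly the statement $\lambda_t^{\top}y_{\pi^\star}\le 0$, which follows from $\lambda_t\in\mathcal{C}$ (so $\lambda_{t,i}\ge 0$) together with feasibility of $\pi^\star$ under the true thresholds (so $(y_{\pi^\star})_i=\alpha_i-V_{g_i}^{\pi^\star}\le 0$), and your observations that $f_{\mathrm{opt}}=f_{\pi^\star}$ and that the success event is not actually needed are both accurate. The paper itself imports this lemma from Efroni et al.\ without proof, so there is no in-paper argument to compare against; your derivation is the standard one from the cited source and fills that gap correctly.
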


\begin{lemma}[On policy errors for truncated policy estimation \cite{efroni2020exploration}]
\label{lem:policy errors-truncated}
    Let $l_h(s,a), \overline{l}_h^t(s,a)$ be a cost function, and its optimistic cost. 
Let $p$ be the true transition dynamics of the MDP and $\overline{p}_t$ be an estimated transition dynamics. 
Let $V_h^{\pi}(s; l, p)$ be the value of a policy $\pi$ according to the cost and transition model $l, p$. 
Furthermore, let $\hat{V}_h^{\pi}(s; \overline{l}_t, \overline{p}_t)$ be a value function calculated by a truncated value estimation by the cost and transition model $\overline{l}_t, \overline{p}_t$. 
Assume the following holds for all $s, a, h, t \in [T]$:
\begin{enumerate}
    \item $\big|\hat{l}_h^t(s,a) - l_h(s,a)\big| \lesssim \sqrt{\frac{1}{N_h^t(s,a)}}$.
    \item $\big|\overline{p}_h^t(s' \mid s,a) - p_h(s' \mid s,a)\big| \lesssim \sqrt{\frac{p_h(s' \mid s,a)}{N_h^t(s,a) \lor 1}} + \frac{1}{N_h^t(s,a) \lor 1}$.
    \item $N_h^t(s,a) \leq \frac{1}{2}\sum_{j<t} q_h^{\pi_t}(s,a \mid p) - H \ln\frac{SAH}{\delta'}$.
\end{enumerate}
Furthermore, let $\pi_t$ be the policy by which the agent acts at the $t$-th episode. Then, for any $T' \in [T]$, we have:
\[
\sum_{t=1}^{T'} \left| V_1^{\pi_t}(s_1; l, p) - \hat{V}_1^{\pi_t}(s_1; \overline{l}_t, \overline{p}_t) \right| 
\leq \tilde{\mathcal{O}}\left(\sqrt{S \mathcal{N} H^4 T} + (\sqrt{\mathcal{N}} + H) H^2 S A \right).
\]
\end{lemma}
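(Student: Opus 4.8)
The plan is to turn the claim into a simulation-lemma estimate and then bound the resulting one-step residuals term by term. First I would apply the value-difference identity (Lemma~\ref{lem:value difference}) with both policies equal to $\pi_t$, the true model $(l,p)$ in the ``primed'' slot, the estimated model $(\overline{l}_t,\overline{p}_t)$ in the other slot, and $\hat Q$ taken to be the truncated $Q$-function produced by Algorithm~\ref{alg:truncated_policy}. Since the two policies coincide, the policy-difference term of that identity vanishes identically, leaving
\[
V_1^{\pi_t}(s_1;l,p)-\hat V_1^{\pi_t}(s_1;\overline{l}_t,\overline{p}_t)=\sum_{h=1}^{H}\mathbb{E}\big[\,b_h^t(s_h,a_h)\,\big|\,s_1,\pi_t,p\,\big],
\]
with $b_h^t(s,a):=-\hat Q_h^{\pi_t}(s,a;\overline{l}_t,\overline{p}_t)+l_h(s,a)+p_h(\cdot\mid s,a)\hat V_{h+1}^{\pi_t}(\cdot;\overline{l}_t,\overline{p}_t)$ the one-step Bellman residual of the truncated estimate against the true dynamics.

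Next I would discard the truncation at no cost. Since $\hat Q_h^{\pi_t}=\min\{\overline{l}_h^t+\overline{p}_h^t\hat V_{h+1}^{\pi_t},\,H-h+1\}\le\overline{l}_h^t+\overline{p}_h^t\hat V_{h+1}^{\pi_t}$, we have $b_h^t\ge\tilde b_h^t:=(l_h-\overline{l}_h^t)+(p_h-\overline{p}_h^t)\hat V_{h+1}^{\pi_t}$; and when the minimum is active, $b_h^t=l_h+p_h\hat V_{h+1}^{\pi_t}-(H-h+1)\le0$ because $l_h\le1$ and $\hat V_{h+1}^{\pi_t}\le H-h$ by construction. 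Hence $|b_h^t|\le|\tilde b_h^t|$ pointwise, so it suffices to bound $\sum_{t\le T'}\sum_h\mathbb{E}_{\pi_t,p}\big[|l_h-\overline{l}_h^t|+|(p_h-\overline{p}_h^t)\hat V_{h+1}^{\pi_t}|\big]$. The reward part is routine: $\overline{l}_h^t$ inherits an $\tilde{\mathcal{O}}(1/\sqrt{N_h^t(s,a)\vee1})$ deviation from the first hypothesis and the bonus definition, so by the third hypothesis and Lemma~\ref{lem:estimation-sum} its cumulative contribution is $\tilde{\mathcal{O}}(\sqrt{SAH^2T}+SAH)$, which is within the target.

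The transition part is the crux, and where I expect the real obstacle. I would split $(p_h-\overline{p}_h^t)\hat V_{h+1}^{\pi_t}=(p_h-\overline{p}_h^t)\big(\hat V_{h+1}^{\pi_t}-V_{h+1}^{\pi_t}(\cdot;l,p)\big)+(p_h-\overline{p}_h^t)V_{h+1}^{\pi_t}(\cdot;l,p)$. For the first summand I would bound it by $\|p_h-\overline{p}_h^t\|_1\,\|\hat V_{h+1}^{\pi_t}-V_{h+1}^{\pi_t}\|_\infty$; the second hypothesis, summed over the at most $\mathcal{N}$ nonzero entries with Cauchy--Schwarz, gives $\|p_h-\overline{p}_h^t\|_1\lesssim\sqrt{\mathcal{N}/(N_h^t\vee1)}+\mathcal{N}/(N_h^t\vee1)$, while $\|\hat V_{h+1}^{\pi_t}-V_{h+1}^{\pi_t}\|_\infty$ is itself peeled off by backward induction in $h$ as a sum of errors of the same type, so the cross products are lower order; summing over $t$ via Lemma~\ref{lem:estimation-sum} yields the $\sqrt{S\mathcal{N}H^4T}$ leading term together with the $(\sqrt{\mathcal{N}}+H)H^2SA$ burn-in term. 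For the second summand, $V_{h+1}^{\pi_t}(\cdot;l,p)$ does not depend on the estimated model (it is measurable w.r.t.\ the data before episode $t$, given $\pi_t$), so a Bernstein-type bound gives $|(p_h-\overline{p}_h^t)V_{h+1}^{\pi_t}|\lesssim\sqrt{\mathrm{Var}_{p_h(\cdot\mid s,a)}(V_{h+1}^{\pi_t})/(N_h^t\vee1)}$ up to lower-order $1/N$ terms, and Cauchy--Schwarz over $(t,h,s,a)$ combined with the law of total variance $\sum_h\mathbb{E}_{\pi_t,p}[\mathrm{Var}_{p_h}(V_{h+1}^{\pi_t})]\le H^2$ keeps this within the same order. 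The genuinely delicate steps are exactly here: replacing $S$ by $\mathcal{N}$ by charging only nonzero transitions, saving a factor $\sqrt{H}$ via the total-variance identity instead of the crude bound $\|\hat V_{h+1}^{\pi_t}\|_\infty\le H$, and converting the random visitation counts $N_h^t$ into deterministic sums through the third hypothesis; the remainder is bookkeeping. Adding the reward and transition contributions and summing over $t\le T'$ then gives the claimed $\tilde{\mathcal{O}}(\sqrt{S\mathcal{N}H^4T}+(\sqrt{\mathcal{N}}+H)H^2SA)$.
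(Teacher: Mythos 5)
The paper does not actually prove this lemma: it is imported verbatim from Efroni et al.\ \cite{efroni2020exploration} in the ``Preparation Lemmas'' appendix, so there is no in-paper argument to compare against. Your reconstruction follows the architecture of the cited proof: the value-difference identity (Lemma~\ref{lem:value difference}) with $\pi'=\pi=\pi_t$ to reduce everything to on-trajectory Bellman residuals, the observation that truncation only helps (your pointwise bound $|b_h^t|\le|\tilde b_h^t|$ is a clean way to put it), the split of the transition residual into a cross term and a main term, Cauchy--Schwarz over the at most $\mathcal{N}$ nonzero next states, and the conversion of random visitation counts into occupancy sums via the third hypothesis and Lemma~\ref{lem:estimation-sum}. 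On that level the route is the intended one.

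One step would not survive scrutiny as written. For the main transition term you invoke ``a Bernstein-type bound'' giving $|(p_h-\overline{p}_h^t)V_{h+1}^{\pi_t}|\lesssim\sqrt{\mathrm{Var}_{p_h(\cdot\mid s,a)}(V_{h+1}^{\pi_t})/(N_h^t\vee 1)}$ on the grounds that $V_{h+1}^{\pi_t}(\cdot;l,p)$ is measurable with respect to the pre-episode-$t$ data. That measurability is true, but $\overline{p}_h^t$ is measurable with respect to the \emph{same} data, so $V_{h+1}^{\pi_t}$ and the error $p_h-\overline{p}_h^t$ are not independent, and you cannot apply a scalar concentration inequality to their inner product as if $V_{h+1}^{\pi_t}$ were a fixed test function. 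What hypothesis~2 actually licenses is the entrywise bound followed by Cauchy--Schwarz over the nonzero entries, which yields $\sqrt{\mathcal{N}\,\mathbb{E}_{s'\sim p_h}[(V_{h+1}^{\pi_t}(s'))^2]/(N_h^t\vee 1)}$ up to lower-order terms --- a second moment, not a variance --- so the law-of-total-variance saving of $\sqrt{H}$ is not available by the route you describe. Fortunately the stated bound carries $H^4$ inside the square root, so the crude estimate $\mathbb{E}_{s'\sim p_h}[(V_{h+1}^{\pi_t})^2]\le H^2$ already suffices and the variance refinement is unnecessary; simply delete that step. A smaller point: the $(\sqrt{\mathcal{N}}+H)H^2SA$ term arises precisely from closing the backward recursion on $\|\hat V_{h+1}^{\pi_t}-V_{h+1}^{\pi_t}(\cdot;l,p)\|_\infty$ inside the cross term, which is the part you defer to ``bookkeeping''; it is the most delicate bookkeeping in the cited proof and deserves to be written out rather than waved at.
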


\begin{lemma}[Update rule recursion bound primal-dual \cite{efroni2020exploration}]\label{lem:update rule} 
\label{lem:primal-dual update}
    For any $\lambda \in \{ \lambda \in \mathbb{R}^m : 0 \leq \lambda \leq \rho \mathbf{1} \}$ and $T' \in [T]$,
\[
\sum_{t=1}^{T'} (-\tilde{y}_t^{\top} \lambda_t) + \sum_{t=1}^{N} \tilde{y}_t^{\top} \lambda \leq \frac{\eta_\lambda}{2} \|\lambda_1 - \lambda\|_2^2 + \frac{1}{2 \eta_\lambda} \sum_{t=1}^{T'} \|\tilde{y}_t\|^2.
\]
\end{lemma}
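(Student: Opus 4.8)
The plan is to recognize Lemma~\ref{lem:update rule} as the textbook regret guarantee for projected online gradient ascent run on the linear per-episode dual objective $\lambda \mapsto \lambda^{\top}\tilde{y}_t$, and to reproduce the standard telescoping proof. First I would unpack the dual update of Algorithm~\ref{alg:spot}: since $\tilde{y}_t = \alpha_{\diamond}^t - \hat{V}_{\overline{g}}^{\pi_t}(s_1)$, Line~8 reads $\lambda_{t+1} = \Pi_{\mathcal{C}}\bigl(\lambda_t + \tfrac{1}{\eta_\lambda}\tilde{y}_t\bigr)$, where $\mathcal{C} = \{\lambda \in \mathbb{R}^m : 0 \le \lambda \le \rho\mathbf{1}\}$ is nonempty, closed, and convex, and $\lambda_1 = 0 \in \mathcal{C}$. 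Because the per-round objective is linear, no convexity slack arises and the only structural fact needed is that the Euclidean projection onto $\mathcal{C}$ is nonexpansive toward any point of $\mathcal{C}$.

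Next I would establish the single-step inequality. Fix any comparator $\lambda \in \mathcal{C}$, so that $\Pi_{\mathcal{C}}(\lambda) = \lambda$. Nonexpansiveness of $\Pi_{\mathcal{C}}$ gives
\[
\|\lambda_{t+1}-\lambda\|_2^2 \le \bigl\|\lambda_t + \tfrac{1}{\eta_\lambda}\tilde{y}_t - \lambda\bigr\|_2^2 = \|\lambda_t - \lambda\|_2^2 + \tfrac{2}{\eta_\lambda}\tilde{y}_t^{\top}(\lambda_t - \lambda) + \tfrac{1}{\eta_\lambda^2}\|\tilde{y}_t\|_2^2,
\]
which rearranges to $\tilde{y}_t^{\top}(\lambda - \lambda_t) \le \tfrac{\eta_\lambda}{2}\bigl(\|\lambda_t - \lambda\|_2^2 - \|\lambda_{t+1}-\lambda\|_2^2\bigr) + \tfrac{1}{2\eta_\lambda}\|\tilde{y}_t\|_2^2$. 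Summing over $t = 1,\dots,T'$, the squared-distance differences telescope and the nonnegative final term $\tfrac{\eta_\lambda}{2}\|\lambda_{T'+1}-\lambda\|_2^2$ is discarded, yielding
\[
\sum_{t=1}^{T'}\tilde{y}_t^{\top}(\lambda - \lambda_t) \le \tfrac{\eta_\lambda}{2}\|\lambda_1 - \lambda\|_2^2 + \tfrac{1}{2\eta_\lambda}\sum_{t=1}^{T'}\|\tilde{y}_t\|_2^2 .
\]
Finally, expanding $\tilde{y}_t^{\top}(\lambda-\lambda_t) = -\tilde{y}_t^{\top}\lambda_t + \tilde{y}_t^{\top}\lambda$ turns the left-hand side into $\sum_{t=1}^{T'}(-\tilde{y}_t^{\top}\lambda_t) + \sum_{t=1}^{T'}\tilde{y}_t^{\top}\lambda$, which is exactly the claimed quantity (the upper limit $N$ in the displayed statement should read $T'$).

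I do not anticipate any genuine obstacle here: the argument is a few lines and holds for every prefix length $T' \in [T]$, since the telescoping never invokes the full horizon. The only points requiring care are reading the update direction and the step size $1/\eta_\lambda$ off Line~8 correctly, as a sign slip would invert the bound, and checking that the comparator $\lambda$ and every iterate $\lambda_t$ lie in $\mathcal{C}$ so that the projection inequality applies; the latter is immediate because each $\lambda_{t+1}$ is an explicit projection onto $\mathcal{C}$ and $\lambda_1 = 0 \in \mathcal{C}$. Since the lemma is imported verbatim from Efroni et al.~\cite{efroni2020exploration}, a citation also suffices, but the self-contained derivation above is short enough to include for completeness.
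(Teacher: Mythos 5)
Your derivation is correct. The paper itself gives no proof of this lemma --- it is imported verbatim from Efroni et al.\ \cite{efroni2020exploration} in the preparation-lemmas appendix --- so there is nothing to compare against; your self-contained argument is the standard projected-online-gradient-ascent telescoping proof that underlies the cited result, and you correctly read the step size $1/\eta_\lambda$ off Line~8, correctly use nonexpansiveness of $\Pi_{\mathcal{C}}$ toward the comparator $\lambda \in \mathcal{C}$, and correctly identify the upper limit $N$ in the statement as a typo for $T'$.
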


\begin{lemma}[\cite{efroni2020exploration}]\label{lem:F.2}
    Let $\lambda^{\star}$ be an optimal solution of the dual problem 
$$q_{\text{opt}} = \max_{\lambda \in \mathbb{R}^m_+, \mu \in \mathbb{R}^p} \left\{ q(\lambda, \mu) : (\lambda, \mu) \in \text{dom}(-q) \right\},$$
where $\text{dom}(-q) = \left\{(\lambda, \mu) \in \mathbb{R}^m_+ \times \mathbb{R}^p : q(\lambda, \mu) > -\infty \right\}$. And assume that $2\|\lambda^{\star}\|_1 \leq \rho$. Let $\tilde{\mathbf{x}}$ satisfy $\mathbf{A}\tilde{\mathbf{x}} + \mathbf{b} = 0$ and
\[
f(\tilde{\mathbf{x}}) - f_{\text{opt}} + \rho \|[y(\tilde{\mathbf{x}})]_+\|_\infty \leq \delta,
\]
\textit{then}
\[
\|[y(\tilde{\mathbf{x}})]_+\|_\infty \leq \frac{\delta}{\rho}.
\]
\end{lemma}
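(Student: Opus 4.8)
The plan is to read the bound off the strong‑duality identity, using only nonnegativity and $\ell_1$‑boundedness of the optimal dual multiplier; the argument is short. First I would invoke strong duality for problem~\eqref{eq:convex-1} --- which holds under Assumption~\ref{assumption:convex} by Lemma~\ref{lem:convex}, with the affine equalities $\mathbf{A}x+\mathbf{b}=0$ absorbed into the constraint system --- so that $f_{\text{opt}}=q_{\text{opt}}=q(\lambda^{\star},\mu^{\star})$ for an optimal dual pair $(\lambda^{\star},\mu^{\star})$ with $\lambda^{\star}\ge 0$. Since $q(\lambda^{\star},\mu^{\star})=\inf_{x\in X}\bigl[f(x)+(\lambda^{\star})^{\top}y(x)+(\mu^{\star})^{\top}(\mathbf{A}x+\mathbf{b})\bigr]$, I would evaluate the infimand at the particular point $x=\tilde{\mathbf{x}}$ and use $\mathbf{A}\tilde{\mathbf{x}}+\mathbf{b}=0$ to annihilate the $\mu^{\star}$‑term, obtaining
\[
f_{\text{opt}} \;\le\; f(\tilde{\mathbf{x}}) + (\lambda^{\star})^{\top} y(\tilde{\mathbf{x}}).
\]

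The second step controls the cross term. Coordinatewise $\lambda_i^{\star}\,y_i(\tilde{\mathbf{x}})\le \lambda_i^{\star}\,[y_i(\tilde{\mathbf{x}})]_+$ because $\lambda_i^{\star}\ge 0$, so summing over $i\in[m]$ gives $(\lambda^{\star})^{\top}y(\tilde{\mathbf{x}})\le\|\lambda^{\star}\|_1\,\|[y(\tilde{\mathbf{x}})]_+\|_\infty$. Plugging this into the previous display and invoking the hypothesis $2\|\lambda^{\star}\|_1\le\rho$, I get
\[
f(\tilde{\mathbf{x}})-f_{\text{opt}}\;\ge\;-\|\lambda^{\star}\|_1\,\|[y(\tilde{\mathbf{x}})]_+\|_\infty\;\ge\;-\frac{\rho}{2}\,\|[y(\tilde{\mathbf{x}})]_+\|_\infty.
\]

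Finally, substituting this lower bound into the assumed inequality $f(\tilde{\mathbf{x}})-f_{\text{opt}}+\rho\,\|[y(\tilde{\mathbf{x}})]_+\|_\infty\le\delta$ and rearranging leaves $\frac{\rho}{2}\,\|[y(\tilde{\mathbf{x}})]_+\|_\infty\le\delta$, hence $\|[y(\tilde{\mathbf{x}})]_+\|_\infty=\mathcal{O}(\delta/\rho)$, which is the asserted bound (the only stray constant comes from the crude positive‑part relaxation $(\lambda^{\star})^{\top}y\le\|\lambda^{\star}\|_1\|[y]_+\|_\infty$ and is irrelevant for the $\tilde{\mathcal{O}}(\sqrt{T})$ regret and violation guarantees in which this lemma is used). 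I do not anticipate a genuine obstacle here: the whole proof is an elementary manipulation of strong duality. The points requiring care are the sign bookkeeping --- that $\lambda^{\star}\ge 0$ is precisely what licenses $\lambda_i^{\star}y_i(\tilde{\mathbf{x}})\le\lambda_i^{\star}[y_i(\tilde{\mathbf{x}})]_+$, and that the equality term genuinely vanishes at $\tilde{\mathbf{x}}$ so no $\mu^{\star}$‑dependent residue survives --- and, when the lemma is applied downstream, checking in context that the optimal multiplier really does satisfy $2\|\lambda^{\star}\|_1\le\rho$, which is where the Slater gap $\rho$ (Assumption~\ref{assumption:convex}) enters.
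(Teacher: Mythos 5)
Your argument is the standard one and it is correct; the paper itself gives no proof of this lemma (it is imported verbatim from \cite{efroni2020exploration}), and your chain --- strong duality, evaluating the dual infimand at $\tilde{\mathbf{x}}$ so the $\mu^{\star}$-term vanishes, then $(\lambda^{\star})^{\top}y(\tilde{\mathbf{x}})\le\|\lambda^{\star}\|_1\|[y(\tilde{\mathbf{x}})]_+\|_\infty$ --- is exactly the proof in that reference. One point worth flagging: what this argument actually delivers is $\delta\ge(\rho-\|\lambda^{\star}\|_1)\|[y(\tilde{\mathbf{x}})]_+\|_\infty\ge\tfrac{\rho}{2}\|[y(\tilde{\mathbf{x}})]_+\|_\infty$, i.e.\ $\|[y(\tilde{\mathbf{x}})]_+\|_\infty\le 2\delta/\rho$, not the $\delta/\rho$ stated here; the original lemma in \cite{efroni2020exploration} indeed reads $2\delta/\rho$, so the statement above has dropped a factor of $2$, which is immaterial for the $\tilde{\mathcal{O}}(\sqrt{T})$ guarantees. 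Also, the stray factor of $2$ does not come from the positive-part relaxation as you suggest, but from the hypothesis $2\|\lambda^{\star}\|_1\le\rho$ only yielding $\rho-\|\lambda^{\star}\|_1\ge\rho/2$; the relaxation $(\lambda^{\star})^{\top}y\le\|\lambda^{\star}\|_1\|[y]_+\|_\infty$ is lossless in the worst case.
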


\begin{lemma}[\cite{efroni2020exploration}]\label{Lem:F.3}
    Let $\bar{x} \in X$ be a point satisfying $\mathbf{y}(\bar{x}) < 0$ and $\mathbf{A}\bar{x} + \mathbf{b} = 0$, and $\lambda^{\star}$ be an optimal dual solution. Then,
\[
\|\lambda^{\star}\|_1 \leq \frac{f(\bar{x}) - M}{\min_{j=1,\dots,m} -y_j(\bar{x})}.
\]
\end{lemma}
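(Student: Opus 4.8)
The plan is to run the textbook Slater-point argument that controls the size of an optimal dual multiplier. First I would invoke strong duality for the convex program \eqref{eq:convex-1} augmented with the affine equality constraint $\mathbf{A}x+\mathbf{b}=0$: the hypothesis $\mathbf{y}(\bar x)<0$ (with $\bar x$ also satisfying $\mathbf{A}\bar x+\mathbf{b}=0$) is precisely Slater's condition, so by Lemma \ref{lem:convex} the optimal dual value equals the primal optimum and a dual optimum with nonnegative multipliers is attained; in particular $M$ (the optimal primal value here) satisfies $M=q^\star$, and $\lambda^\star\ge 0$ together with some $\mu^\star$ maximizes the dual. I would cite this rather than reprove strong duality.

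Next I would evaluate the dual function at the optimal multipliers and at the Slater point $\bar x$. By definition of the dual as an infimum over $x\in X$,
\[
M=q(\lambda^\star,\mu^\star)=\min_{x\in X}\Big[f(x)+(\lambda^\star)^\top y(x)+(\mu^\star)^\top(\mathbf{A}x+\mathbf{b})\Big]\le f(\bar x)+(\lambda^\star)^\top y(\bar x),
\]
where the key point is that $\mathbf{A}\bar x+\mathbf{b}=0$ annihilates the equality term, so $\mu^\star$ drops out entirely and no bound on $\|\mu^\star\|$ is ever needed. It then remains to lower-bound $(\lambda^\star)^\top y(\bar x)$ in terms of $\|\lambda^\star\|_1$: since each $\lambda_j^\star\ge 0$ and $-y_j(\bar x)\ge \min_k\{-y_k(\bar x)\}>0$, we get $\lambda_j^\star y_j(\bar x)=-\lambda_j^\star(-y_j(\bar x))\le -\lambda_j^\star\min_k\{-y_k(\bar x)\}$; summing over $j$ and using $\|\lambda^\star\|_1=\sum_j\lambda_j^\star$ (as $\lambda^\star\ge 0$) gives $(\lambda^\star)^\top y(\bar x)\le -\|\lambda^\star\|_1\min_k\{-y_k(\bar x)\}$. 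Substituting into the display and rearranging yields $\|\lambda^\star\|_1\le (f(\bar x)-M)/\min_j\{-y_j(\bar x)\}$, which is the claim.

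I do not expect a real obstacle: this is a classical estimate, and the only points requiring attention are (i) checking that Slater's condition (not some stronger hypothesis) is what licenses strong duality and attainment of $\lambda^\star$, and (ii) noting that the affine equality constraint contributes nothing when evaluated at $\bar x$, so the final bound involves only $\lambda^\star$. If a self-contained argument is preferred, one can bypass strong duality altogether and start from weak duality plus optimality of $(\lambda^\star,\mu^\star)$, namely $M\ge\min_{x\in X}\mathcal{L}(x,\lambda^\star,\mu^\star)$, after which the same two elementary steps apply verbatim.
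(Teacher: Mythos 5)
The paper does not prove this lemma; it is imported verbatim from Efroni et al.\ (and ultimately from Beck's convex-optimisation text), so there is no in-paper argument to compare against. Your proof is the standard Slater-point bound on the optimal multipliers and it is correct: Lemma \ref{lem:convex} gives $M=q^{\star}=q(\lambda^{\star},\mu^{\star})$, evaluating the inner minimum at $\bar{x}$ annihilates the equality term via $\mathbf{A}\bar{x}+\mathbf{b}=0$, and the elementary estimate $(\lambda^{\star})^{\top}y(\bar{x})\le -\|\lambda^{\star}\|_{1}\min_{j}\{-y_{j}(\bar{x})\}$ (valid since $\lambda^{\star}\ge 0$ and $y(\bar{x})<0$) rearranges to the claim.

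One caveat on your closing aside: the proposed "bypass" starting from $M\ge\min_{x\in X}\mathcal{L}(x,\lambda^{\star},\mu^{\star})$ points the wrong way. That inequality is weak duality, and it cannot be chained with $\min_{x\in X}\mathcal{L}(x,\lambda^{\star},\mu^{\star})\le f(\bar{x})+(\lambda^{\star})^{\top}y(\bar{x})$ to yield the needed bound $M\le f(\bar{x})+(\lambda^{\star})^{\top}y(\bar{x})$, since both inequalities sit on the same side of $\min_{x}\mathcal{L}$. What the argument genuinely requires is $M\le q(\lambda^{\star},\mu^{\star})$; when $M$ is the primal optimum this is precisely the nontrivial direction of strong duality (alternatively, one can simply read $M$ as any lower bound on the optimal dual value, which is how the source states it, and then no duality theorem is needed at all). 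This does not affect your main derivation, which stands as written.
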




\end{document}